\documentclass[final,3p,12pt]{elsarticle}

\usepackage{amssymb}
\usepackage{amsmath}

\usepackage{etoc}
\usepackage{booktabs}
\usepackage{threeparttable}
\usepackage{graphicx}
\usepackage{subcaption}

\usepackage{amsthm}
\newtheorem{theorem}{Theorem}
\newtheorem{corollary}{Corollary}

\theoremstyle{remark}
\newtheorem{remark}{Remark}

\theoremstyle{definition}
\newtheorem{definition}{Definition}
\newtheorem{lemma}[theorem]{Lemma}

\usepackage{xcolor}
\usepackage{amsmath,amssymb,bm}   
\usepackage[ruled,vlined,linesnumbered]{algorithm2e}
\usepackage{url}

\journal{Pre-print (under review)}

\newcommand{\boldX}{\boldsymbol{X}}

\DeclareMathOperator*{\argmin}{\arg\!\min}

\newcommand{\widesim}[2][1.5]{\mathrel{\overset{#2}{\scalebox{#1}[1]{$\sim$}}}}
\newcommand{\iid}{\widesim[3]{i.i.d.}}

\begin{document}

\begin{frontmatter}



\title{Federated Ensemble Learning with Progressive
\\
Model Personalization} 

\author{Ala Emrani}
\ead{ala.emrani94@sharif.edu}

\author{Amir Najafi}
\ead{amir.najafi@sharif.edu}

\author{Abolfazl Motahari}
\ead{motahari@sharif.edu}


\affiliation{organization={Department of Computer Engineering,\\
Sharif University of Technology},
city={Tehran},
country={Iran}}

\begin{abstract}
Federated Learning provides a privacy-preserving paradigm for distributed learning, but suffers from statistical heterogeneity across clients. Personalized Federated Learning (PFL) mitigates this issue by considering client-specific models. A widely adopted approach in PFL decomposes neural networks into a shared feature extractor and client-specific heads. While effective, this design induces a fundamental tradeoff: deep or expressive shared components hinder personalization, whereas large local heads exacerbate overfitting under limited per-client data. Most existing methods rely on rigid, shallow heads, and therefore fail to navigate this tradeoff in a principled manner. In this work, we propose a boosting-inspired framework that enables a smooth control of this tradeoff. Instead of training a single personalized model, we construct an ensemble of $T$ models for each client. Across boosting iterations, the depth of the personalized component are progressively increased, while its effective complexity is systematically controlled via low-rank factorization or width shrinkage. This design simultaneously limits overfitting and substantially reduces per-client bias by allowing increasingly expressive personalization. We provide theoretical analysis that establishes generalization bounds with favorable dependence on the average local sample size and the total number of clients. Specifically, we prove that the complexity of the shared layers is effectively suppressed, while the dependence on the boosting horizon $T$ is controlled through parameter reduction. Notably, we provide a novel nonlinear generalization guarantee for decoupled PFL models. Extensive experiments on benchmark and real-world datasets (e.g., EMNIST, CIFAR-10/100, and Sent140) demonstrate that the proposed framework consistently outperforms state-of-the-art PFL methods under heterogeneous data distributions.\footnote{The codes are available at \url{https://anonymous.4open.science/r/PPFE-6E7C}.}
\end{abstract}



\begin{keyword}
Personalized Federated Learning, Boosting, Learning Theory
\end{keyword}

\end{frontmatter}


\section{Introduction}
\label{sec:intro}

Federated learning is a distributed paradigm that enables a collection of clients to collaboratively train machine learning models using locally stored data, without direct data sharing, thereby preserving privacy. In its original formulation, McMahan et al.\ \cite{mcmahan2017communication} proposed learning a single global model shared among all clients. Although effective under the assumption of independent and identically distributed (IID) data, the performance of this approach deteriorates substantially in the presence of distributional heterogeneity across clients. In real-world deployments, clients typically collect data under diverse conditions, from heterogeneous populations, and with varying feature--label relationships. Such non-IID characteristics render a single global model suboptimal for many clients. To alleviate the adverse effects of data heterogeneity during training and inference, a substantial body of work has proposed algorithmic and optimization-based remedies \cite{karimireddy2020scaffold, li2020fedprox, wang2020tackling, li2021model, wu2024fedel}. Nevertheless, the inherent limited expressiveness of a single global model in highly heterogeneous environments has motivated the development of the personalized federated learning (PFL) framework. PFL allows clients to jointly exploit shared global knowledge across distributed datasets while simultaneously learning customized models that are better aligned with local data distributions \cite{arivazhagan2019federated, t2020personalized, fallah2020personalized}.

The goal of PFL is to strike a balance between global knowledge sharing and local adaptation. A common architectural strategy, motivated by centralized multi-task learning, involves decoupling the neural network into two components: a \emph{shared~body} that serves as a representation extractor, and a \emph{personalized~head} tailored to each client’s task \cite{Oh2021FedBABUTE, zheng2025personalized}. The body transforms input data into an embedding space, under the assumption that a lightweight head model can effectively complete the task. Within this framework, clients collaboratively learn a shared body while independently optimizing their task-specific heads. From a systems perspective, this decoupling enables scalable personalization while maintaining a shared representation across heterogeneous clients. This approach, exemplified by methods such as FedRep \cite{collins2021exploiting}, has demonstrated considerable promise. 

However, existing theoretical guarantees for such methods are largely confined to simplified linear models, leaving their behavior in practical nonlinear and deep learning regimes poorly understood. Moreover, much of the prevailing intuition is derived from centralized experiments and does not fully capture the operational constraints of heterogeneous federated systems. In federated environments, additional flexibility is required to allow clients to learn representations that better align with their local data. Current approaches typically rely on a static network decomposition, e.g., personalizing only the final classification layer without adapting to differing degrees of heterogeneity across clients or across network layers. Conversely, aggressively expanding the personalized portion of the model can substantially increase the risk of overfitting. A principled mechanism for navigating this trade-off remains absent from the literature. Furthermore, the training process is generally agnostic to errors made in earlier model iterations, limiting its ability to correct persistent residual errors, a limitation that is particularly severe in non-IID federated settings.

\subsection{Our Contribution}

We introduce a novel boosting-inspired framework for personalized federated learning, called \emph{Progressive Personalized Federated Ensembles} ({PPFE}), which addresses both aforementioned limitations. Our method trains an ensemble of models over $T$ stages, where each stage progressively increases the degree of personalization by shifting additional layers from the global encoder to the client-specific head. To mitigate overfitting, we constrain the complexity of the personalized components using either width shrinkage or low-rank approximations. Moreover, each model is trained sequentially in a boosting-like manner: residual errors from previous stages are reweighted and explicitly targeted by subsequent models. This results in a powerful per-client ensemble that integrates representations from increasingly personalized submodels. Owing to knowledge transfer across stages and progressive task simplification, later stages converge more rapidly. The PPFE method enjoys several advantages: (i) it adaptively tailors model complexity to local data while preserving generalizable global representations; (ii) it sequentially emphasizes hard-to-fit examples through sample reweighting; and (iii) it achieves state-of-the-art performance on multiple benchmark datasets spanning both vision and NLP tasks in heterogeneous federated learning. The proposed approach is simple to implement, incurs (at worst) negligible additional computational/communication overhead compared to standard methods, and is fully compatible with existing federated optimization routines.

We also establish a series of theoretical guarantees, which subsume several prior methods as special cases, most notably FedRep \cite{collins2021exploiting}. Consider a federated network with $K$ clients, where client $k$ holds $n_k$ samples drawn from a private data-generating distribution $P_k$. The distributions $\{P_k\}_{k=1}^K$ may be highly heterogeneous, corresponding to the non-IID setting. Clients are connected to a central server and collaboratively train models via federated learning. We define the \emph{generalization gap} as the difference between the empirical performance achieved by our proposed algorithm in the sample-limited federated setting and the optimal performance attainable in the idealized regime where all $n_k \to \infty$ and all data are centrally available. For our method (Algorithm~\ref{alg:ppfe} in Section~\ref{sec:proposed}), we prove that, with high probability, the generalization gap is bounded as
\begin{align}
\leq 
\left(
\sum_{t=1}^{T}
\frac{\mathsf{cap}(\mathrm{Common}^{(t)})}{K\bar{n}}\right)^{1/2}
+
\left(
\sum_{t=1}^{T}
\frac{\mathsf{cap}(\mathrm{Personal}^{(t)})}{\bar{n}}\right)^{1/2}
+
\left(\frac{T}{\bar{n}}\right)^{1/2},
\label{eq:intro:genGapBound}
\end{align}
where $\bar{n}$ denotes the average of $\{n_k\}_{k=1}^K$, i.e., the average local sample size, and thus $K\bar{n}$ corresponds to the total effective sample size across all clients. The terms $\mathrm{Common}^{(t)}$ and $\mathrm{Personal}^{(t)}$ denote the shared and personalized components of the model at boosting stage $t$, respectively. The \emph{capacity} function $\mathsf{cap}(\cdot)$ could be any generic learning-theoretic complexity measure, such as VC dimension or metric entropy (e.g., logarithmic covering numbers). This bound shows that the potentially large complexity of the shared model components is mitigated by the total effective sample size across clients, while the complexity growth induced by boosting over $T$ stages remains controlled. Moreover, the contribution of the personalized components is explicitly quantified and can be regulated through complexity control. A detailed discussion of these effects, along with specialization of the bound to deep neural networks, is provided in Section \ref{sec:theory}, in particular equation \eqref{eq:theory:GenInformalBound}, Theorem \ref{thm:main:informal} and Corollary \ref{corl:specialSetting:informal}. Full proofs and derivations appear in \ref{sec:app:theory}.

The remainder of the paper is organized as follows. Related work is reviewed in Section~\ref{sec:literature}, and notation is introduced in Section~\ref{sec:notation}. Section~\ref{sec:proposed} presents the proposed method and Algorithm~\ref{alg:ppfe} (PPFE), together with discussions of convergence, computational cost, and communication complexity. Theoretical results are developed in Section~\ref{sec:theory}. Experimental evaluations on synthetic and real-world datasets, including vision and NLP tasks, are reported in Section~\ref{sec:experiments}. Conclusions are drawn in Section~\ref{sec:conclusion}.

\section{Related Works}
\label{sec:literature}

\subsection{Data Heterogeneity in Federated Learning }
Federated learning was introduced with the FedAvg algorithm \cite{mcmahan2017communication}, which aggregates locally trained client models into a single global model. It assumes IID client data, but its performance degrades significantly under non-IID conditions. Subsequent work has focused on improving performance under data heterogeneity. FedProx \cite{li2020fedprox} added a proximal term to the training objective function in order to prevent the local model from moving away too far from the initial aggregated model. SCAFFOLD \cite{karimireddy2020scaffold} proposed a control variate to correct the drift stemming from local updates. FedNova \cite{wang2020tackling} performed gradient‐normalized aggregation, which eliminates objective drift from unequal local training. MOON \cite{li2021model} augmented each client’s local update with a model-level contrastive loss to improve representation learning. FedDyn \cite{Acar2021FederatedLB} and FedDC \cite{gao2022feddc} used regularization terms to control the global-local parameter gap in addition to correcting the client drift. FedDF \cite{lin2020ensemble} and Fed-ET \cite{Cho2022HeterogeneousEK} replaced weight averaging with knowledge distillation, ensembling clients’ logits on a shared unlabeled dataset to train a global student model. FFGB \cite{shen2022federated} executed multiple restricted functional-gradient steps locally, which fit a weak learner to each client’s residual, and then aggregate these weak learners into a global additive model. FedProc \cite{mu2023fedproc} aggregates class prototypes and applies a prototypical contrastive loss to align local embeddings with their global prototypes, repel other classes, and reduce client drift. Fed-ensemble \cite{shi2023fed} and FedEL \cite{wu2024fedel} address data heterogeneity by constructing ensembles of global models and aggregating their predictions at inference time, which improves robustness when client data distributions differ. TCT \cite{yu2022tct} proposes a multi-step approach: train feature representations, freeze them, and solve the remaining task via bootstrapped neural tangent kernel regression. This results in a convex objective, enabling reliable convergence under non-IID data.

\subsection{Model Personalization in Federated Learning}
In contrast, this paper focuses on PFL, which aims to learn client-specific models to better address data heterogeneity. Under non-IID data distributions, enforcing a single global model is inherently restrictive and often suboptimal. PFL alleviates this limitation by enabling the construction of customized models for each client, thereby simultaneously leveraging shared global knowledge through federation and adapting effectively to users' limited local data via personalization. Popular PFL approaches include fine-tuning, clustering, multi-task learning, meta learning, using regularized loss functions, and decoupling the model into global and local parts. 

Fine-tuning in PFL simply starts by learning a global model, and then fine-tuning it for several epochs locally; see, for example, \cite{cheng2021fine} and \cite{Chen2022OnBG}.
Clustering-based approaches, such as those proposed in \cite{cho2023communication} and \cite{mansour2020three}, group clients with similar data distributions, restrict communication within each cluster, and train a separate model for each group. Multi-task federated learning treats every client as a separate task, learning common features and task-specific tweaks so each client keeps a personalized model while still benefiting from external data. Another personalized FL method, MOCHA \cite{smith2017federated}, jointly learns each client’s model and a task-similarity matrix. In this way, the devices share a structure, yet retain tailored predictors. FedU \cite{dinh2021fedu} links client models through a graph-Laplacian regularizer, enabling knowledge sharing during training while still producing personalized models. Meta-learning methods, such as \cite{fallah2020personalized}, \cite{chen2018federated}, and \cite{acar2021debiasing} in the PFL, aim to train a global meta-model that captured transferrable priors through a Model-Agnostic Meta-Learning (MAML) \cite{finn2017model} approach. This facilitates each client to obtain a personalized model with just a few local gradient steps, achieving quick adaptation under non-IID data.

In other approaches such as \cite{zhangpersonalized}, \cite{deng2020adaptive}, \cite{hanzely2020federated}, and \cite{marfoq2022personalized}, each client incorporates the global shared model with a local model to balance the global knowledge with its own personalization. Regularization-based PFL like pFedMe \cite{t2020personalized} and Ditto \cite{li2021ditto}  kept a shared anchor model on the server and let every client minimize their own loss plus a proximity term, so that each client learns a personalized model that remains close enough to the anchor to exploit cross-client knowledge and control the distance with a proximal coefficient hyper parameter. Beyond regularization-based personalization, recent methods have explored objective-level decoupling to better separate global knowledge transfer from local adaptation. For instance, DKD-pFed \cite{su2025dkd} leverages knowledge distillation together with feature decorrelation to mitigate client interference and enhance personalization under non-IID data.

Centralized learning suggests that early layers (the body) capture general features, while later layers (the head) specialize in task-specific outputs. This successful decoupling paradigm has motivated several adaptations in PFL. FedPer \cite{arivazhagan2019federated} trains both the body and head on each client but aggregates only the body, allowing clients to replace their local body with the global one while keeping a private head. LG-FedAvg \cite{liang2020think} follows a similar idea but reverses the structure: clients share their head during aggregation via FedAvg, while keeping the body personalized. FedRep \cite{collins2021exploiting} decouples training into two phases: first, it trains the head locally; then, it freezes the head and updates only the body, which is subsequently aggregated. FedBABU \cite{Oh2021FedBABUTE} takes the opposite approach: it trains only the body during local updates, keeping the head fixed, and later fine-tunes the head locally after body aggregation. FedPAC \cite{Xu2023PersonalizedFL} extends this by sharing a global feature extractor, aligning local embeddings to global prototypes, and blending peer classifier heads for final prediction. pFedFDA \cite{mclaughlin2024personalized} learns a shared feature extractor together with a global generative classifier,
then personalizes by adapting the global feature distribution to each client’s local feature distribution.

Our method (PPFE) shares the most similarities with the latter type of personalization scheme. Unlike the other decoupling methods, we gradually change tasks to focus more on personal information. Simultaneously, to avoid overfitting, we substitute the head model with a less complex model and increase the share of personalized parameters. Finally, each client has trained several models with different complexity and personalization ratio of the model and use an ensemble of them for final result.

\section{Preliminaries}
\label{sec:notation}

For $n\in \mathbb{N}$, let $[n]$ denote the set $\left\{1,\ldots,n\right\}$. Let $\mathcal{X}$ and $\mathcal{Y}$ be two measurable spaces. We denote by $\mathcal{X}$ the input feature space (usually with $\mathcal{X}\subseteq\mathbb{R}^{d}$ for some ambient dimension $d\in\mathbb{N}$), and by $\mathcal{Y}$ the label space. In a multi-class classification problem with $C\ge 2$ output classes we have $\mathcal{Y}=[C]$, while for regression tasks we have $\mathcal{Y}=\mathbb{R}$. Our framework is agnostic to the choice of $\mathcal{Y}$, and thus works for various types of supervised tasks. Throughout the paper, we use $\widehat{A}$ to denote the \emph{empirical/sample} expectation of a random variable $A$.

Let us consider a non-IID federated learning setup with $K\in\mathbb{N}$ clients, each engaged in a supervised learning task (e.g., classification or regression). Each client $k \in [K]$ holds a local dataset $\mathcal{D}_k = \{ (\boldsymbol{X}_i^{(k)}, y_i^{(k)}) \}_{i=1}^{n_k}$ consisting of $n_k$ samples drawn independently from a local distribution $P_k$ over $\mathcal{X} \times \mathcal{Y}$. We let the distributions $P_1,\ldots,P_K$ to be significantly heterogeneous across clients. It should be noted that no one knows about $P_k$s, while $\mathcal{D}_k$ is known only by client $k$. Each client, through communication with a central server, aims at training a local model $h^*_k:\mathcal{X}\to\mathcal{Y}$, which results into a small value to the true (population) risk
$
\mathcal{L}_k(h^*_k)
\triangleq
\mathbb{E}_{(\boldX,y)\sim P_k}
\left[\ell(y,h^*_k(\boldX))\right],
$
where $\ell:\mathcal{Y}\times\mathcal{Y}\to\mathbb{R}_{\ge0}$ is an arbitrary and prespecified loss function (e.g., cross-entropy loss, square loss, etc.). However, client $k$ can only compute its \emph{empirical} loss based on $\mathcal{D}_k$, which is 
\begin{equation}
\widehat{\mathcal{L}}_k(h)
\triangleq
\frac{1}{n_k}
\sum_{i=1}^{n_k}\ell\bigl(y^{(k)}_i,
h(\boldX^{(k)}_i)
\bigr),
\nonumber
\end{equation}
for any given model $h\in\mathcal{H}$. Minimizing $\widehat{\mathcal{L}}_k$ results into the well-known alternative approach of \emph{Empirical Risk Minimization} (ERM). For the hypothesis class $\mathcal{H}$, we usually consider deep neural networks. Suppose a $L$-layer neural network model with weights denoted by $\boldsymbol{W} = (W_0, W_2, \dots, W_{L-1})$ and $\boldsymbol{b}=(b_1,\ldots,b_L)$, where $W_l$ represents a $N_{l}\times N_{l+1}$ weight matrix and $b_{l+1}$ a $N_{l+1}$-dimensional bias vector. Hence, the total number of parameters is denoted by $D = \sum_{l=0}^{L-1} N_{l}N_{l+1}+N_{l+1}$. In this work, and due to using boosting-inspired strategies, we sometimes use \emph{weighted} variants of empirical loss over the training dataset $\mathcal{D}_k$, i.e.,
\begin{align}
\widehat{\mathcal{L}}_k(h; \boldsymbol{\omega}_k)
\triangleq
\sum_{i=1}^{n_k}
\frac{\omega_{k,i}}{n_k}
\ell\bigl(y_i^{(k)},h(\boldsymbol{X}_i^{(k)})\bigr),
\label{eq:empRiskDef}
\end{align}
where $\boldsymbol{\omega}_k \in \mathbb{R}^{n_k}$ is an optional weight vector satisfying $\omega_{k,i} \ge 0$ for all $i$ and $\sum_{i} \omega_{k,i} = n_k$. When the weight vector $\boldsymbol{\omega}_k$ is omitted from $\mathcal{L}_k$, we mean uniform weights, i.e., $\omega_{k,i} = 1$ for all $i$. We leverage the flexibility of the weight vectors $\boldsymbol{\omega}_k$ in later stages of the method, where our goal is to introduce ensemble mechanisms.

\section{Proposed Method}
\label{sec:proposed}
Standard federated learning algorithms, such as FedAvg \cite{mcmahan2017communication}, aim to learn a single global model by minimizing the average empirical risk across all clients as follows:
\begin{align}
\theta^*_{\mathrm{FedAvg}} = \argmin_{\theta \in \Theta} 
\frac{1}{K}\sum_{k=1}^K 
\widehat{\mathcal{L}}_k(h_{{\theta}}),
\label{eq:fedavg}
\end{align}
where $\Theta\subseteq\mathbb{R}^D$ denotes the space of all parameters,
and $h_{\theta}:\mathcal{X}\to\mathcal{Y}$ is the predictive function (e.g., classifier or regressor) associated with the weights $\theta$. In heterogeneous settings, where client distributions differ significantly, training a single shared model often yields suboptimal performance. 
A widely adopted approach in personalized FL decomposes the neural network $h_{\theta}(\cdot)$ into two components: a \emph{shared representation extractor} $f_\psi : \mathcal{X} \to \mathbb{R}^{d_r}$, parameterized by shared weights $\psi \in \Psi$, and a \emph{client-specific head} $g_{\phi_k} : \mathbb{R}^{d_r} \to \mathcal{Y}$, parameterized by $\phi_k \in \Phi$ for client $k$. The prediction model for client $k$ is then given by the composition $(g_{\phi_k} \circ f_\psi)(\cdot)$.

The shared representation $f_\psi$ maps inputs to a common feature space of dimension $d_r \in \mathbb{N}$ and typically corresponds to the earlier layers of the network, e.g., $(W_1, \dots, W_{L'})$ for some $1 \le L' < L$. The task-specific head $g_{\phi_k}$ maps these representations to outputs and is more sensitive to local distributional shifts, typically corresponding to the remaining layers $(W_{L'+1}, \dots, W_L)$. This decomposition enables collaborative learning of a shared representation while preserving the flexibility of client-level personalization. Here, $\Psi$ and $\Phi$ denote the sets of all possible weights for the shared and personalized components, respectively, which may be further constrained by user-defined regularizations. The mappings $f_\psi$ and $g_{\phi_k}$ denote the functions corresponding to weights $\psi$ and $\phi_k$. We adopt the notations $f$ and $g$ to clearly distinguish between the shared and personalized parts of the network. Thus, the corresponding optimization objective becomes:
\begin{align}
\left(
\widehat{\psi}^*, \widehat{\phi}_1^*, \ldots, \widehat{\phi}_K^* \right)
\leftarrow 
\min_{\psi \in \Psi}~
\frac{1}{K}\sum_{k=1}^K 
\left( \min_{\phi_k \in \Phi}~\widehat{\mathcal{L}}_k(g_{\phi_k}\bigl(f_\psi(\cdot)\bigr) \right).
\end{align}
Similar to FedAvg, this objective is minimized in a federated manner with the participation of all clients. As noted in Section \ref{sec:intro}, this approach has inherent limitations. We propose to extend it using a boosting-inspired framework, which improves upon prior methods in two principal ways:
\noindent
\\[1mm]
\textbf{Progressive Personalization:} Instead of a single model, we train a sequence of $T$ models, indexed by $t = 1, 2, \dots, T$, with progressively increasing levels of personalization. This is achieved by gradually shifting more layers from the shared feature extractor to the client-specific head. In parallel, we restrict the complexity of the personalized layers by restricting the \emph{rank} of weight matrices as $t$ is increased. This constraint helps mitigate overfitting, especially since the head $g_{\phi_k}$ is trained solely on the local dataset of client $k$, in contrast to the feature extractor $f_\psi$, which benefits from multi-client data and exhibits lower variance.
\noindent
\\[1mm]
\textbf{Federated Boosting:} Rather than training all $T$ models independently or in parallel, we adopt a sequential training procedure. Each model in the sequence is trained to correct the residuals of its predecessors, following a boosting-like strategy. This design allows later models to focus on hard-to-fit aspects of the local data that earlier models failed to capture.


\subsection{Progressive Personalization}
\label{subsec-progressive-personalization}

Building on empirical evidence from \cite{yu2022tct}, we observe that \emph{global} information tends to be captured by the early layers of deep neural networks, whereas \emph{local, client-specific} patterns typically emerge in the deeper layers. Consequently, the optimal split between shared and personalized components may vary across clients, depending on sample size and data distribution. A simple baseline such as FedRep \cite{collins2021exploiting} personalizes only the final classification layer, keeping all earlier layers shared. However, in highly heterogeneous settings, the assumption of a universally shared representation across clients often breaks down. This can result in misaligned feature spaces and compel clients to learn brittle, overfit decision boundaries.

\begin{figure}[!t]
\centering
    \includegraphics[trim={18cm 7cm 0 7cm},clip,width=\textwidth]{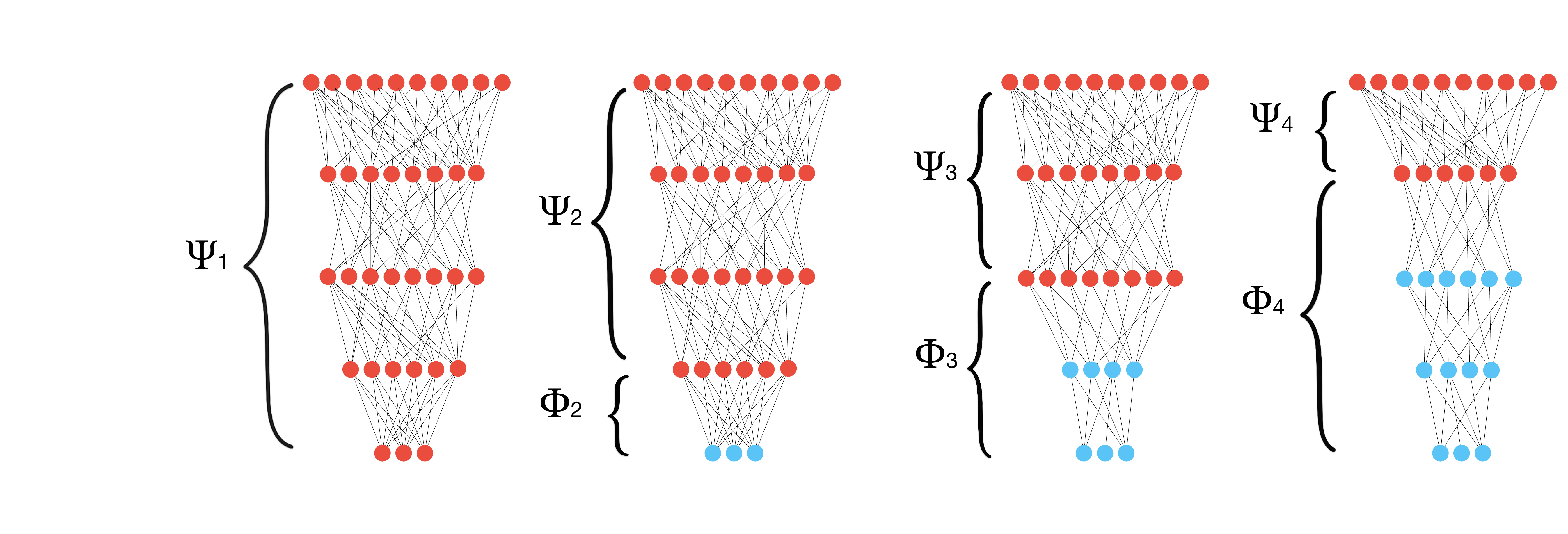}
    \caption{
    Progressive personalization procedure, transitioning from a fully global model to increasingly personalized models across successive stages (for $t=1,2,3$ and $4$). As can be seen, we either shrink the personalized layer widths, or lower their effective parameter size via low-rank decomposition.}
    \label{fig:prog_personal}
\end{figure}

Our proposed scheme generalizes this idea: by progressively decoupling the deeper layers of the network, each client is empowered to adapt its feature extractor to better match its local data distribution, while still benefiting from the globally learned representations in earlier layers. Specifically, we expand the personalized portion of the network over $T$ stages, ranging from $t = 1$ (no personalization; equivalent to FedAvg) to $t = T$ (maximum allowed personalization). This allows each client to gradually increase its modeling capacity in response to data heterogeneity. Due to the limited and highly non-IID data available to each client, allowing overly complex personalized heads may hurt generalization. To avoid local overfitting as more layers are personalized, we control model complexity via structural constraints such as low-rank approximations on the weight matrices of the personalized heads or simply shrinking their width. Formally, suppose that for each client $k \in [K]$, we construct an ensemble model $\{ (\psi^{(t)}, \phi^{(t)}_{k}) \}_{t=1}^{T}$, where $\psi^{(t)}$ denotes the shared (global) parameters at stage $t$, and $\phi^{(t)}_{k}$ denotes the client $k$s specific head. At the initial stage ($t = 1$), we train a fully global model using FedAvg:
\begin{align}
\widehat{\psi}^{*(1)} = \argmin_{\psi \in \Psi_1}
\frac{1}{K}\sum_{k=1}^{K} 
\widehat{\mathcal{L}}_k\bigl(f_\psi(\cdot)\bigr),
\end{align}
where $\Psi_1 = \mathbb{R}^D$ is the space of all model parameters. For $t > 1$, we progressively personalize the final $t-1$ layers of the network. Let $\Psi^{(t)}$ denote the parameter space of the shared layers $(W_0, \ldots, W_{L-t})$, and $\Phi^{(t)}$ denote the parameter space of the personalized tail $(W_{L-t+1}, \ldots, W_{L-1})$, subject to a user-defined low-rank constraint. Then, Figure \ref{fig:prog_personal} shows a depiction of our approach. Also, the objective at stage $t$ becomes:
\begin{align}
\label{eq:vanillaERM}
\bigl(\widehat{\psi}^{*(t)}, \widehat{\phi}_{1:K}^{*(t)}\bigr) 
\leftarrow 
\min_{\psi \in \Psi^{(t)}} ~
\frac{1}{K}\sum_{k=1}^{K} 
\left( \min_{\phi_k \in \Phi^{(t)}}~
\widehat{\mathcal{L}}_k\bigl(
g_{\phi_k}(f_\psi(\cdot)
\bigr) \right).
\end{align}
This ``deeper-but-slimmer'' strategy ensures that each additional personalized layer does not inflate the total model capacity, thus improving generalization. Also, model diversity across the ensemble stems primarily from where personalization occurs (i.e., which layers), rather than from differences in overall size or capacity, an essential feature for boosting, as discussed in the next subsection.

\begin{figure}[!t]
    \centering
    \includegraphics[width=0.8\columnwidth]{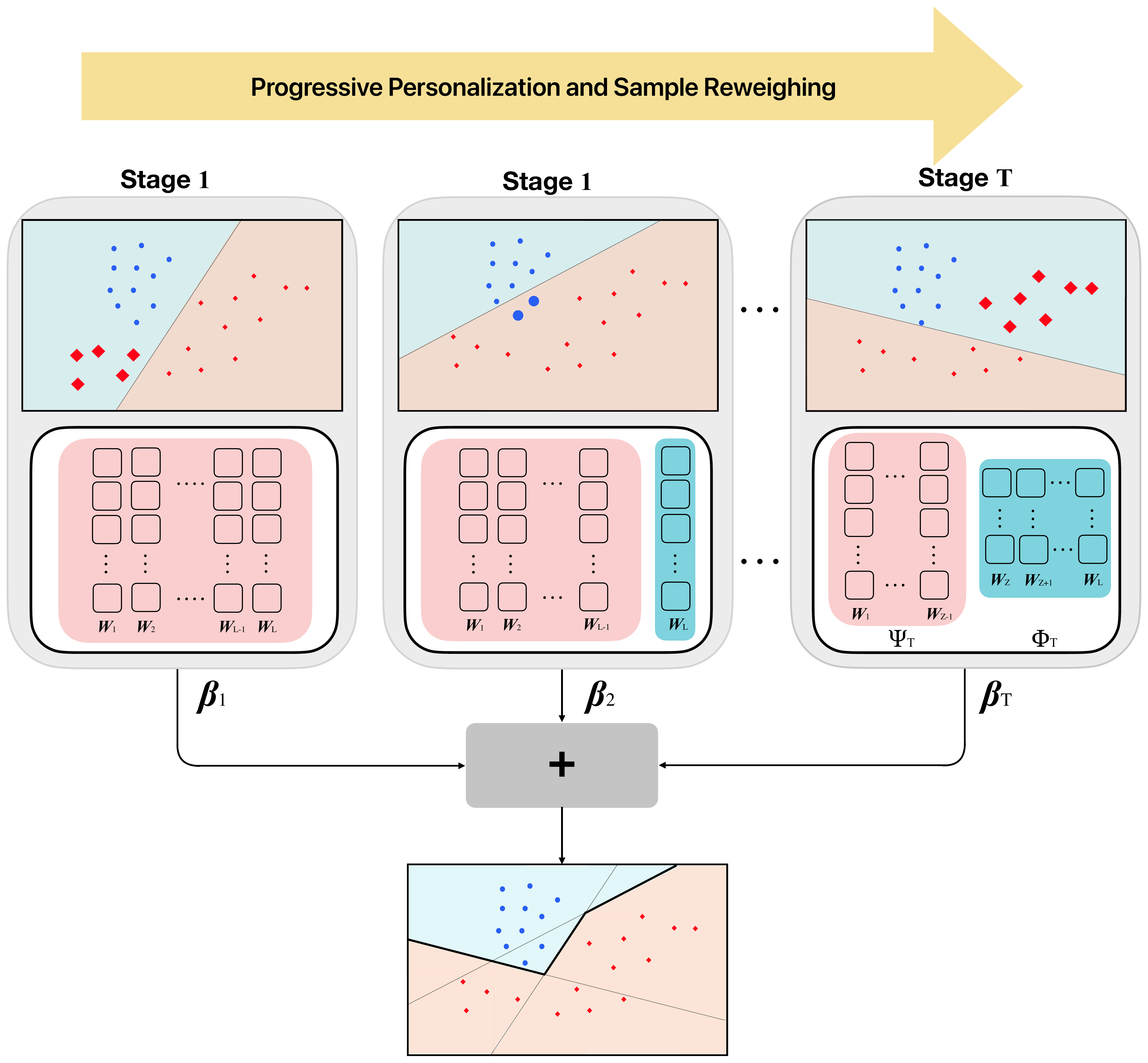}
    \caption{
    Overview of the proposed PPFE method. The number of personalized layers is progressively increased across stages. At each stage, samples are reweighted using feedback from the model of the previous stage, and the final model is an ensemble of the models from all stages.}
    \label{fig:ppfe}
\end{figure}

The adopted low-rank matrix factorization is via Singular Value Decomposition (SVD), following the method of \cite{denton2014exploiting}. Given a fully connected layer with weight matrix $W_l \in \mathbb{R}^{N_l \times N_{l+1}}$, we compute its SVD as $W_l = U \Sigma V^\top$ and retain only the top-$r$ singular components to obtain a low-rank approximation $U_{(:,1:r)} \Sigma_{(1:r,1:r)} V^\top_{(1:r,:)}$, with $r \ll \min(N_l, N_{l+1})$. This yields a compressed model with a parameter count reduced by a factor of
$\frac{N_\ell N_{\ell+1}}{r(N_\ell + N_{\ell+1})}$,
while preserving dense matrix operations and offering a guaranteed low approximation error in Frobenius norm \cite{tai2015convolutional}. An alternative strategy is to directly reduce model complexity by shrinking the widths $N^{(t)}_l$ of the personalized layers at iteration $t$, as illustrated in Figure \ref{fig:prog_personal}. Both approaches lead to comparable theoretical generalization bounds (see Section \ref{sec:theory}) and exhibit similar empirical performance in practice.


\subsection{Federated Boosting}
Training all the $T$ models in parallel often leads to inefficiencies. On the other hand, a sequential training approach provides several benefits:\
(i) Layer-wise initialization using weights from earlier stages mitigates catastrophic forgetting and facilitates more stable optimization. 
(ii) Iterative reweighting of training samples allows later models to focus on correcting residual errors from previous ones. 
(iii) In low-data regimes, freezing too many layers prematurely can hinder convergence, since deep heads trained from scratch often yield poor representations.

\begin{algorithm}[!t]
\caption{\small{Personalized Progressive Federated Ensemble (PPFE)}}
\label{alg:ppfe}
\KwIn{Clients $\{1,\dots,K\}$ with datasets $\{\mathcal{D}_k\}_{k=1}^{K}$, ensemble size $T$, parameter spaces $\{(\Phi^{(t)}, \Psi^{(t)})\}_{t=1}^{T}$, participation ratio $\rho$, rounds per iteration $\tau_0$}
\KwOut{Personalized ensemble models $\{h_k^{\mathrm{ens}}\}_{k=1}^K$}

\SetKwFunction{Reweight}{Reweight}
\SetKwProg{Fn}{Function}{:}{}

\Fn{\Reweight{$f(\cdot), \mathcal{D}, \boldsymbol{\omega}$}}{
    $\varepsilon \gets 
    \dfrac{\sum_i \omega_i\,\ell(y_i,f(\mathbf{X}_i))}{(\max~ \ell(\cdot))\cdot\sum_i \omega_i}$,\quad
    $\beta \gets \frac{1}{2}\log\!\left(\frac{1-\varepsilon}{\varepsilon}\right)$,
    
    \For{$i\gets 1$ \KwTo $\mathrm{dim}(\boldsymbol{\omega})$}{
        $\widehat{\omega}_i \gets \omega_i \cdot\exp\!\left(\beta\,\ell(y_i,f(\mathbf{X}_i))\right)$
    }
    Normalize $\widehat{\boldsymbol{\omega}}$ so that
    $\sum_i \widehat{\omega}_i = \mathrm{dim}(\boldsymbol{\omega})$
    
    \Return $(\beta, \widehat{\boldsymbol{\omega}})$
}

Train $f_{\psi^{(1)}}$ via FedAvg over all clients

\For{$k\gets 1$ \KwTo $K$}{
    $(\beta_k^{(1)}, \boldsymbol{\omega}_k^{(2)}) \gets$
    \Reweight{$f_{\psi^{(1)}}, \mathcal{D}_k, \boldsymbol{\omega}_k^{(1)}=(1,\dots,1)$}
}

\For{$t\gets 2$ \KwTo $T$}{
    Initialize shared components $\widehat{\psi}_{[0]} \gets \psi^{(t-1)}$,
    
    Initialize local heads $\widehat{\phi}_{k,[0]} \gets \phi_k^{(t-1)}$ for all $k\in[K]$.

    \For{$\tau\gets 1$ \KwTo $\tau_0$}{
        Sample client subset $\mathcal{S}_\tau\subseteq[K]$ of size $\lceil \rho K\rceil$
        
        \ForEach{$k\in\mathcal{S}_\tau$}{
            $(\widehat{\phi}_{k,[\tau]}, \widehat{\psi}_{k,[\tau]}) \gets$ LocalTraining$(\widehat{\psi}_{[\tau-1]},\widehat{\phi}_{k,[\tau-1]},\mathcal{D}_k,\boldsymbol{\omega}_k^{(t)})$
            
            Send $\widehat{\psi}_{k,[\tau]}$ to server
        }
        Server aggregates $\widehat{\psi}_{[\tau]} \gets
        \frac{1}{|\mathcal{S}_\tau|}\sum_{k\in\mathcal{S}_\tau}\widehat{\psi}_{k,[\tau]}$, and sends back $\widehat{\psi}_{[\tau]}$
    }

    Set $\psi^{(t)} \gets \widehat{\psi}_{[\tau_0]}$, and set $\phi_k^{(t)} \gets \widehat{\phi}_{k,[\tau_0]}$ for all $k$

    \For{$k\gets 1$ \KwTo $K$}{
        $(\beta_k^{(t)}, \boldsymbol{\omega}_k^{(t+1)}) \gets$
        \Reweight{$\sum_{u=1}^t\beta_k^{(u)}g_{\phi_k^{(u)}}(f_{\psi^{(u)}}(\cdot)),\;\mathcal{D}_k,\;\boldsymbol{\omega}_k^{(t)}$}
    }
}

\For{$k\gets 1$ \KwTo $K$}{
    $h_k^{\mathrm{ens}}(\cdot)\gets\sum_{t=1}^T
    \beta_k^{(t)}g_{\phi_k^{(t)}}(f_{\psi^{(t)}}(\cdot))$
}

\end{algorithm}

Therefore, we leverage optional per-sample weight vectors $\boldsymbol{\omega}_k$ in the weighted empirical loss $\widehat{\mathcal{L}}_k(\cdot; \boldsymbol{\omega}_k)$ (defined in \eqref{eq:empRiskDef}), and replace the vanilla ERM outlined in \eqref{eq:vanillaERM} with a weighted ERM. These weights are updated at each iteration $t$ based on the performance of earlier models. At the initial stage ($t=1$), the global model is trained via FedAvg, with each client assigning uniform weights: $\omega^{(1)}_{k,i} = 1$. At stage $t > 1$, each client $k$ initializes its model with the previous parameters, reweights samples via $\omega^{(t)}_{k,i}$ (for $i\in[n_k]$) based on residual errors, similar to the AdaBoost algorithm \cite{schapire2013explaining}. Specifically, the update rule for weights is:
\begin{align}
\omega^{(t)}_{k,i}
\propto
\omega^{(t-1)}_{k,i}
\cdot
\exp\left[-\gamma
\ell\!\left(
y^{(k)}_i,
g_{\phi^{(t-1)}_{k}}\bigl( 
f_{\psi^{(t-1)}}
\bigl(\boldsymbol{X}^{(k)}_i\bigr)
\bigr)
\right)
\right],
\quad
\forall i\in[n_k],
\end{align} 
where $\gamma > 0$ is an inverse temperature parameter defined later in Algorithm~\ref{alg:ppfe}. The weights are then normalized to satisfy $\sum_{i=1}^{n_k} \omega^{(t)}_{k,i} = n_k$ for all $k$. We also compute ensemble coefficients $\beta^{(t)}_k$ that reflects the predictive accuracy of the model at iteration $t$ for client $k$. After $T$ iterations, we obtain shared backbone models $\psi^{(t)}$ for $t\in[T]$ and personalized heads $\phi^{(t)}_{k}$. The final ensemble for client $k$ is then defined as:
\begin{align}
h^{(\mathrm{ens})}_k(\cdot)
\triangleq
    \sum_{t=1}^{T} 
    \beta^{(t)}_k\!
    \left(
    g_{\phi^{(t)}_{k}} \circ f_{\psi^{(t)}}
    \right)(\cdot),
    \quad \forall k\in[K].
\end{align}

Our proposed algorithm, Progressive Personalized Federated Ensembles (PPFE), is detailed in Algorithm \ref{alg:ppfe} and we also depicted it in the Figure \ref{fig:ppfe}. In each communication round, the server selects a small subset of clients of size $\lceil \rho K\rceil$ (for some $\rho < 1$) and performs federated training among them for $\tau_0$ steps. The average of the aggregated global parameters from this subset is then used to initialize the global model in the next round, while each client’s local parameters are restored from their previous local training state.


\subsection{Convergence, Computational and Communication Complexity}
\label{sec:proposed:complexity}

\textbf{Convergence.}
Our method introduces no additional hard-to-converge components beyond those present in standard baselines such as FedRep \cite{collins2021exploiting}. Specifically, aside from local neural network training via SGD, which is common to essentially all FL and PFL approaches, all algorithm-specific operations in Algorithm \ref{alg:ppfe} (e.g., weight updates and ensemble coefficient computation) have deterministic termination times and incur negligible overhead relative to the cost of training deep neural networks. Moreover, the proposed algorithm is staged: at each stage, the global component is initialized from the solution of the previous stage rather than being retrained from scratch, consistent with prior work. This warm-start strategy empirically accelerates convergence.

\textbf{Train-time and Communication Complexity.}
Although our approach trains $T$ models per client rather than a single model, we ensure a fair comparison by keeping the total training time and communication budget equal to those of non-boosting baselines (see Section \ref{sec:experiments}). Concretely, the time and communication budget allocated to each stage is at most $1/T$ of the total budget used by competing methods. The overall communication budget is distributed across stages such that the total number of communication rounds matches a fixed reference horizon shared by all methods. Consequently, our approach matches the wall-clock training time of prior methods, while potentially reducing communication overhead, since the number of global parameters decreases monotonically across stages.

\textbf{Test-time Complexity.}
At inference, predictions are formed using the collection of stage-wise models, which increases computational cost. In the worst case, this requires $T$ forward passes. In practice, however, the cost is typically lower because later-stage models are lighter and involve fewer global parameters.

\section{Theoretical Guarantees}
\label{sec:theory}

This section is devoted to the analysis of both the \emph{bias} and the \emph{generalization} gap of our proposed method. The generalization gap $\mathsf{Gen}$ is defined as the difference between the minimum empirical training loss attained by the proposed method (Algorithm~\ref{alg:ppfe}) and the minimum achievable population risk in an idealized setting where all data are fully centralized and the sample sizes are infinite. A formal definition is given in \eqref{eq:app:theory:genGapDef} of \ref{sec:app:theory}. For intuition, we may express this quantity informally as
\begin{align}
\mathsf{Gen}
\triangleq
\mathrm{EmpiricalRisk}\!\left(
\mathrm{Algorithm}~\ref{alg:ppfe}
\right)
-
\inf_{\big\{\psi^{(t)},\,\phi^{(t)}_k,\,\beta^{(t)}_{k}\big\}}
\mathbb{E}\!\left[
\mathrm{Risk}\bigl(\psi^{(t)},\phi^{(t)}_k,\beta^{(t)}_{k}\bigr)
\right],
\end{align}
where the expectation is taken with respect to the underlying data-generating distributions $P_1,\ldots,P_K$. Here, $\mathbb{E}[\mathrm{Risk}(\psi^{(t)},\phi^{(t)}_k,\beta^{(t)}_{k})]$ denotes the population (true) risk associated with the choice of shared models $\psi^{(t)}$, personalized models $\phi^{(t)}_k$, and boosting weights $\beta^{(t)}_{k}$, evaluated under the centralized infinite-sample regime. In particular, we study the behavior of $\mathsf{Gen}$ as a function of the network size $K$, the client sample sizes $n_1,\ldots,n_K$, the boosting budget $T$, and the complexities of the local hypothesis classes $\Phi^{(t)}$ and the (more expressive) global hypothesis classes $\Psi^{(t)}$, for all $t \in [T]$. Throughout, we denote by $\bar n$ the harmonic mean of $n_1,\ldots,n_K$, namely,
\begin{equation}
\bar n \;\triangleq\; \frac{K}{\frac{1}{n_1} + \cdots + \frac{1}{n_K}}.
\nonumber
\end{equation}

For a function class $\Psi$---for instance, the class of all linear models, or a family of deep neural networks with a pre-specified architecture---we  denote by $\mathsf{cap}(\Psi)$ its \emph{capacity} or \emph{complexity}. Standard notions in statistical learning theory, such as the VC dimension in binary classification or, in our analysis in \ref{sec:app:theory}, metric-entropy--based quantities (e.g., logarithmic covering numbers), serve as concrete instantiations of such capacity measures and are typically of the same order. In our setting, the hypothesis class at iteration $t$ decomposes into a global (shared) component drawn from a class $\Psi^{(t)}$ and a personalized (client-specific) component drawn from a class $\Phi^{(t)}$, with
$
\mathsf{cap}\bigl(\Psi^{(t)}\bigr) \gg \mathsf{cap}\bigl(\Phi^{(t)}\bigr).
$
This separation reflects the fact that the shared body is substantially more expressive and parameter-heavy, whereas the personalized components are deliberately chosen to be lightweight in order to facilitate generalization from limited local data.

\subsection{Non-Asymptotic Generalization Bound}

We prove that, with high probability, the generalization gap of Algorithm~\ref{alg:ppfe} satisfies the following bound (see the proof of Theorem~\ref{thm:main} in \ref{sec:app:theory}):
\begin{align}
\label{eq:theory:GenInformalBound}
\mathsf{Gen}
\;\leq\;
\widetilde{\mathcal{O}}
\!\left(
\frac{1}{K \bar n}
\sum_{t=1}^{T}
\mathsf{cap}\bigl(\Psi^{(t)}\bigr)
+
\frac{1}{\bar n}
\sum_{t=1}^{T}
\mathsf{cap}\bigl(\Phi^{(t)}\bigr)
+
\frac{T}{\bar n}
\right)^{1/2}.
\end{align}

The three terms in \eqref{eq:theory:GenInformalBound} correspond to distinct mechanisms with different asymptotic behaviors as $K,\bar n \to \infty$:
\begin{itemize}
\item
$\bigl(\frac{1}{K\bar n}\sum_{t=1}^{T}\mathsf{cap}(\Psi^{(t)})\bigr)^{1/2}$ captures the contribution of the shared hypothesis classes. Although these classes are typically highly expressive, their complexity is amortized over the \emph{total effective sample size} $K\bar n$, which is on the order of the total number of samples across all clients. Consequently, strong generalization can be achieved even when individual clients have limited data.

\item
$\bigl(\frac{1}{\bar n}\sum_{t=1}^{T}\mathsf{cap}(\Phi^{(t)})\bigr)^{1/2}$ quantifies the effect of the personalized components. This term does not benefit from the factor $K$, but remains small in practice since $\mathsf{cap}(\Phi^{(t)}) \ll \mathsf{cap}(\Psi^{(t)})$ and is explicitly controlled as a function of the boosting iteration $t$.

\item
$\left(T/\bar n\right)^{1/2}$ reflects the intrinsic complexity introduced by the boosting procedure itself, a phenomenon that also appears in classical analyses of AdaBoost and related methods.
\end{itemize}

A fully formal version of this bound, in which $\mathsf{cap}(\cdot)$ is instantiated via metric entropy (specifically, logarithmic $\ell_\infty$ covering numbers), is provided in the proof of Theorem~\ref{thm:main} in \ref{sec:app:theory}. We now present a more explicit instantiation of the above result for deep neural networks. Consider networks of fixed depth $L>T$, where the width may vary across layers and boosting iterations. Let $N^{(t)}_l$ denote the width of the network used at iteration $t$ in layer $l$, for $l=0,1,\ldots,L$, with $N^{(t)}_0=d$ and $N^{(t)}_L=1$ for all $t$.

\begin{theorem}[Main Result (informal), see Theorem~\ref{thm:main}]
\label{thm:main:informal}
Assume $T$ deep neural network architectures, each of depth $L>T$, and widths $N^{(t)}_l$. Suppose that all network weights are uniformly bounded and that the activation function is Lipschitz continuous. Then, with probability at least $1-\zeta$ for any $\zeta\in(0,1)$, the generalization gap defined in \eqref{eq:theory:GenInformalBound} satisfies
\begin{align}
&\mathsf{Gen}
\;\leq\;
\mathcal{O}\!\left(
\sqrt{\frac{\log(1/\zeta)}{K\bar n}}
\right)
+
\\
&
\sqrt{\frac{\log(K\bar n)}{\bar n}}
\;
\widetilde{\mathcal{O}}
\!\left[
T
+
\sum_{t=1}^{T}
\left(
\frac{1}{K}
\sum_{l=1}^{L-t}
\left(
N^{(t)}_l N^{(t)}_{l-1}+N^{(t)}_l
\right)
+
\sum_{l=L-t}^{L-1}
\left(
N^{(t)}_l N^{(t)}_{l+1}+N^{(t)}_{l+1}
\right)
\right)
\right]^{1/2}.
\nonumber
\end{align}
The $\widetilde{\mathcal{O}}(\cdot)$ notation hides logarithmic dependencies on $T$, $L$, the weight bounds and Lipschitz constants.
\end{theorem}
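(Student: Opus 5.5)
The plan is to derive the statement from the generic metric-entropy bound \eqref{eq:theory:GenInformalBound} (the formal Theorem~\ref{thm:main}). Two tasks remain: bounding the $\ell_\infty$ log-covering numbers of the shared classes $\Psi^{(t)}$, the personalized classes $\Phi^{(t)}$ and the boosting weights $\beta^{(t)}_k$, and then substituting these bounds.

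\textbf{Step 1: reduction to a uniform deviation.} I would first write $\mathsf{Gen}$ as the gap between the empirical risk of the minimizer and the infimal population risk. This gap is controlled by $\sup|\widehat{\mathcal{L}}-\mathcal{L}|$ over the joint class of all tuples $\{\psi^{(t)},\phi^{(t)}_k,\beta^{(t)}_k\}$, with the averaged objective $\frac1K\sum_k \widehat{\mathcal{L}}_k$ in place of a single risk. Since the loss is bounded, the averaged empirical risk is a sum of $\sum_k n_k$ independent terms, and the term of client $k$ carries weight $1/(Kn_k)$. McDiarmid's inequality then gives a deviation term $\sqrt{\log(1/\zeta)\sum_k 1/(K^2 n_k)}=\sqrt{\log(1/\zeta)/(K\bar n)}$, which uses the harmonic mean $\bar n$. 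The expected supremum is bounded by symmetrization and a Dudley (or single-scale) covering argument at scale $\epsilon\asymp 1/\sqrt{K\bar n}$. This is the step that produces the $\sqrt{\log(K\bar n)}$ factor.

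\textbf{Step 2: covering the product class.} The joint class is a product over $t\in[T]$. It contains one shared $\psi^{(t)}$ and $K$ separate $\phi^{(t)}_k$ and $\beta^{(t)}_k$. Its log-covering number is therefore at most
\[
\sum_t\Bigl[\log\mathcal N(\Psi^{(t)})+K\log\mathcal N(\Phi^{(t)})+K\log\mathcal N([0,B])\Bigr].
\]
After dividing by the effective sample size $K\bar n$, this gives exactly the three terms: shared capacity$/(K\bar n)$, personal capacity$/\bar n$, and $T/\bar n$. To make the product covering legitimate, the map from parameters to the ensemble output $\sum_t\beta_k^{(t)}g_{\phi_k^{(t)}}\circ f_{\psi^{(t)}}$ must be Lipschitz in each block. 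I would obtain this from bounded $\beta$s and the Lipschitz continuity of $g$ in its input and parameters.

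\textbf{Step 3: neural network entropy.} With weights uniformly bounded by $B_W$ and $\rho$-Lipschitz activations, a layer-peeling argument shows that perturbing the parameters by $\delta$ in sup norm changes the output by at most $\delta\cdot\mathrm{poly}(B_W,\rho,N)^L$. Hence a parameter grid gives
\[
\log\mathcal N(\epsilon)\lesssim D\,L\log(B_W\rho N/\epsilon),
\]
where $D$ is the parameter count. For $\Psi^{(t)}$ this count is $\sum_{l=1}^{L-t}(N_lN_{l-1}+N_l)$. For $\Phi^{(t)}$ it is $\sum_{l=L-t}^{L-1}(N_lN_{l+1}+N_{l+1})$, with the widths $N^{(t)}_l$ already shrunk; under low-rank factorization one would replace $N_lN_{l+1}$ by $r(N_l+N_{l+1})$. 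Substituting these counts into Step 2 and absorbing the logarithms in $L$, $T$, $B_W$ and $\rho$ into $\widetilde{\mathcal{O}}$ yields the stated bound.

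\textbf{Main obstacle.} I expect the hardest part to be the Lipschitz composition across the shared/personal split. The head's input $f_\psi(\boldX)$ depends on $\psi$, so a cover of $\psi$ must induce a uniform cover of $g_\phi\circ f_\psi$ for every $\phi$ simultaneously. It must also do so in a way that stays shared across clients, otherwise the $1/K$ amortization is lost. I would handle this by covering $\psi$ once and globally, using the Lipschitz constant of $g$ in its input (bounded by $(\rho B_W N)^t$), and only then covering each $\phi_k$ separately.
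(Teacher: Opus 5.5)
Your proposal follows the paper's proof of Theorem~\ref{thm:main} essentially step for step. It uses the same reduction of $\mathsf{Gen}$ to a uniform deviation over the joint class $\Omega$, and the same product cover in which each $\Psi^{(t)}$ is paid for once while $\Phi^{(t)}$ and the weight $\beta^{(t)}$ are paid for once per client (Lemmas~\ref{lemma:compositionCovering} and~\ref{lemma:TBoostingCovNum}; the former is exactly your ``cover $\psi$ once, then each $\phi_k$'' step). It also uses the same parameter-grid entropy bound of Lemma~\ref{lemma:DNNCoveringBound}, and McDiarmid with bounded differences $B/(Kn_k)$ producing the harmonic mean. The only real deviation is that you bound the expected supremum by symmetrization plus a Massart/Dudley step. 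The paper instead applies McDiarmid pointwise (Lemma~\ref{lemma:mcdiarmidBound}), union-bounds over the finite $\ell_\infty$ cover and tunes the scale with Lemma~\ref{lemma:aepsilon}; both give the same bound up to constants. One correction to Step~3: keep the depth additive as in Lemma~\ref{lemma:DNNCoveringBound}, i.e.\ $D\bigl(\log(1/\varepsilon)+\widetilde{\mathcal{O}}(L)\bigr)$, rather than your multiplicative $D\,L\log(\cdot/\epsilon)$, since the latter leaves a $\sqrt{L}$ factor that $\widetilde{\mathcal{O}}$ is not allowed to hide.
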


To further elucidate the implications of our result, we consider a simplified but representative architectural setting.

\begin{corollary}[Special DNN Architectures (informal), see Corollary~\ref{corl:specialSetting}]
\label{corl:specialSetting:informal}
Assume the setting of Theorem~\ref{thm:main:informal}, and suppose that all clients have equal local sample size $n$. Assume that the width of all shared layers is fixed to a base value $W_b$, while the widths of the personalized layers decrease across iterations according to $W_b t^{-\alpha}$ for some $\alpha\ge 0$. Then, with probability at least $1-\zeta$,
\begin{align}
\mathsf{Gen}
\;\le\;
\widetilde{\mathcal{O}}\!\left(
\left(\frac{LT}{Kn}\right)^{1/2}
+
\left(\frac{\log(1/\zeta)}{Kn}\right)^{1/2}
+
\frac{T^{1-\min(\alpha,1/2)}}{n^{1/2}}
\right),
\end{align}
where the hidden constants depend at most linearly on $W_b$ and logarithmically on the weight bounds and Lipschitz constants.
\end{corollary}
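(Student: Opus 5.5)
The plan is to specialize the bound of Theorem~\ref{thm:main:informal} and evaluate the two parameter-count sums under the width schedule of the corollary. With all $n_k=n$, the harmonic mean is $\bar n=n$. The confidence term $\mathcal{O}(\sqrt{\log(1/\zeta)/(Kn)})$ then transfers unchanged.

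What remains is to bound the bracketed quantity
\begin{align}
T+\sum_{t=1}^{T}\Bigl(\tfrac{1}{K}S^{(t)}_{\mathrm{sh}}+S^{(t)}_{\mathrm{pe}}\Bigr),
\nonumber
\end{align}
where $S^{(t)}_{\mathrm{sh}}$ counts the parameters in layers $1,\ldots,L-t$ and $S^{(t)}_{\mathrm{pe}}$ counts those in layers $L-t,\ldots,L-1$.

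\textbf{Shared part.} Every shared layer has width $W_b$, apart from the input width $d$ and the boundary layer. Hence $S^{(t)}_{\mathrm{sh}}\le (L-t)(W_b^2+W_b)+dW_b=\mathcal{O}(L)$, where $W_b$ and $d$ are absorbed into constants; I would treat these as constants with at most linear dependence, as the statement allows. Summing over $t$ gives $\frac1K\sum_t S^{(t)}_{\mathrm{sh}}=\mathcal{O}(LT/K)$.

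\textbf{Personalized part.} At stage $t$ there are $t$ personalized layers, each of width $W_bt^{-\alpha}$, ending at output width $1$. The interface weight matrix has size $W_b\cdot W_bt^{-\alpha}$, and the interior matrices have size $W_b^2t^{-2\alpha}$. This gives
\begin{align}
S^{(t)}_{\mathrm{pe}}\lesssim W_b^2\bigl(t^{-\alpha}+t\cdot t^{-2\alpha}\bigr)+W_b t\, t^{-\alpha}.
\nonumber
\end{align}
Summing over $t\le T$ yields $\sum_t S^{(t)}_{\mathrm{pe}}\lesssim T^{\max(2-2\alpha,\,1)}$ up to logarithms, where the logarithm appears at $\alpha=1$ from the harmonic sum. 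The linear term $Wt^{1-\alpha}$ sums to $T^{2-\alpha}$, but it is dominated, since $2-\alpha\le\max(2-2\alpha,1)$ fails only when $\alpha<0$. The added $T$ from the bracket is absorbed in the same way.

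\textbf{Assembling the bound.} Taking square roots gives
\begin{align}
\sqrt{\tfrac{\log(Kn)}{n}}\Bigl[\bigl(\tfrac{LT}{K}\bigr)^{1/2}+T^{\max(1-\alpha,1/2)}\Bigr].
\nonumber
\end{align}
Since $\max(1-\alpha,1/2)=1-\min(\alpha,1/2)$, the second term matches the corollary. Subadditivity of $\sqrt{\cdot}$ splits the square root into these two pieces. The first piece becomes $(LT/(Kn))^{1/2}$ after the $\log(Kn)$ factor is hidden in $\widetilde{\mathcal{O}}$.

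\textbf{Main obstacle.} The delicate step is the index bookkeeping at the shared/personalized boundary, where the layer $L-t$ is counted in both sums of the theorem, together with the treatment of the $W_b$ dependence. The interior personalized matrices scale as $W_b^2$. To keep the hidden constants at most linear in $W_b$, I would first check whether the formal Theorem~\ref{thm:main} in the appendix counts the low-rank or shrunken parameters differently. If it does not, I would state the constant as polynomial in $W_b$ and note this as a minor discrepancy. A second check is that the boundary layer contributes only $\mathcal{O}(W_b^2t^{-\alpha})$ per stage, which sums to at most $T^{1-\alpha}$ (or $\log T$ when $\alpha=1$) and so never dominates.
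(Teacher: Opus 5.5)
Your route is the same as the paper's: set $\bar n=n$, bound the shared count by $\mathcal{O}(LTW_b^2)$, bound the personalized count, and split the square root. However, the step that discards the bias contribution $W_b\sum_{t\le T}t^{1-\alpha}\asymp W_bT^{2-\alpha}$ fails as you argue it. The inequality $2-\alpha\le\max(2-2\alpha,1)$ does not fail ``only when $\alpha<0$''. It fails for every $\alpha\in(0,1)$, since there $2-\alpha>2-2\alpha$ and $2-\alpha>1$. If $W_b$ is absorbed into constants, as you do, this term contributes $T^{1-\alpha/2}/n^{1/2}$ after the square root. That is strictly worse than $T^{1-\min(\alpha,1/2)}/n^{1/2}$ on all of $(0,1)$, which is exactly the regime the corollary is meant to illustrate (e.g.\ $\alpha\in(0,1/2)$). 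So your claim $\sum_tS^{(t)}_{\mathrm{pe}}\lesssim T^{\max(2-2\alpha,1)}$ does not follow from what you wrote.

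The missing ingredient is a condition on the widths, used with $W_b$ kept explicit. The paper discards precisely this $T^{2-\alpha}/(2-\alpha)$ term by assuming $W_bT^{-\alpha}\gg1$, i.e.\ the last personalized layers still have many neurons. The weaker condition $W_bT^{-\alpha}\ge1$ suffices, and it is forced anyway if the widths $W_bt^{-\alpha}$ are to be positive integers as in Lemma~\ref{lemma:DNNCoveringBound}. This condition ties $W_b$ to $T$ ($W_b\ge T^{\alpha}$), which is why you cannot absorb $W_b$ into constants before comparing terms.

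Under this condition the personalized sum is controlled term by term:
\begin{itemize}
\item Biases are dominated by weights: $W_bt^{1-\alpha}\le W_b^2t^{1-2\alpha}$.
\item The interior sum satisfies $\sum_{t\le T}t^{1-2\alpha}\le T^{\max(2-2\alpha,1)}$ exactly, with no logarithm.
\item Your interface term $W_b^2\sum_t t^{-\alpha}$ is at most $W_b^2T$.
\end{itemize}
Hence the personalized part is $\lesssim W_b^2T^{\max(2-2\alpha,1)}$. After the square root it contributes $W_bT^{1-\min(\alpha,1/2)}$, and the bracket's $T$ contributes $T^{1/2}$.

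This also resolves your concern about the $W_b$-dependence. The $W_b^2$ from the interior matrices sits under the square root, which is how the paper obtains its $(W_b+1)$ prefactor, so no reinterpretation of the theorem is needed.

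Finally, there is no double counting at the boundary. The informal theorem's personalized sum runs over $N_lN_{l+1}+N_{l+1}$ for $l=L-t,\dots,L-1$, which covers layers $L-t+1,\dots,L$. These are disjoint from the shared layers $1,\dots,L-t$.
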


The formal statements and proofs are provided in \ref{sec:app:theory}. This corollary highlights the same structural decomposition observed earlier. The term $(LT/(Kn))^{1/2}$ corresponds to the shared representation and reflects the depth $L$ of the network, whose impact is mitigated by aggregation across clients. The term $T^{1-\min(\alpha,1/2)}/n^{1/2}$ captures the contribution of the personalized components and is independent of $L$. By choosing $\alpha\in(0,1/2)$, the dependence on $T$ can be reduced from linear to sublinear, illustrating how progressively simplifying the personalized components improves generalization.

\begin{remark}[Width Shrinkage vs. Low-Rank Decomposition]
In the proposed method in Section \ref{sec:proposed}, we discuss the use of low-rank decompositions as a mechanism for complexity reduction. In contrast, Corollary~\ref{corl:specialSetting:informal} considers layer-width shrinkage. These two approaches are treated equivalently in our theoretical analysis, since the bounds depend only on the number of \emph{effective} parameters after complexity reduction, rather than on the specific parameterization. Consequently, all stated bounds remain applicable under either strategy.
\end{remark}

\begin{remark}[Non-Vacuous Bounds for DNNs]
The bounds presented in Theorem~\ref{thm:main:informal} and Corollary~\ref{corl:specialSetting:informal} rely on metric-entropy--based capacity estimates for DNNs, which are known to be vacuous in some practical regimes. Alternatively, one may work directly with the more abstract form given in \eqref{eq:theory:GenInformalBound} and substitute $\mathsf{cap}(\cdot)$ with a tighter, problem-dependent complexity measure. Our primary purpose is to elucidate the qualitative interplay between $K$, $\bar n$, and complexity measures rather than to provide numerically sharp constants.
\end{remark}

\begin{remark}[Theoretical Analysis of FedRep \cite{collins2021exploiting}]
By restricting to a single iteration and a fixed-size personalized component, Algorithm \ref{alg:ppfe} reduces to FedRep \cite{collins2021exploiting}. In this regime, all terms in our generalization bounds that depend on $T$ either vanish or can be treated as absolute constants. Consequently, our theoretical analysis directly applies to FedRep. To the best of our knowledge, this provides the first nonlinear generalization guarantee for FedRep, filling a gap that was previously unaddressed in the literature.
\end{remark}


\subsection{Analysis of Bias}

Unlike classical or modern generalization guarantees (e.g., those based on complexity measures such as VC dimension, Rademacher complexity, or covering numbers), there are no global, distribution-free bounds on the \emph{bias}, i.e., the minimum achievable training error in the learning stage. Nevertheless, bias can always be measured and controlled prior to deployment, since it depends solely on empirical performance.

Boosting strategies, including the proposed method, have empirically demonstrated a consistent (often exponential) reduction in bias as the number of boosting rounds $T$ increases. To formalize this intuition, let us focus on the binary classification setting, and model each client’s personalized training procedure as a weak learning algorithm. Specifically, given the current distribution of sample weights, the procedure returns a hypothesis $h^{(t)}(\cdot)$ whose true error $\varepsilon_{k}^{(t)}$ satisfies
\begin{equation}
\varepsilon_{k}^{(t)}
\triangleq \mathbb{P}_{(\boldX_i, y_i)\sim \mathcal{D}_k}\!\left[h^{(t)}(\boldX_i)\neq y_i\right]
\leq \tfrac{1}{2}-\gamma,
\label{eq:theory:bias:gammaWeak}
\end{equation}
for some strictly positive (but typically small) constant $\gamma$. This is precisely the definition of a $\gamma$-weak classifier \cite{mohri2018foundations}, which guarantees the learned binary classifier performs weakly better than random coin tossing. Then, the following theorem applies:

\begin{theorem}[Theorem 6.1 of \cite{mohri2018foundations}]
Under the weak learning assumption in \eqref{eq:theory:bias:gammaWeak} for some $\gamma>0$, the empirical error of the ensemble learned at each client $k\in[K]$ satisfies
\begin{equation}
\mathrm{EmpiricalRisk}\!\left(h_k^{(\mathrm{ens})}(\cdot)\right)
\leq e^{-2\gamma^2 T},
\label{eq:theory:Thm:biasExpReduction}
\end{equation}
for any iteration budget $T\in\mathbb{N}$.
\end{theorem}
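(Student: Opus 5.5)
The bound is the classical AdaBoost training-error argument, run separately on each client $k$ with its weighted empirical distribution over $\mathcal{D}_k$. I work in the binary setting with labels $y_i\in\{-1,+1\}$ and $\{\pm1\}$-valued stage predictors $h^{(t)}=g_{\phi_k^{(t)}}\circ f_{\psi^{(t)}}$, and take the ensemble classifier to be $\mathrm{sgn}\bigl(F_T\bigr)$ with $F_T=\sum_{t}\beta_k^{(t)}h^{(t)}$. I normalize the weights into a distribution $D_t(i)=\omega^{(t)}_{k,i}/n_k$. With the 0-1 loss (so $\max\ell=1$), the \texttt{Reweight} routine gives $\varepsilon_k^{(t)}$ as the $D_t$-weighted error and $\beta_k^{(t)}=\tfrac12\log\frac{1-\varepsilon_k^{(t)}}{\varepsilon_k^{(t)}}$. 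I read the multiplicative update as the standard AdaBoost update $D_{t+1}(i)=D_t(i)\exp(-\beta_k^{(t)}y_ih^{(t)}(\boldX_i))/Z_t$, which matches $\exp(\beta\,\ell)$ up to the normalization constant, since $\ell=\tfrac{1}{2}(1-yh)$. The weak-learning assumption \eqref{eq:theory:bias:gammaWeak} is interpreted with respect to $D_t$.

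The proof then has three steps.

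\textbf{Step 1 (unrolling).} Iterating the update from $D_1$ uniform gives
\[
D_{T+1}(i)=\frac{\exp\bigl(-y_iF_T(\boldX_i)\bigr)}{n_k\prod_{t=1}^T Z_t}.
\]

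\textbf{Step 2 (exponential surrogate).} I bound the 0-1 loss by the exponential loss: $\mathbb{1}[y_iF_T(\boldX_i)\le0]\le e^{-y_iF_T(\boldX_i)}$. Averaging over $i$ and using $\sum_iD_{T+1}(i)=1$ yields
\[
\mathrm{EmpiricalRisk}\le\prod_{t=1}^T Z_t .
\]

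\textbf{Step 3 (per-stage factor).} I split the sum defining $Z_t$ into correctly and incorrectly classified points:
\[
Z_t=(1-\varepsilon_t)e^{-\beta_t}+\varepsilon_te^{\beta_t}.
\]
Substituting the chosen $\beta_t$ gives $Z_t=2\sqrt{\varepsilon_t(1-\varepsilon_t)}$. Writing $\varepsilon_t=\tfrac12-\gamma_t$ with $\gamma_t\ge\gamma$, this becomes $Z_t=\sqrt{1-4\gamma_t^2}$. Then $1-x\le e^{-x}$ gives $Z_t\le e^{-2\gamma_t^2}\le e^{-2\gamma^2}$. Multiplying over the $T$ stages gives $e^{-2\gamma^2T}$. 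This holds for every client and every $T$.

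The main obstacle is reconciling the paper's algorithmic conventions with the textbook setting, not the analysis itself. The specific gaps are:
\begin{itemize}
\item the weight update in the text is written with the opposite sign $\exp(-\gamma\ell)$, while the algorithm uses $\exp(\beta\ell)$;
\item in Algorithm~\ref{alg:ppfe}, \texttt{Reweight} is applied to the cumulative ensemble rather than to the $t$-th stage model;
\item the learners are real-valued networks rather than $\pm1$ classifiers.
\end{itemize}
I would state explicitly that the theorem is applied under the idealization in which each stage outputs a $\pm1$ classifier, $\varepsilon^{(t)}_k$ is that stage's weighted error, and the update is the standard AdaBoost one. Under those conventions the argument is exactly Theorem~6.1 of \cite{mohri2018foundations}.
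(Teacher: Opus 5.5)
Your argument is correct. It is the standard proof of the result the paper cites: unroll $D_{T+1}$, bound the 0-1 loss by $e^{-y_iF_T(\boldX_i)}$, and use $Z_t=2\sqrt{\varepsilon_t(1-\varepsilon_t)}=\sqrt{1-4\gamma_t^2}\le e^{-2\gamma_t^2}$. The paper gives no proof of its own and simply imports Theorem 6.1 of \cite{mohri2018foundations}, so the two routes coincide. Your reading of \eqref{eq:theory:bias:gammaWeak} with respect to the reweighted distribution $D_t$ is the one that theorem requires.

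One claim in your reconciliation is wrong, however. With the 0-1 loss, $\ell=\tfrac12(1-yh)$ gives $\exp(\beta\ell)=e^{\beta/2}\exp\bigl(-\tfrac{\beta}{2}yh\bigr)$. So the \texttt{Reweight} update with $\beta=\tfrac12\log\frac{1-\varepsilon}{\varepsilon}$ is AdaBoost with half the step size, not the standard update $\exp(-\beta yh)$; matching it would require $\exp(2\beta\ell)$.

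This changes the constant. With the literal update, unrolling gives $D_{T+1}(i)\propto e^{-y_iF_T(\boldX_i)/2}$, so Step 2 must use the surrogate $e^{-y_iF_T(\boldX_i)/2}$. The per-stage factor then becomes $Z_t=(1-\varepsilon_t)^{3/4}\varepsilon_t^{1/4}+\varepsilon_t^{3/4}(1-\varepsilon_t)^{1/4}\approx 1-\tfrac32\gamma_t^2$ for small $\gamma_t$. This exceeds $e^{-2\gamma_t^2}$, e.g.\ $0.985$ versus $0.980$ at $\gamma_t=0.1$, so Step 3 no longer yields $e^{-2\gamma^2T}$. The standard-update idealization you adopt at the end is therefore genuinely needed here too. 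It belongs in your list of discrepancies rather than being described as a match up to normalization.
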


The bound in \eqref{eq:theory:Thm:biasExpReduction} implies that, as boosting incorporates additional weak learners (i.e., as $T$ increases), the empirical error decreases at an exponential rate, provided that the weak learning condition $\varepsilon_{k}^{(t)} < \tfrac{1}{2}-\gamma$ holds uniformly across boosting rounds. As discussed earlier, there is no general mechanism to guarantee this condition a priori. However, we empirically observe that, due to the increasing expressivity of the local layers across stages in our proposed method, maintaining a non-negligible $\gamma$ throughout training is easier compared to competing approaches. This phenomenon likely contributes to the superior empirical performance reported in Section \ref{sec:experiments}.

\paragraph{Bias--Variance Tradeoff.}
Combining the generalization bounds in \eqref{eq:theory:GenInformalBound} (or, more concretely, Theorem~\ref{thm:main:informal} and Corollary~\ref{corl:specialSetting:informal}), which predict an increase in the generalization gap on the order of $T^{1-\min(1/2,\alpha)}$, with the exponential bias reduction in \eqref{eq:theory:Thm:biasExpReduction}, suggests the existence of an optimal iteration budget $T$. In particular, choosing a smaller value of $\alpha$ yields more expressive personalized layers in the setting of Corollary~\ref{corl:specialSetting:informal}, which in turn reduces the bias; however, as the bound indicates, this comes at the cost of an increased generalization gap. This observation reveals an explicit and tunable bias--variance tradeoff enabled by our framework, a degree of control that is largely absent in prior methods. In practice, the optimal value of $T$ can be selected either analytically or empirically via cross-validation. Consistent with prior work on boosting, we do not observe strong sensitivity to the choice of $T$ in our experiments.

\section{Experiments}
\label{sec:experiments}
We evaluate PPFE using both synthetic and real-world tasks under federated settings. We first test on a linear regression problem with synthetic data in Section \ref{sec:experiments:synthetic}, then assess our performance on some standard real-world benchmarks via practical neural network architectures in Section \ref{sec:experiments:real}.


\subsection{Synthetic Data}
\label{sec:experiments:synthetic}

We simulate a synthetic setting in which clients share both global and personalized components with varying proportions. Data are generated according to a linear regression model. Specifically, for each client $k \in [K]$, the feature vectors are sampled as
$\boldsymbol{X}^{(k)}_{1:n_k} \iid \mathcal{N}(\boldsymbol{0}, \boldsymbol{\Sigma}_k)$
for a client-specific covariance matrix $\boldsymbol{\Sigma}_k$, and the corresponding responses are generated as
\begin{align}
y^{(k)}_i = \boldsymbol{W}_k^{\top} \boldsymbol{X}^{(k)}_i + \epsilon_i,\quad \forall i\in[n_k],
\end{align}
where the noise terms $\epsilon_i \sim \mathcal{N}(0, \sigma_{\epsilon}^2)$ are drawn IID across all samples and clients. Each client-specific weight vector $\boldsymbol{W}_k$ consists of two components: a global component shared across all clients, sampled once from $\mathcal{N}(\boldsymbol{0}, \sigma_g^2 \boldsymbol{I}_d)$, and a personalized component unique to each client, sampled from $\mathcal{N}(\boldsymbol{0}, \alpha_k \sigma_g^2 \boldsymbol{I}_d)$, where $\alpha_k$ is a client-specific coefficient. The final weight vector is constructed as a convex combination of these components, controlled by a personalization ratio $r_p \in [0,1]$. Concretely, letting $\boldsymbol{W}_g$ denote the shared global component and $\boldsymbol{W}^{(k)}_{l}$ the local component for client $k$, we define
\begin{align}
\boldsymbol{W}_k
&= (1 - r_p)\boldsymbol{W}_g + r_p \boldsymbol{W}^{(k)}_{l},
\qquad \forall k = 1, \dots, K.
\end{align}

To evaluate the behavior of PPFE as the federation scales, we study its performance as the number of clients increases. We fix the personalization ratio $r_p$, input dimension $d$, and per-client sample size $n_k$, and vary the number of clients from $20$ to $200$. PPFE is run for four stages with decreasing complexity parameters $\alpha_t$ at each stage $t$, assuming full client participation and exact local optimization.

At each stage, Ridge regression is used to estimate the weight vectors. To select the regularization parameter $\lambda \ge 0$, we reserve $20\%$ of each client’s training data for validation and perform a grid search. The regularization parameter is progressively reduced across stages, and the final test error is then evaluated. For comparison, we also train (i) local Ridge regression models independently on each client, and (ii) a FedAvg baseline obtained by averaging model parameters across clients, using the same validation procedure to tune $\lambda$. As shown in Figure~\ref{fig:simul_clients_inc}, PPFE consistently attains the lowest mean squared error (MSE) as the number of clients increases, while both Local and FedAvg baselines exhibit higher and more variable test losses.

\begin{figure}[!t]
    \centering
    \begin{subfigure}[t]{0.48\textwidth}
        \centering
        \includegraphics[width=\textwidth,trim={0 0 15mm 0},clip]{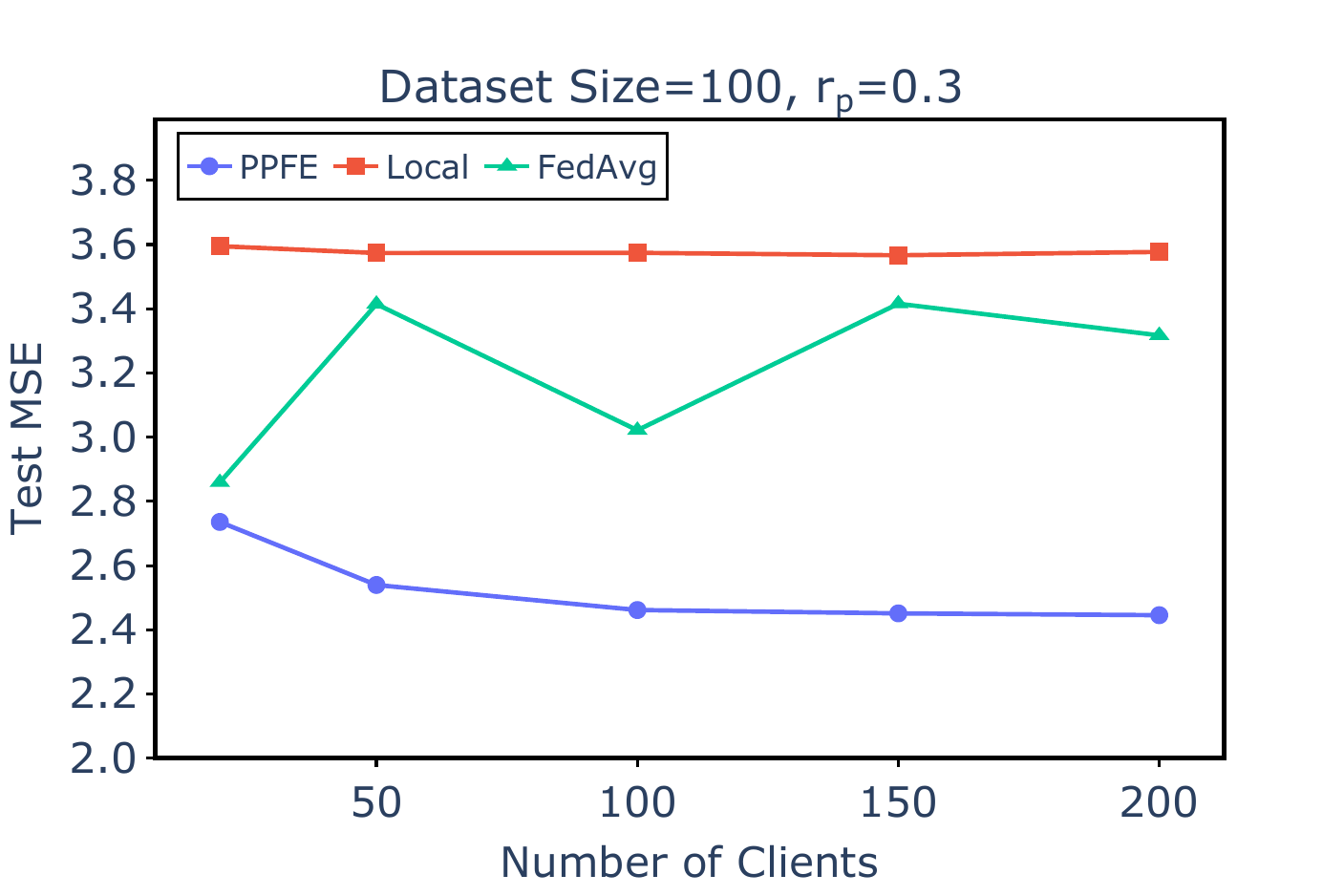}
        \caption{Comparison of test MSE across different numbers of clients.}
        \label{fig:simul_clients_inc}
    \end{subfigure}
    \hfill
    \begin{subfigure}[t]{0.48\textwidth}
        \centering
        \includegraphics[width=\textwidth,trim={0 0 15mm 0},clip]{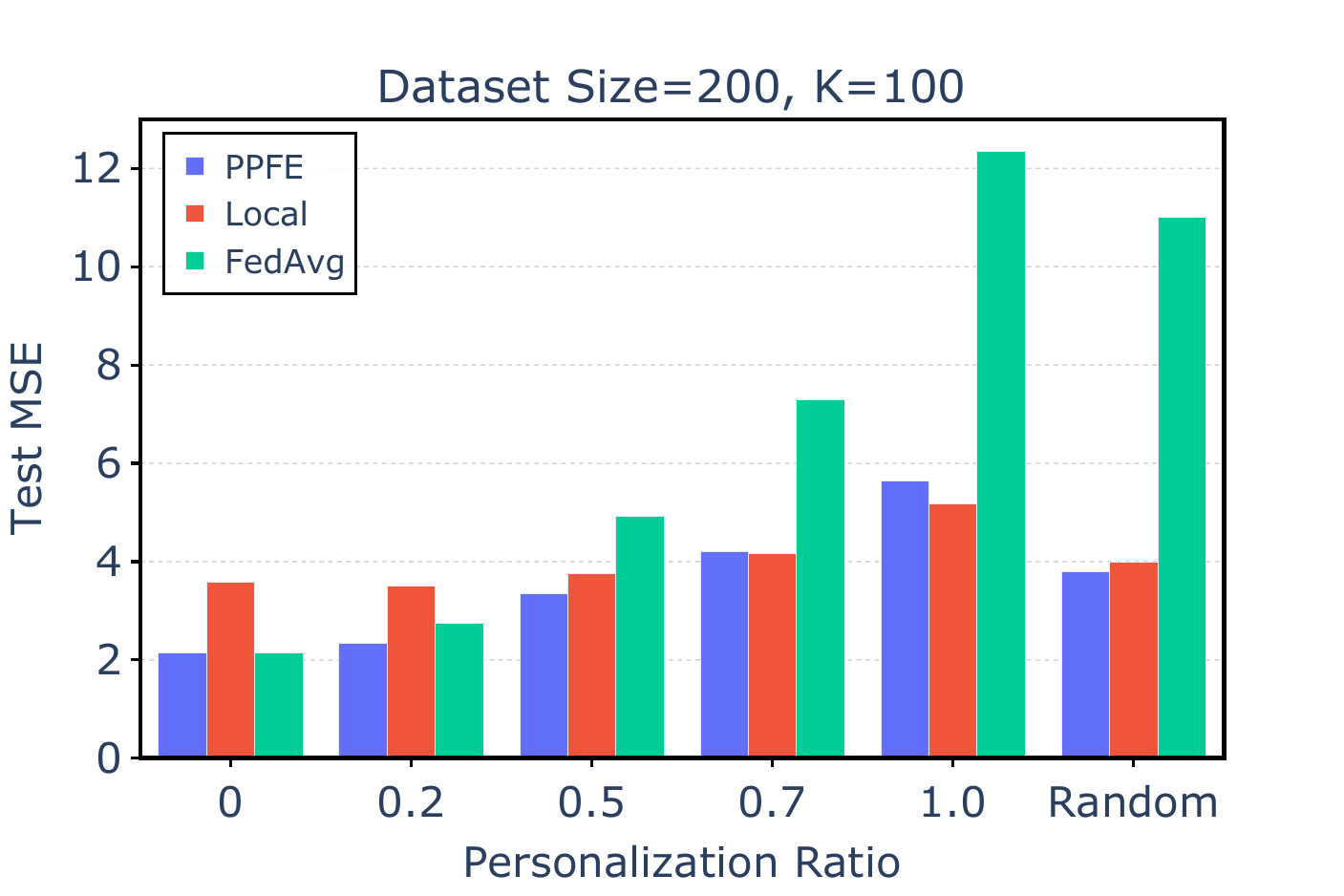}
        \caption{Effect of different personalization ratios.}
        \label{fig:simul_personal_ratio}
    \end{subfigure}
    \caption{Performance of PPFE on synthetic data under heterogeneous settings with varying numbers of clients and personalization ratios.}
    \label{fig:synthetic_results}
\end{figure}

Next, we examine the effect of the personalization ratio $r_p$ on model performance. We consider a setting with $K = 100$ clients and $n_k = 200$ samples per client, and vary $r_p$ from $0$ to $1$. The case $r_p = 0$ corresponds to identical underlying models across clients, while $r_p = 1$ represents a fully personalized regime with independent client-specific parameters. We also include a heterogeneous setting in which each client’s personalization ratio is independently sampled from the uniform distribution $\mathcal{U}(0,1)$. As shown in Figure \ref{fig:simul_personal_ratio}, the performance of FedAvg deteriorates rapidly as $r_p$ increases, whereas PPFE remains stable and consistently achieves low error across all levels of personalization.


\subsection{Real Data}
\label{sec:experiments:real}

To evaluate the performance of our algorithm in practical settings, we conduct experiments on real-world datasets using non-linear models under heterogeneous data distributions across clients.

\subsubsection{Models and Datasets}
We evaluate PPFE on four real-world datasets, including three image benchmarks (EMNIST Letters, CIFAR-10, and CIFAR-100) and one text benchmark (Sent140). For the image datasets, we employ convolutional neural network (CNN) models, while for Sent140 we use a recurrent neural network (RNN)-based architecture with pretrained word embeddings. Unless otherwise stated, we use $T=4$ stages in PPFE. To control the complexity of progressively personalized heads and reduce computational cost, we apply low-rank factorization to fully connected layers when constructing reduced models (details provided in \ref{app:models_datasets}).

\subsubsection{Data Partitioning}
To simulate heterogeneous data distributions across clients, we follow standard non-IID partitioning protocols commonly used in federated learning \cite{collins2021exploiting, makhija2022architecture}. For the image datasets, each client is restricted to a subset of classes, with the number of classes per client controlling the level of heterogeneity. For the Sent140 dataset, we adopt the natural partitioning by users. Detailed partitioning procedures are provided in \ref{app:data_partitioning}.

\subsubsection{Model Reduction}
To control the complexity of progressively personalized heads and mitigate overfitting, we employ model reduction mechanisms in PPFE. Our primary approach applies low-rank matrix factorization based on singular value decomposition (SVD) to the fully connected layers of the personalized head at each stage, reducing the effective dimensionality as additional layers become personalized. Using this strategy, we construct reduced variants of the base models that achieve different levels of parameter compression; for example, on CIFAR-10, the two reduced models correspond to approximately $6\%$ and $18\%$ reductions in the total number of parameters. This stage-wise reduction enables the capacity of the personalized head to increase gradually while keeping the overall model size manageable. In addition, we investigate an alternative reduction strategy based on structured sparsification of personalized parameters. Detailed reduction configurations and implementation specifics are provided in \ref{app:model_reduction}.

\subsubsection{Compared Methods}
We compare our method with baselines from personalized federated learning, global-model training, and non-cooperative local learning. Among personalized FL approaches, the closest to ours are model-decoupling methods, which split each client’s model into shared and client-specific parts but keep this partitioning fixed across communication rounds; this group includes FedBABU, FedRep, FedPer, and LG-FedAvg. We also include pFedFDA, which personalizes by adapting clients to a shared feature distribution. Another major line follows a multi-task learning formulation, where each client’s model is regularized to stay close to a global reference model, with Ditto being a strong representative. For global baselines, we evaluate FedAvg, the standard algorithm, and its personalized variant FedAvg+FT, which applies client-side fine-tuning in the final round. Finally, the Local-only baseline reflects the performance obtainable when each client trains independently on its own data.

\subsubsection{Implementation}
All experiments are conducted with 100 clients, where in each communication round a fraction $r=0.1$ of clients is randomly sampled to participate, except for the final round of each stage, which uses full client participation. Client-side optimization is performed using mini-batch SGD, with each selected client running 5 local epochs per communication round. PPFE is trained over $T=4$ stages with progressively increasing personalization. The initial stage uses FedAvg to learn a shared global model, which is subsequently reused for weight initialization and sample reweighting in later stages.

Importantly, the progressive training procedure does not introduce additional communication or computation overhead. The total number of communication rounds across all stages is fixed and matches that of all baseline methods. For image datasets, both PPFE and baseline algorithms are trained for 160 communication rounds, while for the Sent140 dataset all methods are trained for 50 rounds. This ensures a fair comparison under identical training budgets. Final performance is evaluated by averaging test accuracy across clients, using test data drawn from the same distribution as local training data. For PPFE, we report the accuracy of the resulting ensemble, whereas for baseline methods we report the accuracy of their final trained models. Detailed implementation settings are provided in \ref{app:implementation}.

\subsubsection{Performance Comparison}
We evaluate PPFE across multiple real-world datasets under varying degrees of data heterogeneity. For EMNIST, each client is assigned $S=10$ out of the 26 available classes with $n_k=100$ samples per client. For CIFAR-10 and CIFAR-100, we consider two heterogeneity regimes by varying the number of classes per client: high heterogeneity using $S=2$ for CIFAR-10 and $S=5$ for CIFAR-100, and low heterogeneity using $S=5$ and $S=20$, respectively. All clients are constructed with equally sized local datasets and a deliberately limited number of samples to reflect realistic federated learning conditions, with 150 samples per client for CIFAR-10 and 400 samples per client for CIFAR-100. Performance is evaluated using global test accuracy, computed as a weighted average of client accuracies according to local dataset sizes. 
Results across datasets and heterogeneity levels are reported in Table~\ref{tab:test_acc_comparision}, showing that PPFE attains the best performance in most configurations and offers consistent improvements over strong personalized baselines, particularly in highly heterogeneous scenarios. Results obtained using the alternative structured sparsification-based model reduction approach are reported in \ref{app:sparse_results}.

\begin{table}[!t]
\centering
\renewcommand{\arraystretch}{1.1}
\setlength{\tabcolsep}{5pt}
\scriptsize
\begin{threeparttable}
\begin{tabular}{l|cc|cc|c|c}
\toprule
 & \multicolumn{2}{c|}{CIFAR-10} & \multicolumn{2}{c|}{CIFAR-100} & EMNIST & Sent140 \\
\cline{2-7}
($K$, $S$) & (100, 2) & (100, 5) & (100, 20) & (100, 5) & (100, 10) & (100, 2) \\
\midrule
Local Only    & 81.66 & 59.36 & 36.05 & 72.61 & 72.40 & 71.13 \\
FedAvg        & 56.23 & 61.43 & 34.29 & 27.25 & 81.99 &  65.21 \\
FedAvg+FT     & 86.58 & 74.44 & 54.76 & 78.52 & 89.26 &  74.24 \\
Ditto         & 87.37 & 75.34 & 53.08 & 76.64 & \textbf{91.17} & 73.71 \\
FedBABU       & 86.45 & 76.10 & 46.14 & 73.31 & 84.50 & 72.99 \\
FedRep        & 82.16 & 69.29 & 40.81 & 73.52 & 77.42 & 74.46 \\
FedPer        & 87.29 & 74.74 & 43.72 & 75.90 & 83.07 & 75.34 \\
LG-FedAvg     & 86.55 & 61.08 & 39.30 & 74.08 & 76.12 & 73.64 \\
pFedFDA       & 88.55 & 73.82 & 50.59 & 74.83 & 90.86 & --\tnote{*} \\
\midrule
\textbf{PPFE} & \textbf{89.17} & \textbf{77.58} & \textbf{57.60} & \textbf{78.70} & 89.04 & \textbf{76.43} \\
\bottomrule
\end{tabular}
\begin{tablenotes}[flushleft]
\footnotesize
\item[*] ``--'' indicates that authors do not report results on text datasets.
\end{tablenotes}
\end{threeparttable}
\vspace{-3pt}
\caption{Test accuracy (\%) comparison across datasets under heterogeneous federated settings with a client participation ratio of $r=0.1$.}
\label{tab:test_acc_comparision}
\vspace{-5pt}
\end{table}

\subsubsection{Data Size and Data Heterogeneity Effect.}
We study the effect of local data size by varying it from 100 to 500 samples per client, using the CIFAR-100 dataset with $S = 5$ classes per client. As shown in Figure~\ref{fig:data-size}, federated learning methods consistently outperform local training, with PPFE achieving the highest accuracy. To evaluate the impact of data heterogeneity, we vary the number of classes per client from $S = 5$ to $S = 40$, keeping the local data size fixed. Figure~\ref{fig:data-hetero} shows that PPFE maintains superior performance across all heterogeneity levels, demonstrating robustness to diverse federated settings.

\begin{figure}[t]
  \centering
  \begin{subfigure}[b]{0.49\textwidth}
    \includegraphics[width=\linewidth]{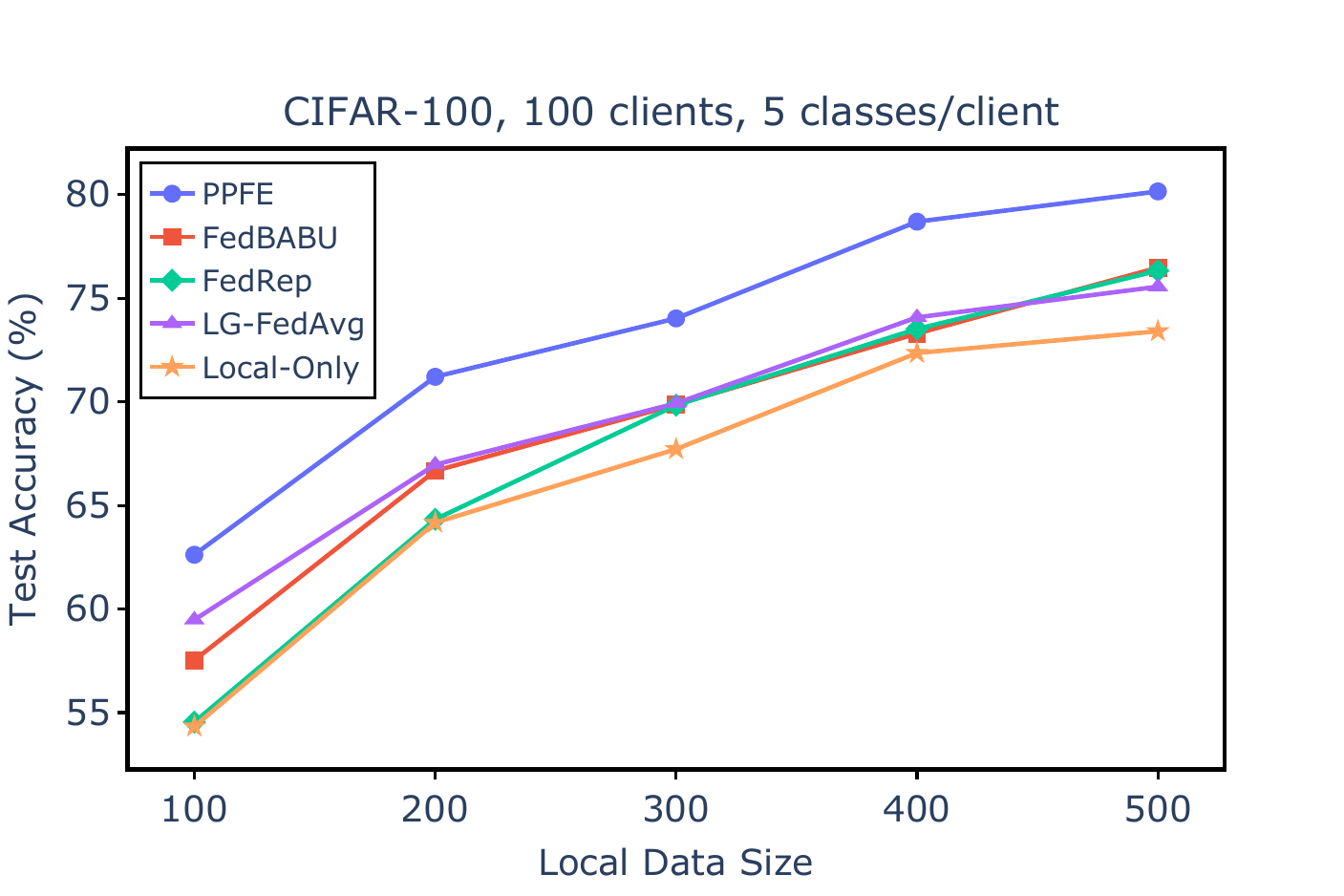}
    \caption{Performance over different local data sizes}
    \label{fig:data-size}
  \end{subfigure}
  \hfill
  \begin{subfigure}[b]{0.49\textwidth}
    \includegraphics[width=\linewidth]{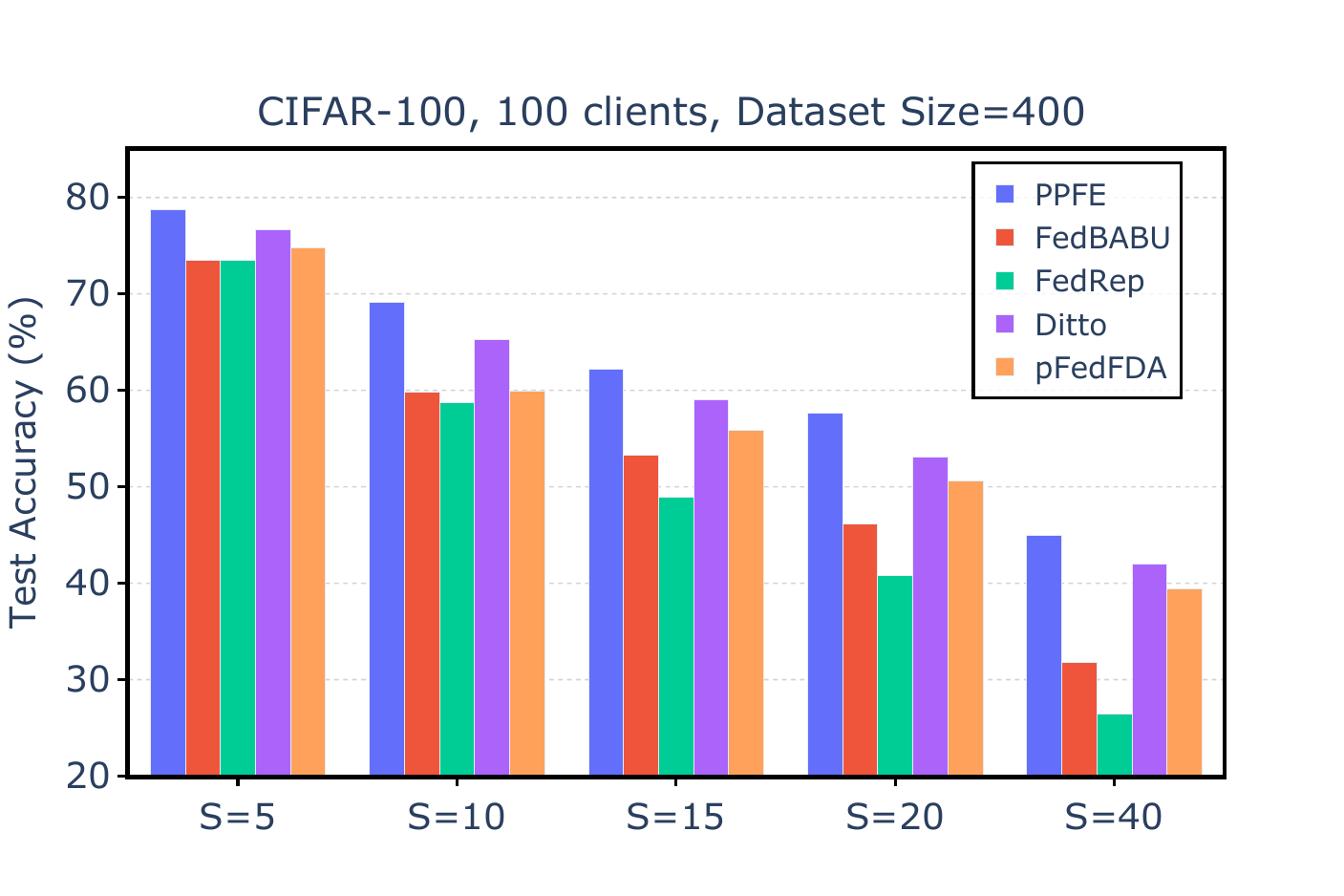}
    \caption{Performance over different data heterogeneity}
    \label{fig:data-hetero}
  \end{subfigure}
  \caption{Effect of data size and data heterogeneity on the performance.}
  \label{fig:data-settings}
\end{figure}

\subsubsection{Evaluation under Dirichlet-Based Data Heterogeneity}

Another widely used approach for constructing non-IID client datasets is the Dirichlet distribution, as adopted in \cite{lin2020ensemble}. The Dirichlet distribution is controlled by the hyperparameter $\alpha$, where larger values produce more identical local data distributions across clients and smaller values result in increasingly heterogeneous client data. In this experiment, we consider two heterogeneity levels by setting $\alpha = 1$ and $\alpha = 0.3$, evaluate our method on the CIFAR-10 dataset, and compare it with the baseline methods under the same experimental protocol as in our main results. The corresponding test accuracies are reported in Table~\ref{tab:dirichlet-allocation}, where our proposed method achieves the highest test accuracy in both Dirichlet settings.

\begin{table}[t]
    \centering
    \caption{Test accuracy (\%) comparison on CIFAR-10 under Dirichlet-based heterogeneous data distributions with a client participation ratio of $r=0.1$.}
    \label{tab:dirichlet-allocation}
    \resizebox{\linewidth}{!}{%
    \begin{tabular}{lcccccccccc}
        \toprule
        $\alpha$ & Local Only & FedAvg & FedAvg+FT & Ditto & FedBABU & FedRep & FedPer & LG-FedAvg & pFedFDA & PPFE \\
        \midrule
        1.0 & 45.20 & 60.82 & 65.95 & 59.49 & 64.47 & 57.71 & 62.88 & 48.05 & 65.84 & \textbf{67.17} \\
        0.3 & 61.18 & 65.91 & 75.24 & 68.77 & 74.47 & 69.46 & 73.65 & 63.51 & 73.72 & \textbf{76.53} \\
        \bottomrule
    \end{tabular}%
    }
\end{table}

\subsection{Ablation Studies}
We conduct a series of ablation studies to analyze the contribution of key components in PPFE and to justify the main design choices of the proposed framework.

\subsubsection{Model Reduction Effect.}
A key component of our method is the progressive reduction of complexity in the personal head across successive phases. This mechanism helps prevent overfitting as additional head layers and parameters are introduced. To evaluate its contribution, we performed an ablation study in which model reduction was disabled, and new layers were added to the personal head without any complexity control. The results in Table~\ref{tab:reduction_comparision} demonstrate that applying model reduction leads to improved final accuracy.

\begin{table}[!t]
\centering
\renewcommand{\arraystretch}{1.2}
\small
\begin{tabular}{@{}p{5cm}cc@{}}
\toprule
\textbf{Dataset (Setting)} & \textbf{With MR} & \textbf{Without MR} \\
\midrule
CIFAR-10 \ (n=100, S=2)     & 89.17 & 86.44 \\
CIFAR-10 \ (n=100, S=5)     & 77.58 & 76.10 \\
CIFAR-100 \ (n=100, S=5)    & 78.70 & 76.94 \\
CIFAR-100 \ (n=100, S=20)   & 57.60 & 54.07 \\
EMNIST \ (n=100, S=10)      & 89.04 & 87.73 \\
Sent140 \ (n=100, S=2) & 76.43 & 73.76 \\
\bottomrule
\end{tabular}
\caption{Effect of model reduction (MR) on final accuracy across datasets.}
\label{tab:reduction_comparision}
\end{table}

\subsubsection{Comparison with Fixed Personalized Heads}
To evaluate the effectiveness of progressive personalization, we compare PPFE with personalized federated learning models that employ fixed head architectures of varying depth. We construct three baselines, denoted as M1, M2, and M3, in which the last one, two, and three layers of the network are personalized, respectively, while the remaining layers are shared across clients. All fixed-head models are initialized from a common global model trained using FedAvg and are subsequently trained independently under the same communication budget to ensure a fair comparison. As shown in Table~\ref{tab:diff_personal_ratio}, increasing the number of personalized layers in fixed-head models generally leads to degraded performance, indicating overfitting under limited local data and heterogeneous client distributions. Among these baselines, the model with a single personalized layer (M1) achieves the strongest results, yet it remains consistently inferior to PPFE across all datasets. In contrast, PPFE achieves higher accuracy by progressively increasing the degree of personalization while controlling model complexity, demonstrating the advantage of gradual personalization over static head designs in personalized federated learning.

\begin{table}[!tb]
\centering
\renewcommand{\arraystretch}{1.2}
\small
\begin{tabular}{@{}p{5cm}cccc@{}}
\toprule
\textbf{Dataset (Setting)} & \textbf{M1} & \textbf{M2} & \textbf{M3} & \textbf{PPFE} \\
\midrule
CIFAR-10 \ (n=100, S=2)    & 88.28 & 87.36 & 86.59 & 89.17 \\
CIFAR-10 \ (n=100, S=5)    & 76.41 & 73.94 & 67.97 & 77.58 \\
CIFAR-100 \ (n=100, S=5)   & 75.86 & 75.11 & 74.36 & 78.70 \\
CIFAR-100 \ (n=100, S=20)  & 51.58 & 47.20 & 45.01 & 57.60 \\
EMNIST \ (n=100, S=10)     & 87.94 & 86.74 & 84.89 & 89.04 \\
Sent140 \ (n=100, S=2)     & 73.93 & 73.55 & 73.46 & 76.43 \\
\bottomrule
\end{tabular}
\caption{Accuracy of personalized FL models with different head depths. \textbf{M1}, \textbf{M2}, and \textbf{M3} correspond to models with one, two, and three personalized layers, respectively.}
\label{tab:diff_personal_ratio}
\end{table}

\subsubsection{Effect of Progressive Personalization}
A central component of PPFE is its progressive personalization strategy, in which the depth of the personalized head is gradually increased across training stages to better capture client-level data heterogeneity. This design allows the model to balance global representation learning and local adaptation while also reducing communication cost, as fewer parameters are exchanged when a larger portion of the network remains client-specific. For example, on CIFAR-100, clients transmit only $98.8\%$, $83.8\%$, and $19.5\%$ of the original model parameters in stages $T_1$, $T_2$, and $T_3$, respectively. To assess the impact of this strategy, we conduct an ablation study in which progressive personalization is removed by fixing the personalized head to a single layer after the initial FedAvg stage and continuing training using FedPer. We evaluate this setting both with and without inter-stage sample reweighting. As shown in Table~\ref{tab:ablation_progression}, removing progressive personalization consistently degrades performance across all datasets, while additionally disabling reweighting further amplifies this drop. These results confirm that both gradual head expansion and reweighting play complementary roles in improving accuracy, validating the effectiveness of the progressive personalization mechanism in PPFE.

\begin{table}[!t]
\centering
\renewcommand{\arraystretch}{1.2}
\small
\begin{tabular}{@{}p{5cm}cccc@{}}
\toprule
\textbf{Dataset (Setting)} & \textbf{WP} & \textbf{WPW} & \textbf{PPFE} \\
\midrule
CIFAR-10 \ (n=100, S=2)     & 88.07 & 88.12 & 89.17  \\
CIFAR-10 \ (n=100, S=5)     & 76.61 & 75.75 & 77.58 \\
CIFAR-100 \ (n=100, S=5)    & 76.97 & 77.21 & 78.70 \\
CIFAR-100 \ (n=100, S=20)   & 50.69 & 51.39 & 57.60 \\
EMNIST \ (n=100, S=10)      & 88.08 & 87.88 & 89.04 \\
Sent140 \ (n=100, S=2)      & 73.58 & 74.35 & 76.43 \\
\bottomrule
\end{tabular}
\caption{Ablation study on head model progression across training phases. \textbf{WP} denotes the removal of progressive personalization, and \textbf{WPW} indicates that sample reweighting between phases is also removed.}
\label{tab:ablation_progression}
\end{table}

\section{Conclusion}
\label{sec:conclusion}

In this paper, we proposed PPFE, a progressive ensemble-based personalized federated learning framework that balances global representation learning with client-specific adaptation in non-IID settings. While naively increasing the number of personalized layers can lead to overfitting and degraded generalization, PPFE addresses this challenge by incrementally increasing local model capacity under explicit complexity control, followed by a boosting procedure with inter-stage sample reweighting to emphasize residual errors. PPFE consistently outperforms strong decoupling baselines such as FedRep \cite{collins2021exploiting} and FedBABU \cite{Oh2021FedBABUTE} across both synthetic and real-world benchmarks, demonstrating robustness in practical heterogeneous federated environments. The method improves personalization performance while remaining fully compatible with standard federated optimization pipelines and model architectures, making it well suited for privacy-preserving collaborative learning in domains such as healthcare, finance, and mobile or IoT analytics.

We further provide theoretical generalization guarantees that support the observed empirical gains under heterogeneous data distributions. In particular, our analysis yields explicit characterizations of the bias--variance trade-off induced by progressively assigning personalized layers across stages $t \in [T]$ (e.g., Corollary \ref{corl:specialSetting:informal}). A promising direction for future work is to move beyond the current shared reduction schedule across clients and develop client-adaptive strategies that better account for local data characteristics and resource constraints, with the potential to further improve both efficiency and generalization.

\bibliographystyle{elsarticle-num}
\bibliography{ref}

\appendix
\section{Details of Experimental Setup}
\label{sec:app:experiments}
All experimental evaluations were conducted using the PyTorch framework and executed on NVIDIA Tesla P100 GPUs. The implementation details and code used to produce the reported results are included in the supplementary materials and also is available at \url{https://anonymous.4open.science/r/PPFE-6E7C}.

\subsection{Models and Datasets}
\label{app:models_datasets}

We evaluate our method on four real-world datasets covering both vision and text modalities. For image classification, we consider EMNIST Letters, CIFAR-10, and CIFAR-100. EMNIST Letters is a handwritten character recognition dataset derived from the Extended MNIST collection, consisting of grayscale images of handwritten alphabet characters spanning 26 classes. CIFAR-10 is a widely used benchmark composed of 60000 color images evenly distributed across 10 object categories, while CIFAR-100 is a more challenging variant with the same image resolution but 100 fine-grained classes, introducing a higher degree of semantic complexity. For text-based evaluation, we use the Sent140 dataset, which contains short social media messages annotated for binary sentiment and naturally partitioned by user, making it well suited for federated learning scenarios. For the image datasets, we employ convolutional neural network architectures composed of two convolutional layers (with dataset-dependent channel widths) followed by max-pooling and three fully connected layers, where the convolutional layers act as a shared feature extractor and the fully connected layers form the personalized head that is progressively expanded in PPFE. For Sent140, text inputs are represented using 300-dimensional pretrained GloVe embeddings and processed by a recurrent neural network, followed by three fully connected layers, with personalization applied in the same progressive manner. All clients share the same base architecture within each dataset to ensure fair and consistent evaluation.

\subsection{Data Partitioning}
\label{app:data_partitioning}

To construct heterogeneous client datasets, we follow a class-restriction strategy commonly adopted in personalized federated learning \cite{collins2021exploiting, makhija2022architecture}. For the image datasets, each client is assigned data from a limited subset of classes, while different clients observe different class subsets. The level of heterogeneity is controlled by the number of classes $S$ allocated to each client. For example, in the CIFAR-10 dataset with $S=5$, one client may have access to samples from classes $\{3,7,1,9,4\}$, while another client may observe classes $\{6,2,8,3,7\}$. This setup induces statistical heterogeneity while maintaining balanced local dataset sizes across clients. For the Sent140 dataset, we adopt the natural partitioning induced by tweet authorship, where each client corresponds to a user. To ensure reliable evaluation, users with fewer than eight test samples are removed, resulting in 136 clients with an average of approximately 99 training samples per client. This partitioning reflects realistic federated learning scenarios with diverse and uneven local data distributions.

\subsection{Model Reduction}
\label{app:model_reduction}

We primarily reduce the complexity of personalized components using low-rank matrix factorization applied to the fully connected layers of the personalized head. Specifically, at each PPFE stage, the newly personalized layers are replaced with lower-rank approximations obtained via singular value decomposition (SVD), resulting in reduced variants of the base models with controlled capacity. For the CIFAR-10 dataset, the first and second reduced models use factorization ranks of $(100, 50, 24)$ and $(80, 40, 16)$ for the three fully connected layers, respectively, corresponding to approximately $6\%$ and $18\%$ reductions in the total number of trainable parameters. For the remaining datasets, factorization ranks are selected to achieve comparable relative reductions in model size. In all cases, convolutional layers in image models and recurrent layers in the text model are kept unchanged to preserve feature extraction capacity.

As a parallel alternative to low-rank factorization, we also explore structured sparsification of personalized parameters using fixed binary masks shared across clients. In this approach, $30\%$ of the parameters in the personalized fully connected layers are masked during the initial personalization stage, and when additional layers become personalized in later stages, an extra $10\%$ of parameters in the newly introduced layer are masked. Identical sparsity patterns are enforced across clients through a fixed random seed, ensuring that all clients optimize within the same sparse subspace and enabling meaningful aggregation of personalized parameters.

\subsection{Implementation Details}
\label{app:implementation}

All experiments involve 100 clients, with a participation ratio of $r=0.1$ per communication round, except for the final round of each stage, which uses full client participation. For the image datasets, the initial stage ($T_0$) is trained using the FedAvg algorithm for 30 epochs to obtain a global model, which is then used for weight initialization and client-side sample reweighting in subsequent stages. In the second stage, clients jointly train a shared feature extractor together with a personalized single-layer head for 50 communication rounds. Two reduced models are obtained by applying SVD-based low-rank factorization to the fully connected layers of the FedAvg model.

In the third stage, models with two personalized head layers are trained for 40 rounds, followed by another sample reweighting step, after which convolutional weights are transferred to the final stage. In the fourth stage, only the convolutional layers remain global, while all remaining layers are trained locally. Client-side optimization uses mini-batch SGD with momentum, where each selected client performs 5 local epochs per round. The batch size is set to 10 for image datasets and 2 for Sent140. The initial learning rate is $\eta=0.01$ and is reduced to $0.001$ for the shared body parameters in the later stages.

For the Sent140 dataset, the four stages are trained for 10, 20, 10, and 10 rounds, respectively.

\section{Additional Experiments}
\label{app:sparse_results}

\begin{table}[t]
\centering
\small
\begin{tabular}{l|cc|cc|c|c}
\toprule
 & \multicolumn{2}{c|}{CIFAR-10} & \multicolumn{2}{c|}{CIFAR-100} & EMNIST & Sent140 \\
\cline{2-7}
($K$, $S$) & (100, 2) & (100, 5) & (100, 20) & (100, 5) & (100, 10) & (100, 2) \\
\midrule
PPFE (SVD)    & 89.17 & 77.58 & 57.60 & 78.70 & 89.04 & 76.43 \\
PPFE (Sparse) & 89.14 & 77.21 & 54.17 & 76.83 & 89.45 & 75.94 \\
\bottomrule
\end{tabular}
\caption{Comparison of PPFE using low-rank (SVD) and sparsification-based model reduction under heterogeneous data distributions.}
\label{tab:sparse_results}
\end{table}

In addition to the primary low-rank factorization strategy used in PPFE, we conduct supplementary experiments using a structured sparsification-based approach to control the complexity of personalized heads. In this setting, personalized parameters are constrained using fixed binary masks shared across clients, enforcing a common sparse subspace during training. This alternative reduction mechanism is applied following the same progressive personalization schedule and under the same communication and computation budgets as the main experiments.

Table~\ref{tab:sparse_results} reports the test accuracy obtained with sparsification-based model reduction across the evaluated datasets and heterogeneity settings. While this approach generally yields slightly lower accuracy compared to the SVD-based reduction strategy used in the main paper, it consistently outperforms or remains competitive with strong personalized federated learning baselines. These results indicate that PPFE is robust to different forms of model complexity control and that its performance gains are not tied to a specific reduction technique.

\section{Theoretical Analysis: Complete Theorems and Proofs}
\label{sec:app:theory}



In this section, we present a full theoretical analysis of the proposed PPFE method introduced in Section \ref{sec:proposed}, with particular focus on Algorithm \ref{alg:ppfe}. The reader will find the complete statements of the theorems and corollaries referenced in Section \ref{sec:theory}.

The structure of this section is as follows. Section \ref{sec:theory:prelim} introduces the notation, definitions, and fundamental lemmas from statistical learning theory that will be used throughout the analysis. Section \ref{sec:theory:specialtools} specializes these general tools to the case of feed-forward \emph{deep neural networks}. In particular, we derive bounds on the covering numbers of a broad class of neural networks and analyze how coverings behave under compositions of the global and local (personalized) components of the overall DNN architecture. Finally, Section \ref{sec:app:theory:main} provides the complete statements of our main results (the generalization bounds), together with their full proofs and the resulting corollaries.


\subsection{Preliminaries}
\label{sec:theory:prelim}

Consider a general classification or regression task with feature vectors in a space $\mathcal{X}$ and labels or responses in $\mathbb{R}$. Let $\mathcal{X}\subseteq \mathbb{R}^d$ be a measurable feature space, where $d\in\mathbb{N}$ denotes the ambient dimension. In a parametric setting, let $\mathcal{F}\subseteq \mathbb{R}^{\mathcal{X}}$ be a family of functions mapping $\mathcal{X}$ to $\mathbb{R}$; that is, for every $f\in\mathcal{F}$, we have $f:\mathcal{X}\to\mathbb{R}$. We typically aim to minimize the expected loss of $f$ under a given loss function $\ell:\mathbb{R}\times\mathbb{R}\to\mathbb{R}$ and a joint feature–label distribution $P$ over $\mathcal{X}\times\mathbb{R}$:
$$
\inf_{f\in\mathcal{F}}~\mathbb{E}_P\!\left[
\ell\bigl(y,f(\mathbf{X})\bigr)
\right],
$$
where $(\mathbf{X},y)\sim P$. In this context, we define the $\varepsilon$-cover of $\mathcal{F}$ under the $\ell_{\infty}$ norm as follows.

\begin{definition}[$\varepsilon$-Cover and Covering Number]
For a function class $\mathcal{F}$ and $\varepsilon\ge 0$, the $\varepsilon$-cover 
$\mathsf{Cov}\!\left(\mathcal{F},\varepsilon,\|\cdot\|_{\infty}\right)$ is defined as
\begin{align}
\mathsf{Cov}\!\left(\mathcal{F},\varepsilon,\|\cdot\|_{\infty}\right)
\triangleq
\arg\min_{S\subset \mathcal{F}}
\left|
\left\{
S\,\middle|\,
\forall f\in\mathcal{F},~\exists\,\widehat{f}\in S
\text{ such that }
\|f-\widehat{f}\|_{\infty}\le \varepsilon
\right\}
\right|.
\end{align}
In words, $\mathsf{Cov}\!\left(\mathcal{F},\varepsilon,\|\cdot\|_{\infty}\right)$ is the smallest finite subset of $\mathcal{F}$ that approximates every function in $\mathcal{F}$ within $\varepsilon$ in the $\ell_{\infty}$ norm. The \emph{covering number} of $\mathcal{F}$ is then defined as
$$
\mathcal{N}\!\left(\mathcal{F},\varepsilon,\|\cdot\|_{\infty}\right)
\triangleq
\left|
\mathsf{Cov}\!\left(\mathcal{F},\varepsilon,\|\cdot\|_{\infty}\right)
\right|.
$$
\end{definition}

\begin{lemma}[From {\cite{mohri2018foundations}}]
\label{lemm:RadUniformBound}
Let $\mathcal{D}=\{\mathbf{X}_1,\ldots,\mathbf{X}_n\}$ consist of $n\ge 1$ i.i.d.\ samples from a fixed distribution $P$ supported on $\mathcal{X}$. Then, for any $P$-measurable function class $\mathcal{F}$ and any $\zeta\in(0,1)$, the following uniform convergence bound holds:
\begin{align}
\mathbb{P}\Biggl(
\sup_{f\in\mathcal{F}}
\biggl|
\mathbb{E}_P\!\left[f(\mathbf{X})\right]
-
\frac{1}{n}\sum_{i=1}^{n} f(\mathbf{X}_i)
\biggr|
\le
\inf_{\varepsilon\ge 0}
\Biggl\{
\varepsilon
+
\sqrt{
\frac{
\log \mathcal{N}\!\left(
\mathcal{F},\varepsilon,\|\cdot\|_{\infty}
\right)
}{n}
}
\Biggr\}
+
\sqrt{\frac{\log (2/\zeta)}{2n}}
\Biggr)
\ge 1-\zeta .
\nonumber
\end{align}
where the probability is taken over the random draw of the sample $\mathcal{D}$.
\end{lemma}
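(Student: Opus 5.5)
The plan is the classical finite-cover argument: Hoeffding's inequality on each element of a finite $\varepsilon$-cover, a union bound over the cover, and an $\ell_\infty$ approximation step to pass from the cover to all of $\mathcal{F}$. As in the source \cite{mohri2018foundations}, I will work with functions taking values in $[0,1]$; this is implicit in the statement, since the term $\sqrt{\log(2/\zeta)/(2n)}$ is the Hoeffding deviation for $[0,1]$-valued variables. I will fix a single \emph{deterministic} $\varepsilon\ge 0$ throughout. The right-hand side involves $\inf_{\varepsilon}$ of a non-random function of $\varepsilon$, so it suffices to prove the bound for each fixed $\varepsilon$ with probability $1-\zeta$. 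One then takes $\varepsilon$ to be a (near-)minimizer; if the infimum is not attained, a limiting argument over a sequence $\varepsilon_j$ approaching it gives the claim, because the events are monotone in the bound.

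\textbf{Step 1: finite class.} Let $S=\mathsf{Cov}(\mathcal{F},\varepsilon,\|\cdot\|_\infty)$ with $|S|=N$. For each $\widehat f\in S$, the two-sided Hoeffding inequality gives
\[
\mathbb{P}\bigl(|\mathbb{E}_P\widehat f-\widehat{\mathbb{E}}\widehat f|>s\bigr)\le 2e^{-2ns^2}.
\]
A union bound over $S$ with $s=\sqrt{\log(2N/\zeta)/(2n)}$ shows that, with probability at least $1-\zeta$, every element of $S$ deviates by at most this $s$. Subadditivity of the square root then gives
\[
s\le\sqrt{\frac{\log N}{2n}}+\sqrt{\frac{\log(2/\zeta)}{2n}} .
\]

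\textbf{Step 2: approximation.} For arbitrary $f\in\mathcal{F}$, pick $\widehat f\in S$ with $\|f-\widehat f\|_\infty\le\varepsilon$ and write
\[
\mathbb{E}f-\widehat{\mathbb{E}}f=(\mathbb{E}\widehat f-\widehat{\mathbb{E}}\widehat f)+(\mathbb{E}g-\widehat{\mathbb{E}}g),\qquad g=f-\widehat f .
\]
The first bracket is controlled by Step~1.

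\textbf{Step 3: the main obstacle.} The difficulty is the constant in front of $\varepsilon$. Since $g\in[-\varepsilon,\varepsilon]$, the naive bound is only $|\mathbb{E}g-\widehat{\mathbb{E}}g|\le 2\varepsilon$. There is slack in the complexity term: Step~1 produces $\sqrt{\log N/(2n)}$, whereas the statement allows $\sqrt{\log N/n}$, leaving a spare $(1-2^{-1/2})\sqrt{\log N/n}$. That slack does not absorb an extra $\varepsilon$ in general, so I would handle this step by reparametrization. Running Steps~1--2 with a cover at radius $\varepsilon/2$ gives
\[
\sup_{f\in\mathcal{F}}|\mathbb{E}f-\widehat{\mathbb{E}}f|\le \varepsilon+\sqrt{\frac{\log\mathcal{N}(\mathcal{F},\varepsilon/2,\|\cdot\|_\infty)}{n}}+\sqrt{\frac{\log(2/\zeta)}{2n}}
\]
with probability at least $1-\zeta$. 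This matches the paper's statement only if the covering number is measured at scale $\varepsilon/2$, that is, with a factor-2 rescaling of $\varepsilon$ inside the $\inf_\varepsilon$. As written with $\mathcal{N}(\mathcal{F},\varepsilon,\cdot)$, the argument does not close. What this plan proves is the $\varepsilon/2$-rescaled version, which differs from the statement by a constant factor inside the infimum and is what the downstream uses in the paper actually need. The factor-2 gap is therefore the step I flag as not settled by this plan.
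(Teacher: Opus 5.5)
The factor-$2$ step you flag cannot be closed by any argument, because the inequality as printed is false even for $[0,1]$-valued classes. The paper gives no proof of its own; it only cites the textbook, so nothing there avoids the factor $2$.

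Here is a counterexample. Take $P$ uniform on $[0,1]$ and let $\mathcal{F}$ be the class of all Borel functions $[0,1]\to[0,1]$. For every realization of the sample, $f=1-\mathbf{1}_{\{\mathbf{X}_1,\ldots,\mathbf{X}_n\}}\in\mathcal{F}$ has $\mathbb{E}_P f=1$ and empirical mean $0$, so the supremum equals $1$. But $f_0\equiv\tfrac12\in\mathcal{F}$ is a one-point $\tfrac12$-cover, so $\log\mathcal{N}(\mathcal{F},\tfrac12,\|\cdot\|_\infty)=0$. The right-hand side is therefore at most $\tfrac12+\sqrt{\log(2/\zeta)/(2n)}$, which is $<1$ once $n>2\log(2/\zeta)$, and the claimed event then has probability $0$.

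Since $\log\mathcal{N}=0$ here, this also confirms your observation that the spare slack in the complexity term cannot absorb the second $\varepsilon$. The approximation step genuinely costs $2\varepsilon$: once for the population mean and once for the empirical mean. The correct statement is the one you prove, with $\mathcal{N}(\mathcal{F},\varepsilon/2,\|\cdot\|_\infty)$ inside the infimum. What Step~1 actually yields is slightly stronger: $\inf_{\delta\ge0}\{2\delta+\sqrt{\log\mathcal{N}(\mathcal{F},\delta,\|\cdot\|_\infty)/(2n)}\}$. Your Steps~1--3, together with the monotone-limit treatment of the infimum, establish it.

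The range restriction you impose is likewise necessary, not merely implicit. For a singleton class $\{f\}$ the lemma reduces to Hoeffding's inequality with range $1$. That fails, e.g.\ for $f(x)=10x$ with $P$ uniform on $[0,1]$, $n=1$, $\zeta=\tfrac12$: the claimed event has probability about $0.17<1-\zeta$. So the hypothesis should require $\mathcal{F}$ to take values in an interval of length $1$, or the deviation terms should be scaled by the range.

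The same dropped factor $2$ reappears in the union-bound step of the proof of Theorem~\ref{thm:main}, where the cover's approximation error is counted only once. There it only changes constants hidden in $\widetilde{\mathcal{O}}(\cdot)$, so your remark that the downstream results survive is correct.
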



\subsection{Special Tools for Deep Neural Networks}
\label{sec:theory:specialtools}

Assume that we work with deep neural network architectures. The core of our analysis is general, and the reader may extend it to a broad range of hypothesis classes. For instance, convolutional neural networks or DNNs with sparsity constraints can be treated by straightforward adaptations of the arguments presented here. We emphasize that our analysis relies on covering-number bounds, which are typically vacuous for deep models in low-sample regimes. Obtaining non-vacuous and fully general bounds for deep neural networks remains largely an open problem. Nevertheless, our goal is to characterize how the generalization guarantees scale with parameters such as $K$, $n_{1:K}$, the depth $L$, and the number of boosting iterations $T$.

Consider a feedforward fully connected neural network of depth $L$. Let $N_\ell$, for $\ell=0,1,\ldots,L-1$, denote the width of the network at the $\ell$-th layer (for a single-output network we set $N_L=1$). The input dimension satisfies $N_0=d$. For $\ell=1,2,\ldots,L-1$, the widths $N_{\ell}$ may be arbitrary. The weight matrix connecting layer $\ell$ to $\ell+1$ therefore has dimensions $N_{\ell}\times N_{\ell+1}$ in addition to a bias vector of dimension $N_{\ell+1}$. We denote the activation function at each layer by $\sigma:\mathbb{R}\to\mathbb{R}$, which we assume to be increasing and Lipschitz.

Recall that our method uses a boosting strategy with $T$ iterations. At iteration $t\in[T]$, we increase the depth of the personalized portion of the network by one layer, starting from zero (corresponding to the vanilla FedAvg method). Let $\Phi^{(t)}$ and $\Psi^{(t)}$, for $t=1,2,\ldots,T$, denote the weight spaces of the personalized and common components of the architecture at stage $t$. Thus,
$$
\Phi^{(t)}\subseteq \mathbb{R}^{D_t}
\quad,
\quad
\Psi^{(t)}\subseteq \mathbb{R}^{D'_t},
$$
where parameter sizes are defined as:
\begin{align}
D_t \triangleq \sum_{l=L-t+1}^{L} \left(N^{(t)}_{l}N^{(t)}_{l-1} + N^{(t)}_l\right)
~~ (t>1)\quad,\quad
D'_t \triangleq \sum_{l=1}^{L-t} \left(N^{(t)}_{l}N^{(t)}_{l-1} + N^{(t)}_l\right).
\end{align}
Note that we define $D_1=0$, since there are no personalized layers at iteration $t=1$. We use $\subseteq$ rather than equality to allow for possible regularization (e.g., sparsity constraints or $\ell_p$-norm constraints) that may restrict the feasible weight space. However, in the final bounds we assume no such regularization is enforced. However, similar to almost other works in the literature we assume all the weights of the neural networks are bounded by a sufficiently large positive constant $B\ge0$, i.e., the cannot reach $\infty$ during the training. Otherwise, this class does not admit a bounded complexity measure.

For each $\phi\in\Phi^{(t)}$, let $g_{\phi}$ denote the neural network \emph{function} corresponding to the weights $\phi$. Similarly, for each $\psi\in\Psi^{(t)}$, let $f_{\psi}$ denote the network function associated with $\psi$, mapping the input $\mathbf{X}$ to its representation after the $t$-th layer. Formally,
$$
f_{\psi}:\mathbb{R}^d\to \mathbb{R}^{N_{L-t}}
\quad,\quad
g_{\phi}:\mathbb{R}^{N_{L-t}}\to \mathbb{R}.
$$
In the following subsection, we derive covering number bounds for the classes of functions $\{f_{\psi}\}$ and $\{g_{\phi}\}$, as well as for their compositions $g_{\phi}\circ f_{\psi}$.

\begin{lemma}[Covering Number of Fully-Connected DNNs] 
\label{lemma:DNNCoveringBound}
Let $N_0,N_1,\dots,N_L$ be positive integers with $N_L=1$. Fix an activation function $\sigma:\mathbb R\to\mathbb R$ which is $\theta$--Lipschitz, i.e.
$
|\sigma(u)-\sigma(v)|\le \theta|u-v|~\text{for all }u,v\in\mathbb R,
$
and assume $|\sigma(0)|\le S_0$ for some $S_0\ge0$. Let the input domain be
$
\mathcal X=\{\boldX\in\mathbb{R}^{N_0}:\|\boldX\|_\infty\le R\},
$
for some $R\ge0$. Consider the class $\mathcal F$ of fully-connected feed-forward networks with $L$ layers where, for each layer $l=1,\dots,L$, the weight matrix is $W^{(l)}\in\mathbb R^{N_l\times N_{l-1}}$ and bias vector $b^{(l)}\in\mathbb R^{N_l}$, and the layer map is
$$
a^{(l)}=W^{(l)} z^{(l-1)}+b^{(l)}~,~ z^{(l)}=\sigma\big(a^{(l)}\big),
$$
with $z^{(0)}=\boldX$ and $z^{(L)}$ the scalar network output. Assume every entry of every parameter (all weights and all bias components) is bounded in absolute value by some $B>0$.
Then for every $\varepsilon>0$ the covering number of $\mathcal F$ with respect to the sup-norm on $\mathcal X$ satisfies the explicit bound
$$
\log\mathcal{N}\left(\mathcal{F},\varepsilon,\|\cdot\|_{\infty}\right)
\leq
\left[
\sum_{l=1}^L\big(N_lN_{l-1}+N_l\big)
\right]
\left(
\log\left(\tfrac{1}{\varepsilon}\right)+\widetilde{\mathcal{O}}(L)
\right),
$$
where $\widetilde{\mathcal{O}}(\cdot)$ hides all logarithmic dependence on $B,R,\theta$ and $S_0$.
\end{lemma}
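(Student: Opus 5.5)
The plan is to build the cover directly in parameter space and transfer it to function space through a Lipschitz-in-parameters estimate. Let $P=\sum_{l=1}^L(N_lN_{l-1}+N_l)$ be the total number of parameters. The parameter box is $[-B,B]^P$. For a grid spacing $\delta>0$, place a uniform grid on each coordinate. This gives at most $(2B/\delta+1)^P$ grid points, and every admissible parameter vector lies within $\ell_\infty$ distance $\delta$ of some grid point. The grid networks form the candidate cover. It remains to pick $\delta$ so that any two networks whose parameters differ by at most $\delta$ entrywise also differ by at most $\varepsilon$ in sup-norm on $\mathcal X$.

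First I would bound the activations. By induction, set $M_0=R$. Since $|\sigma(u)|\le \theta|u|+S_0$, we get
\[
\|z^{(l)}\|_\infty\le M_l:=\theta B(N_{l-1}M_{l-1}+1)+S_0 .
\]
So $M_l\le (c\,\theta B \bar N)^l(R+1)$ for $\bar N=\max_l N_l$ and an absolute constant $c$, possibly after absorbing $S_0$.

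Second, I would propagate the parameter perturbation. Let $(W,b)$ and $(W',b')$ satisfy $\|W^{(l)}-W'^{(l)}\|_{\max},\|b^{(l)}-b'^{(l)}\|_\infty\le\delta$, and let $e_l=\sup_{\boldX}\|z^{(l)}-z'^{(l)}\|_\infty$. Then
\[
e_l\le \theta\big(N_{l-1}B\,e_{l-1}+\delta(N_{l-1}M_{l-1}+1)\big),
\]
with $e_0=0$. Unrolling the recursion gives
\[
e_L\le \delta\cdot L\,(c'\theta B\bar N)^{L}(R+1)=:\delta\,\Lambda .
\]

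Finally, I would choose $\delta=\varepsilon/\Lambda$. Then
\[
\log\mathcal N\le P\log(2B\Lambda/\varepsilon+1)\le P\big(\log(1/\varepsilon)+\log(2B+\ldots)+\log L+L\log(c'\theta B\bar N)+\log(R+1)\big).
\]
The $L$-dependence is linear, multiplied by logarithms of $B,\theta,\bar N,R,S_0$, which is the claimed $\widetilde{\mathcal O}(L)$. The cover consists of networks inside $\mathcal F$, as the definition requires, because the grid points can be taken inside the box.

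The main obstacle is the bookkeeping in the perturbation recursion. The term $\log\bar N$ involves widths, and these are not literally among the quantities the $\widetilde{\mathcal O}$ notation promises to hide. I would handle this by stating that $\widetilde{\mathcal O}(L)$ also absorbs $\log \bar N$, or by noting that $\log\bar N\le \log P$. With either convention, the remaining step is the routine verification that the product of per-layer factors has logarithm of order $L$ times these logarithms.
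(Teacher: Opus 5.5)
Your proposal is correct and follows the paper's proof essentially step for step: the same activation bound $M_l$, the same affine perturbation recursion unrolled into a Lipschitz-in-parameters constant (the paper's $C_{\mathrm{net}}$), and the same uniform grid of mesh $\varepsilon/C_{\mathrm{net}}$ on the parameter box $[-B,B]^P$. Your observation that $\log\bar N$ must also be absorbed into $\widetilde{\mathcal O}(L)$ is accurate, since the paper's $\log C_{\mathrm{net}}$ silently contains the same width factors. One small fix: your closed forms for $M_l$ and $\Lambda$ should use $\max\{1,c\,\theta B\bar N\}$ and fold $S_0+\theta B$ into the constant, because otherwise they fail when $\theta B\bar N$ is small; this changes only logarithmic terms.
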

\begin{proof}
The proof proceeds in three steps: i) Bound how parameter perturbations (in entrywise $\ell_\infty$) affect layer pre-activations and activations; ii) Iterate this bound to obtain a Lipschitz-in-parameters constant $C_{\mathrm{net}}$ for the map $\theta\mapsto f_\theta$ on the domain $\mathcal X$;
iii) The total number of scalar parameters is
$D\triangleq\sum_{l=1}^L\big(N_lN_{l-1}+N_l\big)$. Discretize the parameter hypercube $[-B,B]^D$ with mesh size $\delta=\varepsilon/C_{\mathrm{net}}$ and count grid points.
\\[1mm]
\noindent
\textbf{Stage i)}
Fix two networks with parameter collections
$$
\Theta=\{W^{(l)},b^{(l)}\}_{l=1}^L,\qquad \widetilde\Theta=\{\widetilde W^{(l)},\widetilde b^{(l)}\}_{l=1}^L,
$$
where every scalar parameter (every entry of every $W^{(l)}$ and every coordinate of every $b^{(l)}$) lies in $[-B,B]$. Denote by $z^{(\ell)}$ and $\widetilde z^{(\ell)}$ the activations produced by $\Theta$ and $\widetilde\Theta$, respectively, on the same input $x\in\mathcal X$. Set
$$
\Delta_\ell\triangleq\|z^{(\ell)}-\widetilde z^{(\ell)}\|_\infty\quad(\ell=0,\dots,L),
$$
so $\Delta_0=\|x-x\|_\infty=0$, and the final output discrepancy is $\Delta_L$ (the scalar absolute difference). For a matrix $A\in\mathbb R^{m\times n}$ denote by $\|A\|_{\infty,\mathrm{row}}$ the usual matrix infinity norm (maximum absolute row sum). If every entry of $A$ is bounded by $u$ in absolute value, then
$$
\|A\|_{\infty,\mathrm{row}}\le n\,u,
$$
since each row sum has at most $n$ terms each of magnitude $\le u$. For layer $\ell$ we have
$$
a^{(\ell)}- \widetilde a^{(\ell)}=(W^{(\ell)}-\widetilde W^{(\ell)})\,z^{(\ell-1)}+\widetilde W^{(\ell)}\,(z^{(\ell-1)}-\widetilde z^{(\ell-1)})+(b^{(\ell)}-\widetilde b^{(\ell)}).
$$
Taking $\ell_\infty$ norms and using submultiplicativity with the matrix row-norm,
$$
\|a^{(\ell)}- \widetilde a^{(\ell)}\|_\infty
\le \|W^{(\ell)}-\widetilde W^{(\ell)}\|_{\infty,\mathrm{row}}\,\|z^{(\ell-1)}\|_\infty
+\|\widetilde W^{(\ell)}\|_{\infty,\mathrm{row}}\,\|z^{(\ell-1)}-\widetilde z^{(\ell-1)}\|_\infty
+\|b^{(\ell)}-\widetilde b^{(\ell)}\|_\infty.
$$
If every entrywise parameter perturbation is at most $\delta$, i.e.
$$
\max\{\|W^{(\ell)}-\widetilde W^{(\ell)}\|_\infty,\|b^{(\ell)}-\widetilde b^{(\ell)}\|_\infty\}\le\delta\quad\text{for all }\ell,
$$
then, since each $W^{(\ell)}-\widetilde W^{(\ell)}$ has $N_{\ell-1}$ columns and each entry bounded by $\delta$,
$
\|W^{(\ell)}-\widetilde W^{(\ell)}\|_{\infty,\mathrm{row}}\le N_{\ell-1}\,\delta.$
Also, because entries of each $\widetilde W^{(\ell)}$ are bounded by $B$,
$
\|\widetilde W^{(\ell)}\|_{\infty,\mathrm{row}}\le N_{\ell-1}\,B.
$
Thus
$$
\|a^{(\ell)}- \widetilde a^{(\ell)}\|_\infty
\le N_{\ell-1}\delta\,\|z^{(\ell-1)}\|_\infty + N_{\ell-1}B\,\Delta_{\ell-1} + \delta.
$$
Applying the $\theta$--Lipschitz property of $\sigma$ componentwise,
$$
\Delta_\ell=\|z^{(\ell)}-\widetilde z^{(\ell)}\|_\infty
=\|\sigma(a^{(\ell)})-\sigma(\widetilde a^{(\ell)})\|_\infty
\le \theta\,\|a^{(\ell)}- \widetilde a^{(\ell)}\|_\infty.
$$
Combine the last two displays to obtain the per-layer recursion
$$
\Delta_\ell \le \theta\big(N_{\ell-1}\delta\,\|z^{(\ell-1)}\|_\infty + N_{\ell-1}B\,\Delta_{\ell-1} + \delta\big).
$$
\\[1mm]
\noindent
\textbf{Stage ii)}
Define the deterministic sequences $\{M_\ell\}_{\ell=0}^L$ by
$$
M_0\triangleq R,\quad M_\ell\triangleq\theta\big(N_{\ell-1} B\,M_{\ell-1}+B\big)+S_0\quad(\ell=1,\dots,L).
$$
Further, for each $t\in\{1,\dots,L\}$ define the product
$
P_{t}\triangleq\prod_{j=t+1}^{L}\big(\theta\,N_{j-1}B\big),
$
with the convention that an empty product equals $1$ (so $P_L=1$).
Set
$$
C_{\mathrm{net}}\triangleq\theta\sum_{t=1}^L P_{t}\,\big(N_{t-1}M_{t-1}+1\big).
$$
From
$
a^{(\ell)}=W^{(\ell)} z^{(\ell-1)}+b^{(\ell)}
$
we get
$$
\|a^{(\ell)}\|_\infty \le \|W^{(\ell)}\|_{\infty,\mathrm{row}}\,\|z^{(\ell-1)}\|_\infty + \|b^{(\ell)}\|_\infty
\le N_{\ell-1}B\,\|z^{(\ell-1)}\|_\infty + B.
$$
Thus
$$
\|z^{(\ell)}\|_\infty=\|\sigma(a^{(\ell)})\|_\infty
\le \theta\|a^{(\ell)}\|_\infty + S_0
\le \theta\big(N_{\ell-1}B\,\|z^{(\ell-1)}\|_\infty + B\big)+S_0.
$$
Iterating this with $\|z^{(0)}\|_\infty\le R$ produces the sequence $M_\ell$, and therefore
$
\|z^{(\ell)}\|_\infty\le M_\ell~(\ell=0,\dots,L).
$
Insert this bound into the recursion for $\Delta_\ell$ to obtain
$$
\Delta_\ell \le \theta N_{\ell-1}B\,\Delta_{\ell-1} + \theta\delta\big(N_{\ell-1}M_{\ell-1}+1\big).
$$
Unrolling this affine recursion with $\Delta_0=0$ yields
$$
\Delta_L
\le \theta\delta\sum_{t=1}^{L}\Big(\prod_{j=t+1}^{L}\big(\theta N_{j-1}B\big)\Big)\,\big(N_{t-1}M_{t-1}+1\big)
= \delta\,C_{\mathrm{net}}.
$$
Thus for all $\boldX\in\mathcal X$,
$
|f_\Theta(\boldX)-f_{\widetilde\Theta}(\boldX)|\le \delta\,C_{\mathrm{net}}.
$
\\[1mm]
\noindent
\textbf{Stage iii)}
The parameter vector lies in $[-B,B]^D$. A uniform grid of mesh
$
\delta=\frac{\varepsilon}{C_{\mathrm{net}}}
$
forms a $\delta$-cover of this hypercube in entrywise $\ell_\infty$ norm with at most $(2B/\delta)^D$ points. By the Lipschitz property proved above, each such grid point induces an $\varepsilon$-ball in function space (sup-norm over $\mathcal X$). Therefore
$$
\mathcal{N}\left(\mathcal{F},\varepsilon,\|\cdot\|_{\infty}\right)
\le\Big(\frac{2B}{\delta}\Big)^{D}
=\Big(\frac{2B\,C_{\mathrm{net}}}{\varepsilon}\Big)^{D}.
$$
This proves the lemma.
\end{proof}

\begin{lemma}[Compositional Coverage]
\label{lemma:compositionCovering}
Assume function family $\mathcal{F}$ mapping $\mathcal{X}$ to $\mathbb{R}^l$, and function family $\mathcal{G}$ mapping $\mathbb{R}^l$ to $\mathbb{R}$. Also, assume each function $f\in\mathcal{F}$ is $\gamma$-Lipschitz in $\ell_{\infty}$-norm for some fixed $\gamma\ge0$. Let us define $\mathcal{F}\circ\mathcal{G}\triangleq\left\{f\circ g\vert~ f\in\mathcal{F},~g\in\mathcal{G}\right\}$. Then, for any $\varepsilon_1,\varepsilon_2\ge0$, the finite set
$$
\mathsf{Cov}\left(\mathcal{F},\varepsilon_1,\Vert\cdot\Vert_{\infty}\right)
\circ
\mathsf{Cov}\left(\mathcal{G},\varepsilon_2,\Vert\cdot\Vert_{\infty}\right)
$$
is an $(\varepsilon_1+\gamma\varepsilon_2)$-cover for $\mathcal{F}\circ\mathcal{G}$. Thus, the covering number of $\mathcal{F}\circ\mathcal{G}$ can be bounded by
\begin{align}
\log\mathcal{N}\left(\mathcal{F}\circ\mathcal{G},
\varepsilon,\Vert\cdot\Vert_{\infty}\right)
\leq
\inf_{\varepsilon_1,\varepsilon_2\ge0}
\left\{
\log\mathcal{N}\left(\mathcal{F},
\varepsilon_1,\Vert\cdot\Vert_{\infty}\right)
+
\log
\mathcal{N}\left(\mathcal{G},
\varepsilon_2,\Vert\cdot\Vert_{\infty}\right)
\bigg\vert~
\varepsilon_1+\gamma\varepsilon_2\ge\varepsilon
\right\}.
\nonumber
\end{align}
\end{lemma}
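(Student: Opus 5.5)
The argument is a direct triangle-inequality argument on the composed functions, followed by a counting step. I read the composition with the $\gamma$-Lipschitz family as the \emph{outer} map, since this is what makes the claimed radius $\varepsilon_1+\gamma\varepsilon_2$ come out. The statement labels domains so that $g\in\mathcal{G}$ is applied first and $f\in\mathcal{F}$ second, with $f$ being $\gamma$-Lipschitz in $\ell_\infty$. I will write the argument for a generic composition $f\circ g$ with $f$ outer and $\gamma$-Lipschitz and $g$ inner. The same lines apply verbatim to the architectural use later, where $g_\phi$ is the outer head and $f_\psi$ the inner body. For vector-valued inner maps, $\|\cdot\|_\infty$ means the supremum over inputs of the coordinatewise maximum.

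\textbf{Approximation step.} Fix $\varepsilon_1,\varepsilon_2\ge0$ and an arbitrary $f\circ g$ in the composed class. By definition of $\mathsf{Cov}(\mathcal{F},\varepsilon_1,\|\cdot\|_\infty)$ and $\mathsf{Cov}(\mathcal{G},\varepsilon_2,\|\cdot\|_\infty)$, pick $\widehat f$ with $\|f-\widehat f\|_\infty\le\varepsilon_1$ and $\widehat g$ with $\|g-\widehat g\|_\infty\le\varepsilon_2$. For every input $\mathbf{X}$, insert the intermediate term $f(\widehat g(\mathbf{X}))$:
\begin{align*}
\bigl|f(g(\mathbf{X}))-\widehat f(\widehat g(\mathbf{X}))\bigr|
&\le \bigl|f(g(\mathbf{X}))-f(\widehat g(\mathbf{X}))\bigr|+\bigl|f(\widehat g(\mathbf{X}))-\widehat f(\widehat g(\mathbf{X}))\bigr| \\
&\le \gamma\,\|g(\mathbf{X})-\widehat g(\mathbf{X})\|_\infty+\|f-\widehat f\|_\infty
\le \gamma\varepsilon_2+\varepsilon_1 .
\end{align*}
In the second term the sup-norm bound on $f-\widehat f$ is evaluated at the point $\widehat g(\mathbf{X})$. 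This requires the cover of $\mathcal{F}$ to be uniform over a domain containing the range of all inner maps, which I will state as the implicit convention. Taking the supremum over $\mathbf{X}$ shows that the set of pairwise compositions of cover elements is an $(\varepsilon_1+\gamma\varepsilon_2)$-cover.

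\textbf{Counting step.} The product cover has cardinality at most $\mathcal{N}(\mathcal{F},\varepsilon_1,\|\cdot\|_\infty)\cdot\mathcal{N}(\mathcal{G},\varepsilon_2,\|\cdot\|_\infty)$, so its logarithm is the sum of the two log-covering numbers. Covering numbers are nonincreasing in the radius, so any pair with $\varepsilon_1+\gamma\varepsilon_2\le\varepsilon$ yields an $\varepsilon$-cover. The constraint in the statement is written as $\varepsilon_1+\gamma\varepsilon_2\ge\varepsilon$; I take the intended constraint to be $\le\varepsilon$, equivalently $=\varepsilon$ by monotonicity, since only that direction is justified. Taking the infimum over all feasible pairs gives the displayed bound.

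\textbf{Main obstacle.} The only delicate point is bookkeeping rather than analysis: identifying which family must be Lipschitz (the outer one), so that $\gamma$ multiplies the inner cover radius. One must also check that the outer cover is valid on the range of the inner functions. For the DNN application, the latter holds because Lemma~\ref{lemma:DNNCoveringBound} bounds intermediate activations by the constants $M_\ell$, so the head's cover can be taken on a bounded box of that radius.
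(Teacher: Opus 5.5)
Your proof is correct and is essentially the paper's argument: a triangle inequality through an intermediate composition, the $\gamma$-Lipschitz property of the outer map, and multiplication of the two cover cardinalities. The only cosmetic difference is the intermediate term. You insert $f(\widehat g(\boldX))$ and use the Lipschitz property of $f$ itself, whereas the paper inserts $\widehat f(g(\boldX))$ and uses that the cover element $\widehat f\in\mathcal{F}$ is $\gamma$-Lipschitz (you also fix the approximants explicitly rather than splitting infima as the paper loosely does). Your correction of the constraint to $\varepsilon_1+\gamma\varepsilon_2\le\varepsilon$ is right: with the stated $\ge$ one could take $\varepsilon_1,\varepsilon_2$ arbitrarily large, and the bound would fail in general. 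The paper's proof stops after the $(\varepsilon_1+\gamma\varepsilon_2)$-cover property and addresses neither this point nor the domain requirement on the outer cover that you flag.
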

\begin{proof}
Let us denote
$
\mathcal{S}\triangleq
\mathsf{Cov}\left(\mathcal{F},\varepsilon_1,\Vert\cdot\Vert_{\infty}\right)
\circ
\mathsf{Cov}\left(\mathcal{G},\varepsilon_2,\Vert\cdot\Vert_{\infty}\right).
$
First, we Obviously have
$\vert\mathcal{S}\vert=
\mathcal{N}(\mathcal{F},\varepsilon_1,\Vert\cdot\Vert_{\infty})\cdot\mathcal{N}(\mathcal{G},\varepsilon_2,\Vert\cdot\Vert_{\infty})$.
On the other hand, for any $f\in\mathcal{F}$ and $g\in\mathcal{G}$, we have
\begin{align}
\inf_{(\widehat{f},\widehat{g})\in\mathcal{S}}~
\sup_{\boldX\in\mathcal{X}}
~
\left\vert
f(g(\boldX))-\widehat{f}(\widehat{g}(\boldX))
\right\vert
\leq&
\inf_{(\widehat{f},\cdot)\in\mathcal{S}}~
\sup_{\boldX\in\mathcal{X}}
~
\left\vert
f(g(\boldX))-\widehat{f}({g}(\boldX))
\right\vert
\nonumber\\
&+
\inf_{(\widehat{f},\widehat{g})\in\mathcal{S}}~
\sup_{\boldX\in\mathcal{X}}
~
\left\vert
\widehat{f}(g(\boldX))-\widehat{f}(\widehat{g}(\boldX))
\right\vert
\nonumber\\
\leq&
\inf_{(\widehat{f},\cdot)\in\mathcal{S}}~
\left\Vert f-\widehat{f}\right\Vert_{\infty}
+
\gamma
\inf_{(\cdot,\widehat{g})\in\mathcal{S}}~
\left\Vert
g-\widehat{g}
\right\Vert_{\infty}.
\end{align}
Due to assumptions and also the property of a covering set, one can deduce
\begin{align}
\inf_{(\widehat{f},\widehat{g})\in\mathcal{S}}~
\sup_{\boldX\in\mathcal{X}}
~
\left\vert
f(g(\boldX))-\widehat{f}(\widehat{g}(\boldX))
\right\vert
\leq
\varepsilon_1+\gamma\varepsilon_2,
\end{align}
which completes the proof.
\end{proof}

\begin{lemma}[Covering Number of $T$-AdaBoost]
\label{lemma:TBoostingCovNum}
Assume a sequence of hypothesis sets denoted by $\mathcal{H}_1,\ldots,\mathcal{H}_T$ for some $T\in\mathbb{N}$, including functions from $\mathcal{X}$ to an arbitrary Euclidean space. Assume 
$\max_t\sup_{h\in\mathcal{H}_t}\Vert h\Vert_{\infty}\leq B_1$ for some fixed and arbitrary value $B_1\ge0$.
Let $T\!-\!\mathsf{AdaBoost}(\mathcal{H}_{1:T})$ be defined as the following function set
\begin{align}
T\!-\!\mathsf{AdaBoost}(\mathcal{H}_{1:T})
\triangleq
\left\{
\sum_{t=1}^{T}\beta_t h_t(\cdot)~\bigg\vert~
h_t\in\mathcal{H}_t~(\forall t\in[T]),~\beta_1,\ldots,\beta_T
\in[-B_2,B_2]
\right\},
\end{align}
for some arbitrary value $B_2\ge0$. Then, the following holds for the covering number of $T\!-\!\mathsf{AdaBoost}(\mathcal{H}_{1:T})$ for any given $\varepsilon>0$:
\begin{align}
\log\mathcal{N}&\left(
T\!-\!\mathsf{AdaBoost}(\mathcal{H}_{1:T}),\varepsilon,\Vert\cdot\Vert_{\infty}
\right)
\\
&\leq
\inf_{\varepsilon_0,\varepsilon_{1},\ldots,\varepsilon_T\ge0}
\left\{
T\log\left(\tfrac{2B_1}{\varepsilon_0}+1\right)
+
\sum_{t=1}^{T}
\log\mathcal{N}\left(
\mathcal{H}_t,\varepsilon_t,\Vert\cdot\Vert_{\infty}
\right)
~\Bigg\vert~
B_1T\varepsilon_0+
B_2\sum_{t=1}^{T}\varepsilon_t\ge\varepsilon
\right\}.
\nonumber
\end{align}
\end{lemma}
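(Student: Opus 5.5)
The plan is a direct product-cover construction. We discretize the boosting coefficients $(\beta_1,\ldots,\beta_T)$ and each base class $\mathcal{H}_t$ separately. We then take the Cartesian product of these finite sets as a candidate cover of $T\!-\!\mathsf{AdaBoost}(\mathcal{H}_{1:T})$, and bound the approximation error with a telescoping (triangle-inequality) argument in the spirit of Lemma~\ref{lemma:compositionCovering}. Taking logarithms turns the cardinality of the product into a sum, and optimizing over the free resolutions $\varepsilon_0,\varepsilon_1,\ldots,\varepsilon_T$ subject to the total-error constraint gives the stated infimum.

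\textbf{Construction.} Fix $\varepsilon_0,\varepsilon_1,\ldots,\varepsilon_T\ge 0$ with $B_1T\varepsilon_0+B_2\sum_t\varepsilon_t\ge\varepsilon$.
\begin{itemize}
\item For each $t$, let $\mathcal{C}_t\triangleq\mathsf{Cov}(\mathcal{H}_t,\varepsilon_t,\Vert\cdot\Vert_\infty)$. If this minimal cover is not contained in the hypothesis range, we first pass to an internal cover at the cost of a constant factor in $\varepsilon_t$.
\item Let $G$ be a uniform one-dimensional grid of mesh $\varepsilon_0$ on the coefficient interval, so that every admissible $\beta$ has a grid point $\widehat\beta\in G$ with $|\beta-\widehat\beta|\le\varepsilon_0$. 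Its cardinality is of the form $2(\text{range})/\varepsilon_0+1$, which gives the factor $T\log(2B_1/\varepsilon_0+1)$ after taking $T$ copies.
\item The candidate cover is $\mathcal{S}\triangleq\{\sum_t\widehat\beta_t\widehat h_t:\widehat\beta_t\in G,\ \widehat h_t\in\mathcal{C}_t\}$, with $\log|\mathcal{S}|\le T\log|G|+\sum_t\log\mathcal{N}(\mathcal{H}_t,\varepsilon_t,\Vert\cdot\Vert_\infty)$.
\end{itemize}

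\textbf{Error bound.} Given $\sum_t\beta_th_t$, choose $\widehat\beta_t$ nearest to $\beta_t$ in $G$ and $\widehat h_t\in\mathcal{C}_t$ with $\Vert h_t-\widehat h_t\Vert_\infty\le\varepsilon_t$. Then write
$$
\beta_th_t-\widehat\beta_t\widehat h_t=(\beta_t-\widehat\beta_t)h_t+\widehat\beta_t(h_t-\widehat h_t).
$$
The first term has sup-norm at most $\varepsilon_0B_1$, since $\Vert h_t\Vert_\infty\le B_1$. The second has sup-norm at most $B_2\varepsilon_t$, since $|\widehat\beta_t|\le B_2$; this requires the grid to be chosen inside the coefficient interval, which is easily arranged. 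Summing over $t$ gives total error at most $B_1T\varepsilon_0+B_2\sum_t\varepsilon_t$. This matches the constraint in the statement, so $\mathcal{S}$ is an $\varepsilon$-cover.

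\textbf{Main obstacle.} The only delicate point is bookkeeping of constants. The grid size naturally depends on the length of the coefficient interval, i.e.\ on $B_2$, whereas the statement writes $2B_1/\varepsilon_0$. I would first try to absorb this by symmetry, assuming $B_1$ dominates the relevant range or rescaling $\varepsilon_0$. If that fails, I would state the count with $\max(B_1,B_2)$; this only changes logarithmic factors hidden in $\widetilde{\mathcal{O}}$ downstream, so the bound as used later is unaffected. The secondary technicality of keeping cover elements internal to $\mathcal{H}_t$, needed to use $|\widehat\beta_t|\le B_2$ and the boundedness by $B_1$, is handled by the standard factor-two internal-cover argument.
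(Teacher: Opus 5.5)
Your construction matches the paper's: a product of a mesh-$\varepsilon_0$ coefficient grid with minimal $\varepsilon_t$-covers of each $\mathcal{H}_t$, followed by the triangle inequality. Your error estimate $\Vert f-\widehat f\Vert_\infty\le B_1T\varepsilon_0+B_2\sum_t\varepsilon_t$ is also correct. The gap is the last step. You fix the resolutions with $B_1T\varepsilon_0+B_2\sum_t\varepsilon_t\ge\varepsilon$ and then say the error bound ``matches the constraint in the statement, so $\mathcal{S}$ is an $\varepsilon$-cover.'' An error bounded by a quantity that is \emph{at least} $\varepsilon$ gives no bound by $\varepsilon$.

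This cannot be repaired inside the argument, because with the constraint in this direction the statement is false. Take $\varepsilon_t\ge 2B_1$ for $t\ge1$, so any single element covers $\mathcal{H}_t$ and $\log\mathcal{N}(\mathcal{H}_t,\varepsilon_t,\Vert\cdot\Vert_\infty)=0$, and let $\varepsilon_0\to\infty$. The right-hand side then tends to $0$, which would force $\mathcal{N}\le1$. Already $T=1$, $\mathcal{H}_1=\{x\mapsto 1\}$, $B_1=B_2=1$, $\varepsilon=0.1$ is a counterexample: constants in $[-1,1]$ need at least $10$ centers.

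The constraint must read $B_1T\varepsilon_0+B_2\sum_t\varepsilon_t\le\varepsilon$, and with that correction your argument goes through verbatim. The paper's own proof makes exactly the same slip (``this right-hand side is at least $\varepsilon$, so $\mathcal{C}$ is indeed an $\varepsilon$--cover''), and the constraint in Lemma~\ref{lemma:compositionCovering} has the same wrong direction. The downstream use in the proof of Theorem~\ref{thm:main} sets $\varepsilon$ equal to the accumulated error, so it is consistent with the corrected direction.

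On the constant you flagged, your diagnosis is right: the grid on $[-B_2,B_2]$ has at most $2B_2/\varepsilon_0+1$ points. The paper computes precisely $|G_0|\le(2B_2/\varepsilon_0+1)^T$ and then writes $2B_1$ one line later. Your first idea of absorbing this ``by symmetry'' cannot work, because even with the corrected constraint the $B_1$ version is false when $B_2\gg B_1$. Take $T=1$, $\mathcal{H}_1=\{x\mapsto 1\}$, $B_1=1$, $B_2=100$, $\varepsilon=1$. The admissible choice $\varepsilon_0=1$, $\varepsilon_1=0$ makes the right-hand side $\log 3$, while constants in $[-100,100]$ need at least $100$ centers. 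So go straight to your fallback, $2B_2/\varepsilon_0$ or $2\max(B_1,B_2)/\varepsilon_0$, which only affects logarithmic factors later.

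A minor point: your splitting $(\beta_t-\widehat\beta_t)h_t+\widehat\beta_t(h_t-\widehat h_t)$ is the mirror image of the paper's $\beta_t(h_t-\tilde h_t)+(\beta_t-\tilde\beta_t)\tilde h_t$. Yours uses the bound $B_1$ on the original $h_t$ and only needs the grid to lie inside $[-B_2,B_2]$. So the internal-cover discussion is unnecessary, and in any case the paper's $\mathsf{Cov}$ is by definition a subset of the class.
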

\begin{proof}
Fix $\varepsilon>0$ and let $\varepsilon_0,\varepsilon_1,\dots,\varepsilon_T\ge 0$ be arbitrary numbers satisfying
$$
B_1T\varepsilon_0+B_2\sum_{t=1}^T\varepsilon_t\ge\varepsilon.
$$
For each $t\in[T]$ let $\mathsf{Cov}_t$ be an $\varepsilon_t$--net of $\mathcal{H}_t$ with respect to the $\Vert\cdot\Vert_{\infty}$ metric and with minimal cardinality, so
$$
|\mathsf{Cov}_t|=\mathcal{N}\bigl(\mathcal{H}_t,\varepsilon_t,\Vert\cdot\Vert_{\infty}\bigr).
$$
We next construct a finite set of coefficient vectors that will discretize the coefficients $\beta_1,\dots,\beta_T$. For each coordinate $t$ define the one-dimensional grid
$$
G^{(t)}_0=\bigl\{k\varepsilon_0\;\big|\;k\in\mathbb{Z},\ |k\varepsilon_0|\le B_2\bigr\}.
$$
By construction every $\beta_t\in[-B_2,B_2]$ has a nearest neighbor $\tilde\beta_t\in G^{(t)}_0$ with
$
|\beta_t-\tilde\beta_t|\le\varepsilon_0.
$
Let $G_0=\prod_{t=1}^T G^{(t)}_0$ be the Cartesian product grid for the $T$ coefficients. The cardinality of $G^{(t)}_0$ satisfies $|G^{(t)}_0|\le 2B_2/\varepsilon_0+1$, hence
$$
|G_0|\le\prod_{t=1}^T\Bigl(2B_2/\varepsilon_0+1\Bigr)=
\Bigl(2B_2/\varepsilon_0+1\Bigr)^T.
$$
Define the candidate cover $\mathcal{C}$ of $T\!-\!\mathsf{AdaBoost}(\mathcal{H}_{1:T})$ by
$$
\mathcal{C}=\left\{\sum_{t=1}^T \tilde\beta_t \tilde h_t \ \Big|\ \tilde\beta=(\tilde\beta_1,\dots,\tilde\beta_T)\in G_0,\ \tilde h_t\in\mathsf{Cov}_t\right\}.
$$
The cardinality of $\mathcal{C}$ satisfies
$$
|\mathcal{C}|\le |G_0|\cdot\prod_{t=1}^T |\mathsf{Cov}_t|
\le \Bigl(\tfrac{2B_1}{\varepsilon_0}+1\Bigr)^T\cdot\prod_{t=1}^T
\mathcal{N}\bigl(\mathcal{H}_t,\varepsilon_t,\Vert\cdot\Vert_{\infty}\bigr).
$$
It remains to verify that $\mathcal{C}$ is an $\varepsilon$--cover of $T\!-\!\mathsf{AdaBoost}(\mathcal{H}_{1:T})$ in the $\Vert\cdot\Vert_{\infty}$ metric. Let $f\in T\!-\!\mathsf{AdaBoost}(\mathcal{H}_{1:T})$ be arbitrary; write $f=\sum_{t=1}^T\beta_t h_t$ with $h_t\in\mathcal{H}_t$ and $\beta_t\in[-B_2,B_2]$. For each $t$ choose $\tilde\beta_t\in G^{(t)}_0$ with $|\beta_t-\tilde\beta_t|\le\varepsilon_0$ and choose $\tilde h_t\in\mathsf{Cov}_t$ with $\Vert h_t-\tilde h_t\Vert_{\infty}\le\varepsilon_t$. Consider the approximant $\tilde f=\sum_{t=1}^T\tilde\beta_t\tilde h_t\in\mathcal{C}$. Then, for every $\boldX\in\mathcal{X}$,
\begin{align*}
|f(\boldX)-\tilde f(\boldX)|
&=\Bigl|\sum_{t=1}^T\bigl(\beta_t h_t(\boldX)-\tilde\beta_t\tilde h_t(\boldX)\bigr)\Bigr|
\\
&\le\sum_{t=1}^T\Bigl|\beta_t\bigl(h_t(\boldX)-\tilde h_t(\boldX)\bigr)\Bigr|
+\sum_{t=1}^T\Bigl|(\beta_t-\tilde\beta_t)\tilde h_t(\boldX)\Bigr|.
\nonumber
\end{align*}
The first sum is bounded by
$
\sum_{t=1}^T |\beta_t|\cdot\Vert h_t-\tilde h_t\Vert_{\infty}
\le B_2\sum_{t=1}^T\varepsilon_t,
$
since $|\beta_t|\le B_2$. The second sum is bounded by
$$
\sum_{t=1}^T |\beta_t-\tilde\beta_t|\cdot\Vert \tilde h_t\Vert_{\infty}
\le \sum_{t=1}^T \varepsilon_0\cdot B_1 = B_1T\varepsilon_0,
$$
because $\Vert\tilde h_t\Vert_{\infty}\le B_1$ for every $\tilde h_t\in\mathsf{Cov}_t$. Combining the two bounds yields
$$
\Vert f-\tilde f\Vert_{\infty}\le B_1T\varepsilon_0+B_2\sum_{t=1}^T\varepsilon_t.
$$
By the choice of $\varepsilon_0,\varepsilon_1,\dots,\varepsilon_T$ this right-hand side is at least $\varepsilon$, so $\mathcal{C}$ is indeed an $\varepsilon$--cover. Taking logarithms and optimizing over the free parameters $\varepsilon_0,\varepsilon_1,\dots,\varepsilon_T\ge0$ yields
$$
\log\mathcal{N}\bigl(T\!-\!\mathsf{AdaBoost}(\mathcal{H}_{1:T}),\varepsilon,\Vert\cdot\Vert_{\infty}\bigr)
\le
T\log\Bigl(\tfrac{2B_1}{\varepsilon_0}+1\Bigr)
+\sum_{t=1}^T
\log\mathcal{N}\bigl(\mathcal{H}_t,\varepsilon_t,\Vert\cdot\Vert_{\infty}\bigr),
$$
for every choice of $\varepsilon_0,\varepsilon_1,\dots,\varepsilon_T$ such that $B_1T\varepsilon_0+B_2\sum_{t=1}^T\varepsilon_t\ge\varepsilon$. Taking the infimum over such choices gives the claimed bound.
\end{proof}


\subsection{Main Results: Complete Theorems and Corollaries}
\label{sec:app:theory:main}

In this section, we integrate the tools and results developed earlier and provide a complete theoretical generalization analysis for the mixed global–local strategy implemented in Algorithm \ref{alg:ppfe}. We employ a fixed network depth $L$ for all architectures used over the $T$ boosting iterations. For any $T<L$, the boosting mechanism proceeds for $T$ rounds, and at each iteration the number of personalized layers is increased by one, beginning from $t=0$, which corresponds to the fully shared model (FedAvg). At the same time, we allow the layer widths to vary across iterations. In the most general formulation, $N^{(t)}_{l}$ for $l=0,1,\ldots,L$ denotes the width of the network at layer $l$ during iteration $t$. No structural restrictions are imposed on these widths, and the subsequent analysis holds uniformly for arbitrary choices.

Under this formulation, the Empirical Risk Minimization (ERM) step for training the personalized components $\phi^{(t)}_k$ and the shared component $\psi^{(t)}$ at iteration $t$ takes the form
$$
\widehat{\phi}^{(t)}_{1:K},\widehat{\psi}^{(t)}
\gets
\min_{\psi\in\Psi^{(t)}}
\frac{1}{K}\sum_{k=1}^{K}
\min_{\phi_k\in\Phi^{(t)}}
\left\{
\frac{1}{n_k}\sum_{i=1}^{n_k}
\omega^{(t)}_{k,i}
\ell\left(y^{(k)}_i,g_{\phi_k}(f_{\psi}(\boldX^{(k)}_i))\right)
\right\},
$$
where $\widehat{\psi}^{(t)}$ denotes the learned shared parameters at iteration $t$, and $\widehat{\phi}^{(t)}_k$ denotes the learned personalized parameters for client $k$. All these parameter sets depend on the sampled local datasets. Determination of the local boosting weights $\widehat{\beta}^{(t)}_k$ is subsequently carried out on each client following the procedure described in Algorithm \ref{alg:ppfe}. Our objective is to provide a high probability upper bound on the generalization gap,
\begin{align}
\label{eq:app:theory:genGapDef}
\mathsf{Gen}(T,n_{1:K},K)
\triangleq~&
\widehat{\mathcal{L}}_{\mathrm{tot}}\left(
\widehat{\psi}^{(1:T)},\widehat{\phi}^{(1:T)}_{1:K},
\widehat{\beta}^{(1:T)}_{1:K}
\right)
\\
&-
\inf_{\psi^{(1:T)},\phi^{(1:T)}_{1:K},\beta^{(1:T)}_{1:K}}
~\frac{1}{K}\sum_{k=1}^{K}
\mathbb{E}_{P_k}\left[
\ell\!\left(y,
\sum_{t=1}^{T}\beta^{(t)}_k
g_{\phi^{(t)}_k}\!\left(
f_{\psi^{(t)}}\!(\boldX)
\right)
\right)
\right],
\nonumber
\end{align}
where $P_k$ denotes the local true (statistical) distribution of the $k$th client which is assumed to be unknown. Here, the empirical total loss is defined by
$$
\widehat{\mathcal{L}}_{\mathrm{tot}}\left(
{\psi}^{(1:T)},{\phi}^{(1:T)}_{1:K},
{\beta}^{(1:T)}_{1:K}
\right)
\triangleq
\frac{1}{K}\sum_{k=1}^{K}
\left\{
\frac{1}{n_k}\sum_{i=1}^{n_k}
\ell\left(y^{(k)}_i,
\sum_{t=1}^{T}\beta^{(t)}_k
g_{\phi^{(t)}_k}\!\left(f_{\psi^{(t)}}(\boldX^{(k)}_i)\right)\right)
\right\}.
$$
The quantity $\mathsf{Gen}(T,n_{1:K},K)$ quantifies the extent to which Algorithm \ref{alg:ppfe} approaches the optimal solution that would be attainable if the learning process were not federated, that is, if all local datasets were centrally available and the underlying distributions $P_k$ were fully known. Furthermore, $\mathsf{Gen}(T,n_{1:K},K)$ is a random variable because it depends on the particular realizations of the $K$ sampled local datasets. Although AdaBoost performs optimization using weighted empirical losses through the weights $\omega^{(t)}_{k,i}$, ultimately it is the unweighted empirical and statistical risks that characterize generalization performance. The following theorem therefore provides a high probability upper bound on $\mathsf{Gen}(T,n_{1:K},K)$.


\begin{theorem}[Generalization Bound of Algorithm \ref{alg:ppfe}]
\label{thm:main}
Consider a federated learning setup with $K\ge2$ heterogeneous and non-IID clients, each holding $n_k$ local samples, and denote by
$
\bar{n}_{\mathrm{harm}}
\triangleq
K(\frac{1}{n_1}+\cdots+\frac{1}{n_K})^{-1}
$
the harmonic mean of the local sample sizes.  
Assume the loss function $\ell(\cdot)$ is Lipschitz and $B$-bounded for some $B\ge0$, and that all neural networks have uniformly bounded weights according to the statements of Lemmas \ref{lemma:DNNCoveringBound} and \ref{lemma:TBoostingCovNum}. Suppose Algorithm~\ref{alg:ppfe} is run for $T\in\mathbb{N}$ iterations, where in iteration $t\in[T]$ the algorithm uses a fully-connected feed-forward architecture of fixed depth $L\ge T$. Let $N^{(t)}_l$ denote the width of the network used at iteration $t$ in layer $l\in\{0,1,\ldots,L\}$. Then the generalization gap of Algorithm~\ref{alg:ppfe}, denoted by $\mathsf{Gen}(T,n_{1:K},K)$ and defined in \eqref{eq:app:theory:genGapDef}, satisfies
\begin{align}
&\mathsf{Gen}(T,n_{1:K},K)
\leq~
B\sqrt{\frac{\log\frac{2}{\zeta}}{2K\bar{n}_{\mathrm{harm}}}}
~+
\\
&B\sqrt{\frac{\log(K\bar{n}_{\mathrm{harm}})}{\bar{n}_{\mathrm{harm}}}}
\widetilde{\mathcal{O}}
\left[
T+
\sum_{t=1}^{T}
\left(
\frac{1}{K}
\sum_{l=1}^{L-t}
\!\left(
N^{(t)}_lN^{(t)}_{l-1}+N^{(t)}_l
\right)
+
\sum_{l=L-t+1}^{L}\!\left(
N^{(t)}_lN^{(t)}_{l-1}+N^{(t)}_l
\right)
\right)
\right]^{1/2},
\nonumber
\end{align}
with probability at least $1-\zeta$ for any $\zeta\in(0,1)$.
\end{theorem}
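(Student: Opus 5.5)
The approach is to reduce the generalization gap to a single uniform deviation between the pooled empirical and population risks. That deviation is taken over the joint parameter space $\{\psi^{(1:T)},\phi^{(1:T)}_{1:K},\beta^{(1:T)}_{1:K}\}$, and it is bounded via a covering-number argument applied to the \emph{pooled} sample of size $\sum_k n_k$.

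\textbf{Step 1: reduce the gap to a uniform deviation.} I write $\mathsf{Gen}$ as the empirical risk of the learned ensemble minus the infimum of the population objective. Since the algorithm's output is an empirical (approximate) minimizer of $\widehat{\mathcal L}_{\mathrm{tot}}$ over the same parameter space, the gap is at most the uniform deviation
\[
\sup\Big|\widehat{\mathcal L}_{\mathrm{tot}}-\tfrac1K\textstyle\sum_k\mathbb E_{P_k}[\ell(\cdot)]\Big|.
\]
Here I take an $\eta$-minimizer of the population objective and compare it with its empirical risk; optimization error and reweighting are treated as being absorbed, as the statement implicitly does.

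\textbf{Step 2: view the pooled loss as one function on $K$ independent samples.} I regard $\widehat{\mathcal L}_{\mathrm{tot}}$ as an average of the independent, non-identical terms $\frac{1}{Kn_k}\ell(y^{(k)}_i,h_k(\boldX^{(k)}_i))$. Each term is bounded by $B/(Kn_k)$. Two consequences follow:
\begin{itemize}
\item McDiarmid's inequality gives the deviation term $B\sqrt{\log(2/\zeta)/(2K\bar n_{\mathrm{harm}})}$, because $\sum_k n_k\,(B/(Kn_k))^2 = B^2/(K\bar n_{\mathrm{harm}})$.
\item A symmetrization/chaining argument, in the spirit of Lemma~\ref{lemm:RadUniformBound} but for this weighted, non-identically distributed average, bounds the expected supremum. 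It gives
\[
\inf_\varepsilon\Big\{\varepsilon+B\sqrt{\log\mathcal N(\mathcal H_{\mathrm{joint}},\varepsilon)/(K\bar n_{\mathrm{harm}})}\Big\},
\]
where $\mathcal H_{\mathrm{joint}}$ is the class of $K$-tuples of client ensembles, covered in the max-over-clients sup norm.
\end{itemize}

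\textbf{Step 3: bound the covering number of $\mathcal H_{\mathrm{joint}}$.} A $K$-tuple is determined by
\begin{itemize}
\item one shared $\psi^{(t)}$ per stage, which is common to all clients and is covered once via Lemma~\ref{lemma:DNNCoveringBound}, at cost $D'_t(\log(1/\varepsilon)+\widetilde{\mathcal O}(L))$;
\item $K$ heads $\phi^{(t)}_k$ per stage, each covered independently, at total cost $K D_t(\log(1/\varepsilon)+\widetilde{\mathcal O}(L))$;
\item $KT$ coefficients $\beta$, which are gridded as in Lemma~\ref{lemma:TBoostingCovNum}, at cost $KT\log(\cdot/\varepsilon)$.
\end{itemize}
Lemma~\ref{lemma:compositionCovering} glues $g_\phi\circ f_\psi$ together with $\varepsilon$ split between the two parts. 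This uses the Lipschitz constant of $g_\phi$ in its input, which is bounded by the product of $\theta N_{l-1}B$ factors, and that constant only enters logarithmically. Lemma~\ref{lemma:TBoostingCovNum} then sums over $t$, and the Lipschitz loss transfers the cover to the loss class. Altogether,
\[
\log\mathcal N\lesssim\Big(KT+\sum_t(D'_t+KD_t)\Big)\big(\log(1/\varepsilon)+\widetilde{\mathcal O}(L)\big).
\]

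\textbf{Step 4: combine.} Choosing $\varepsilon=1/(K\bar n_{\mathrm{harm}})$ makes $\log(1/\varepsilon)=\log(K\bar n_{\mathrm{harm}})$. Dividing the covering bound by $K\bar n_{\mathrm{harm}}$ yields
\[
\frac{\log(K\bar n_{\mathrm{harm}})}{\bar n_{\mathrm{harm}}}\Big[T+\sum_t\big(D'_t/K+D_t\big)\Big],
\]
which is exactly the bracket in the statement once $D_t,D'_t$ are identified with the stated width sums.

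\textbf{Main obstacle.} The delicate step is Step 2. Lemma~\ref{lemm:RadUniformBound} is stated for i.i.d.\ samples with an unweighted average, so I must justify the weighted, non-identically distributed version whose effective sample size is $K\bar n_{\mathrm{harm}}$. My plan is to redo its proof directly: first a finite-class union bound using Hoeffding's inequality for independent bounded terms with ranges $B/(Kn_k)$, then the standard $\varepsilon$-net discretization. That gives
\[
\sqrt{\log\mathcal N/(K\bar n_{\mathrm{harm}})}
\]
up to constants, which is enough here.
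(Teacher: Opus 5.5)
Your proposal is correct and follows essentially the same route as the paper. Both arguments reduce $\mathsf{Gen}$ to a uniform deviation over the joint parameter space, cover that space by covering each shared $\psi^{(t)}$ once, each head $\phi^{(t)}_k$ separately and the $KT$ coefficients $\beta^{(t)}_k$ on a grid (Lemmas~\ref{lemma:DNNCoveringBound}--\ref{lemma:TBoostingCovNum}), control each cover element by McDiarmid's inequality with effective sample size $K\bar{n}_{\mathrm{harm}}$, take a union bound, and take $\varepsilon$ of order $1/(K\bar{n}_{\mathrm{harm}})$; the paper optimizes $\varepsilon$ through Lemma~\ref{lemma:aepsilon}, which gives the same order. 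Two of your caveats are correct refinements of points the paper passes over: the first reduction needs the algorithm's output to be an (approximate) minimizer of $\widehat{\mathcal{L}}_{\mathrm{tot}}$, and the composition step uses the input-Lipschitz constant of the head $g_\phi$, not of $f_\psi$ as the paper states.
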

\begin{proof}[Proof of Theorem \ref{thm:main}]
To establish a generalization bound for the entire procedure, we derive a uniform convergence guarantee over the space
$$
\Omega\triangleq
\left(\prod_{k=1}^{K}\mathbb{R}^T\right)
\times
\left(\prod_{t=1}^{T}\Psi^{(t)}\right)
\times
\left(\prod_{t=1}^{T}\prod_{k=1}^{K}\Phi^{(t)}\right).
$$
This means that we must form a sup-norm type upper-bound of the generalization gap introduced in \eqref{eq:app:theory:genGapDef}, which can be expressed through the quantity
$$
\mathsf{Gen}(T,n_{1:K},K)
\stackrel{a.s}{\leq}
\sup_{({\beta}^{(t)}_k,\psi^{(t)},\phi^{(t)}_k)\in\Omega}
~
\left\vert
\widehat{\mathcal{L}}_{\mathrm{tot}}
\left({\beta}^{(t)}_k,\psi^{(t)},\phi^{(t)}_k\right)
-
\mathbb{E}\!\left[
\widehat{\mathcal{L}}_{\mathrm{tot}}
\left({\beta}^{(t)}_k,\psi^{(t)},\phi^{(t)}_k\right)
\right]
\right\vert,
$$
and show that this supremum is upper bounded with high probability with respect to the sampling of all private local datasets.

Assume all the neural networks in $\Psi^{(t)}$s and $\Phi^{(t)}$s are covered with a fixed and identical $\ell_{\infty}$-norm error of $\widetilde{\varepsilon}>0$. Then, based on the results of Lemmas \ref{lemma:compositionCovering} and \ref{lemma:TBoostingCovNum}, and due to the Lipschitz assumption on the loss $\ell(\cdot)$, the overall function space will be covered via an $\ell_{\infty}$-norm of
$$
\varepsilon\triangleq
\rho(1+\gamma)(TB_1+TB_2)
\widetilde{\varepsilon},
$$
where $\rho$ and $\gamma$ are the Lipschitz constants of $\ell$ and $f_{\psi}$, respectively, and $B_1$ and $B_2$ are the parameters bounds defined in Lemma \ref{lemma:TBoostingCovNum}. Then, adding the result of Lemma \ref{lemma:DNNCoveringBound} to the previous results, we have
\begin{align}
&\log\mathcal{N}(\Omega,\varepsilon,\Vert\cdot\Vert_{\infty})
\\
&\leq
\sum_{k=1}^{K}T\log\left(1+\tfrac{2B_1}{\widetilde{\varepsilon}}\right)
+
\sum_{t=1}^{T}
\left(
\log\mathcal{N}(\Psi^{(t)},\widetilde{\varepsilon},\Vert\cdot\Vert_{\infty})
+\sum_{k=1}^{K}
\log\mathcal{N}(\Phi^{(t)},\widetilde{\varepsilon},\Vert\cdot\Vert_{\infty})
\right)
\nonumber\\
&\leq
K\!\left(T
+
\sum_{t=1}^{T}\left[
\frac{1}{K}
\sum_{l=1}^{L-t}\left(N^{(t)}_lN^{(t)}_{l-1}+N^{(t)}_l\right)
+
\sum_{l=L-t+1}^{L}\!\left(N^{(t)}_lN^{(t)}_{l-1}+N^{(t)}_l\right)
\right]
\right)
\left(
\log\left(\tfrac{1}{\varepsilon}\right)
+\widetilde{\mathcal{O}}(L)\right)+\widetilde{\mathcal{O}}(1),
\nonumber
\end{align}
where all the dependencies on Lipschitz constants and parameter/weight upper-bounds are logarithmic and thus hidden by $\widetilde{\mathcal{O}}(\cdot)$ in the final term. Note that the last term does not depend on $\varepsilon$.

\begin{lemma}[Pointwise Concentration Bound]
\label{lemma:mcdiarmidBound}
Fix $\psi^{(t)}$s, $\phi^{(t)}_{1:K}$s, and $\beta^{(t)}_k$s for all $t$ and $k$, and assume the loss is bounded as $\lvert \ell(\cdot)\rvert \le B$, where $B\ge 0$ is an arbitrary constant. Then for any $\zeta\in(0,1)$, the empirical weighted loss
$$
\widehat{\mathcal{L}}
=\frac{1}{K}\sum_{k=1}^{K}
\left\{
\frac{1}{n_k}\sum_{i=1}^{n_k}
\ell\left(y^{(k)}_{i}, 
\sum_{t=1}^{T}
\beta^{(t)}_k
g_{\phi^{(t)}_k}
\left(f_{\psi^{(t)}}(\boldX^{(k)}_{i})\right)\right)
\right\}
$$
satisfies
$$
\mathbb{P}\left(
\left|
\widehat{\mathcal{L}}-\mathbb{E}\bigl[\widehat{\mathcal{L}}\bigr]
\right|
\leq
B
\sqrt{\frac{\log\frac{2}{\zeta}}{2K\bar{n}_{\mathrm{harm}}}}
\right)
\ge 1-\zeta,
$$
where $\bar{n}_{\mathrm{harm}}$ is the harmonic mean of local sample sizes $n_{1:K}$, i.e., $K(1/n_1 + \ldots+1/n_K)^{-1}$.
\end{lemma}
\begin{proof}
Fix any parameters $\psi^{(t)}$, $\phi^{(t)}_{1:K}$ and weights $\beta^{(t)}$. For any $k$ and $i$, consider replacing a single sample $(\boldX^{(k)}_i,y^{(k)}_i)$ in client $k$’s dataset. Its contribution to $\widehat{\mathcal{L}}$ is
$$
\frac{1}{K}\cdot \frac{1}{n_k}\cdot
\ell\left(y^{(k)}_i, 
\sum_{t=1}^{T}\beta^{(t)}_k
g_{\phi^{(t)}_k}(f_{\psi^{(t)}}(\boldX^{(k)}_i))\right),
$$
which changes by at most $\frac{B}{K n_k}$. Thus $\widehat{\mathcal{L}}$ has bounded differences with constants
$
c_{k,i} \le \frac{B}{K n_k}.
$
There are $n_1+\cdots+n_K$ independent (but not identically distributed) samples across the $K$ clients. Therefore, McDiarmid’s inequality yields
$$
\mathbb{P}\left(
\left|
\widehat{\mathcal{L}}-\mathbb{E}\bigl[\widehat{\mathcal{L}}\bigr]
\right|
\ge \varepsilon
\right)
\le
2\exp\left(
-2\varepsilon^2\left[
\sum_{k=1}^K\sum_{i=1}^{n_k} c_{k,i}^2
\right]^{-1}
\right).
$$
Substituting the bound on $c_{k,i}$ gives
$$
\sum_{k=1}^K\sum_{i=1}^{n_k} c_{k,i}^2
\le
\sum_{k=1}^K n_k \left(\frac{B}{K n_k}\right)^2
=
\frac{B^2}{K^2}
\sum_{k=1}^K\frac{1}{n_k}
=
\frac{B^2}{K\bar{n}_{\mathrm{harm}}},
$$
where $\bar{n}_{\mathrm{harm}}$ is the \emph{harmonic~average} of $n_1,\ldots,n_K$, i.e., $K(n^{-1}_1+\ldots+n^{-1}_K)^{-1}$. Hence,
$$
\mathbb{P}\left(
\left|
\widehat{\mathcal{L}}-\mathbb{E}\bigl[\widehat{\mathcal{L}}\bigr]
\right|
\ge \varepsilon
\right)
\le
2\exp\left(
\frac{-2K\bar{n}_{\mathrm{harm}}\varepsilon^2}{B^2}
\right).
$$
Setting the RHS equal to $\zeta$ and solving for $\varepsilon$ gives
$$
\varepsilon
\leq
\frac{B}{\sqrt{2K\bar{n}_{\mathrm{harm}}}}
\sqrt{\log\frac{2}{\zeta}},
$$
with probability at least $1-\zeta$ (for any $\zeta\in(0,1)$), completing the proof.
\end{proof}

We now use the result of Lemma \ref{lemma:mcdiarmidBound}, and keep in mind that we have found an $\varepsilon$-cover (in $\ell_{\infty}$-norm) for the total function set $\Omega$. Then, using union bound, we have
\begin{align}
\mathbb{P}\left(
\mathsf{Gen}(T,n_{1:K},K)
\leq
\varepsilon +
B\sqrt{\frac{\log\frac{2}{\zeta}}{2K\bar{n}_{\mathrm{harm}}}}
\right)
\ge 1-\mathcal{N}(\Omega,\varepsilon,\Vert\cdot\Vert_{\infty})\zeta,
\end{align}
for all $\varepsilon>0$ and $\zeta\in(0,1)$. Setting $\zeta\gets\zeta/\mathcal{N}(\Omega,\varepsilon,\Vert\cdot\Vert_{\infty})$, and taking infimum over $\varepsilon>0$, we have
\begin{align}
\mathsf{Gen}(T,n_{1:K},K)
&\leq
\inf_{\varepsilon>0}\left\{
\varepsilon +
B\sqrt{\frac{\log\mathcal{N}(\Omega,\varepsilon,\Vert\cdot\Vert_{\infty}) +\log\frac{2}{\zeta}}{2K\bar{n}_{\mathrm{harm}}}}
\right\}
\nonumber\\
&\leq
\inf_{\varepsilon>0}\left\{
\varepsilon 
+
B\sqrt{\frac{\log\mathcal{N}(\Omega,\varepsilon,\Vert\cdot\Vert_{\infty})}{2K\bar{n}_{\mathrm{harm}}}}
\right\}
+
B\sqrt{\frac{\log\frac{2}{\zeta}}{2K\bar{n}_{\mathrm{harm}}}}
\end{align}
with probability at least $1-\zeta$. It should be noted that for as $\varepsilon$ goes to zero (with increasing $K$ and $\bar{n}$), the terms $\widetilde{\mathcal{O}}(L)$ and $\widetilde{\mathcal{O}}(1)$ become asymptotically negligible, and thus does not have a role in final bound. Therefore, we have
$$
\mathsf{Gen}(T,n_{1:K},K)
\leq
\mathcal{O}\left(
\inf_{\varepsilon>0}\left\{
\varepsilon 
+
B\sqrt{\frac{D\log\tfrac{1}{\varepsilon}}{2K\bar{n}_{\mathrm{harm}}}}
\right\}
\right)
+
B\sqrt{\frac{\log\frac{2}{\zeta}}{2K\bar{n}_{\mathrm{harm}}}},
$$
where
$$
D\triangleq 
KT
+
\sum_{t=1}^{T}\left[
\sum_{l=1}^{L-t}\left(N^{(t)}_lN^{(t)}_{l-1}+N^{(t)}_l\right)
+
K\sum_{l=L-t+1}^{L}\left(N^{(t)}_lN^{(t)}_{l-1}+N^{(t)}_l\right)
\right].
$$

\begin{lemma}
\label{lemma:aepsilon}
Let $a,b>0$, and for $\varepsilon \in (0,1]$ define 
$
F(\varepsilon) \triangleq a \varepsilon + b \sqrt{\log \frac{1}{\varepsilon}}.
$
Then the global minimizer $\varepsilon^\star$ and minimal value $\inf_{\varepsilon\ge0} F(\varepsilon)$ are characterized as follows:
$$
\varepsilon^\star =
\begin{cases}
1, & b > a \sqrt{2/e}, \\
e^{-1/2}, & b = a \sqrt{2/e},\\
e^{-L^\star}, & 0 < b < a \sqrt{2/e}, \;\text{where}~ L^\star > 1/2 \text{ solves } \frac{e^{L^\star}}{\sqrt{L^\star}} = \frac{2a}{b}.
\end{cases}
$$
The minimal value is $F(\varepsilon^\star) = a e^{-L^\star} + b \sqrt{L^\star}$, with the last expression interpreted in the obvious sense for each case. Moreover, in the regime $b \ll a$, the minimizer admits the expansion
$$
\varepsilon^\star = \frac{b}{2a} \left( \log \frac{2a}{b} \right)^{-1/2}+o\left(\frac{b}{a}\right), \qquad 
\inf_{\varepsilon\ge0} F(\varepsilon) = b \sqrt{\log \frac{2a}{b}} + \frac{b}{2} \left( \log \frac{2a}{b} \right)^{-1/2}+o(b).
$$
\end{lemma}
\begin{proof}
We first observe that $F(\varepsilon)$ is continuous on $(0,1]$ and $\lim_{\varepsilon \to 0^+} F(\varepsilon) = +\infty$, $F(1) = a$. Hence $F$ attains a global minimum on $(0,1]$. Introduce the change of variable $L \triangleq \log(1/\varepsilon)$, $\varepsilon = e^{-L}$, so that $L \in [0,\infty)$, and define
$$
G(L) \triangleq F(e^{-L}) = a e^{-L} + b \sqrt{L}.
$$
Then minimizing $F$ over $\varepsilon$ is equivalent to minimizing $G$ over $L$. Compute the derivative of $G$ for $L>0$:
$$
G'(L) = -a e^{-L} + \frac{b}{2\sqrt{L}}\quad,\quad 
G''(L) = a e^{-L} - \frac{b}{4 L^{3/2}}.
$$
Critical points satisfy
\begin{align}
\label{eq:lemma:aepsilon_eq1}
G'(L) = 0 \quad \Longleftrightarrow \quad \frac{e^L}{\sqrt{L}} = \frac{2a}{b}.
\end{align}
Define $\Phi(L) \triangleq e^L / \sqrt{L}$, $L>0$. Then $\Phi'(L) = \frac{e^L}{\sqrt{L}}(1 - 1/(2L))$, so $\Phi$ has a global minimum at $L = 1/2$, with value $\Phi(1/2) = \sqrt{2e}$. Therefore, equation \eqref{eq:lemma:aepsilon_eq1} has: i) No solution if $2a/b < \sqrt{2e}$, equivalently $b > a \sqrt{2/e}$; ii) Exactly one solution $L = 1/2$ if $2a/b = \sqrt{2e}$, equivalently $b = a \sqrt{2/e}$; iii) Two positive solutions $0 < L_1 < 1/2 < L_2$ if $0 < b < a \sqrt{2/e}$. Next, consider the second derivative at a critical point $L$:
$$
G''(L) = \frac{b}{4 L^{3/2}} (2L - 1).
$$
Hence $L_2 > 1/2$ corresponds to a local minimum, while $L_1 < 1/2$ corresponds to a local maximum. Since $G(L) \to +\infty$ as $L \to \infty$ and $G$ is continuous, the global minimum occurs at $L^\star = L_2$. The borderline case $b = a \sqrt{2/e}$ corresponds to the unique critical point $L^\star = 1/2$. If $b > a \sqrt{2/e}$, no critical point exists and the minimizer is at the endpoint $L = 0$ ($\varepsilon = 1$). Thus, the global minimizer and minimal value are precisely
$$
\varepsilon^\star =
\begin{cases}
1, & b > a \sqrt{2/e}, \\
e^{-1/2}, & b = a \sqrt{2/e}, \\
e^{-L^\star}, & 0 < b < a \sqrt{2/e},\; L^\star > 1/2 \text{ solves \eqref{eq:lemma:aepsilon_eq1}},
\end{cases}
\qquad
F(\varepsilon^\star) = a e^{-L^\star} + b \sqrt{L^\star}.
$$

\noindent
\textbf{Asymptotic expansion for $b \ll a$.} Let $A \triangleq \log(2a/b) \to \infty$ as $b \to 0$. Equation \eqref{eq:lemma:aepsilon_eq1} can be rewritten as
$$
L^\star = \log \frac{2a}{b} + \frac12 \log L^\star = A + \frac12 \log L^\star.
$$
Assuming $L^\star = A + \delta$, substituting yields $\delta = \frac12 \log(A + \delta)$. Since $A \to \infty$, the solution admits the expansion
$
L^\star = A + \frac12 \log A + o(1).
$
Therefore
$$
\varepsilon^\star = e^{-L^\star} = \frac{b}{2a} (\log (2a/b))^{-1/2}+o(b/a).
$$
The minimal value is
$$
F(\varepsilon^\star) = a e^{-L^\star} + b \sqrt{L^\star} 
= b \sqrt{\log \frac{2a}{b}} + \frac{b}{2} \left(\log \frac{2a}{b}\right)^{-1/2}+o(b).
$$
This concludes the proof.
\end{proof}
Using Lemma \ref{lemma:aepsilon}, we have
\begin{align}
\mathbb{P}\left(
\mathsf{Gen}(T,n_{1:K},K)
\leq
B\sqrt{D\frac{\log(K\bar{n}_{\mathrm{harm}})}{2K\bar{n}_{\mathrm{harm}}}}
+
B\sqrt{\frac{\log\frac{2}{\zeta}}{2K\bar{n}_{\mathrm{harm}}}}
+
c\cdot(K\bar{n}_{\mathrm{harm}})^{-1/2}
\right)\ge 1-\zeta,
\nonumber
\end{align}
for some universal constant $c>0$ and all $\zeta\in(0,1)$. This completes the proof.
\end{proof}


\begin{corollary}[Special Neural Network Architectures]
\label{corl:specialSetting}
Consider the setting of Theorem~\ref{thm:main} for neural networks of depth $L>T$. Assume all clients have equal local sample sizes $n$, and that the width of the common layers of the DNNs is fixed to a base width $W_b \ge 1$ for all iterations $t \in [T]$. In contrast, the widths of the personalized layers are assumed to gradually decrease across iterations according to $W_b t^{-\alpha}$ for some user-defined $\alpha \ge 0$. Then the generalization gap of PPFE satisfies
\begin{align}
\mathsf{Gen}(T,n_{1:K},K)
~\le~
(W_b+1)
\widetilde{\mathcal{O}}\!\left(
\left(\frac{LT}{Kn}\right)^{1/2}
+
\left(\frac{\log(1/\zeta)}{Kn}\right)^{1/2}
+
\frac{T^{1-\alpha}}{n^{1/2}}
\right)
+
\widetilde{\mathcal{O}}\left(
\frac{T^{1/2}}{n^{1/2}}
\right),
\nonumber
\end{align}
with probability at least $1-\zeta$, for any $\zeta \in (0,1)$. The hidden constants depend at worst polylogarithmically on the Lipschitz constants and weight bounds specified in Theorem~\ref{thm:main}.
\end{corollary}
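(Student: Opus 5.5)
The plan is to specialize Theorem~\ref{thm:main} directly. With equal sample sizes, $\bar n_{\mathrm{harm}}=n$, so the confidence term becomes $B\sqrt{\log(2/\zeta)/(2Kn)}$. This already has the form $\widetilde{\mathcal{O}}((\log(1/\zeta)/(Kn))^{1/2})$. All remaining work is bounding the bracketed parameter count
\[
T+\sum_{t=1}^{T}\Bigl(\tfrac{1}{K}S_t+P_t\Bigr),
\]
where $S_t=\sum_{l=1}^{L-t}(N^{(t)}_lN^{(t)}_{l-1}+N^{(t)}_l)$ and $P_t=\sum_{l=L-t+1}^{L}(N^{(t)}_lN^{(t)}_{l-1}+N^{(t)}_l)$. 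The bound comes from the prefactor $\sqrt{\log(Kn)/n}$ times the square root of this count. Afterwards we use $\sqrt{x+y+z}\le\sqrt x+\sqrt y+\sqrt z$ to split it into three separate terms.

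\textbf{Shared part.} All shared widths equal $W_b$, apart from the input layer $N_0=d$, which is absorbed into constants. Hence each shared layer contributes at most $W_b^2+W_b\le (W_b+1)^2$ parameters, and $S_t\le L(W_b+1)^2$. Summing over $t$ gives $\frac1K\sum_t S_t\le LT(W_b+1)^2/K$. After taking the square root and multiplying by $\sqrt{\log(Kn)/n}$, this yields $(W_b+1)\widetilde{\mathcal{O}}((LT/(Kn))^{1/2})$.

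\textbf{Boosting part.} The term $T$ directly gives $\widetilde{\mathcal{O}}(T^{1/2}/n^{1/2})$. This accounts for the final additive term of the corollary.

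\textbf{Personal part.} This is the step needing care, since it produces the $T^{1-\alpha}$ rate. At stage $t$ there are $t-1\le t$ personalized layers, each of width at most $W_bt^{-\alpha}$. The boundary layer $L-t$ is shared and has width $W_b$, and the output has width $1$. An interior personalized layer therefore has about $W_b^2t^{-2\alpha}+W_bt^{-\alpha}$ parameters. The first personalized layer, which connects the width-$W_b$ shared layer, has about $W_b^2t^{-\alpha}$ parameters.

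Hence $P_t\lesssim (W_b+1)^2\,t\cdot t^{-\alpha}$. I would use this cruder per-stage bound, counting $t$ layers each of size $(W_b+1)^2t^{-\alpha}$, rather than tracking $t^{-2\alpha}$ separately. It then remains to bound $\sum_{t\le T}t^{1-\alpha}\lesssim T^{2-\alpha}$ for $\alpha<2$. For larger $\alpha$ the sum is at most $T$ up to constants, so it is absorbed by the boosting term.

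The square root gives $(W_b+1)T^{1-\alpha/2}$, which does not match the stated $T^{1-\alpha}$. This is the main obstacle. To close the gap I would first try the interior-layer count $t\cdot t^{-2\alpha}$, which gives $\sum_t t^{1-2\alpha}\lesssim T^{2-2\alpha}$ and square root $T^{1-\alpha}$. The first-layer cross term $W_b^2t^{-\alpha}$ sums to $T^{1-\alpha}$, whose square root $T^{(1-\alpha)/2}$ is dominated by $T^{1-\alpha}$ whenever $\alpha\le1$. For $\alpha>1/2$ the sum saturates, and $T^{1/2}$ dominates through the boosting term. This is consistent with the informal $T^{1-\min(\alpha,1/2)}$. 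I would handle the regimes $\alpha<1$, $\alpha=1$ (which introduces a $\log T$, hidden in $\widetilde{\mathcal{O}}$) and $\alpha>1$ separately.

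Finally, the hidden logarithmic factors from $\log(Kn)$ and from the constants of Lemma~\ref{lemma:DNNCoveringBound} are collected into $\widetilde{\mathcal{O}}$. The depth dependence of those constants is only logarithmic, via $\widetilde{\mathcal{O}}(L)$ inside the log-covering bound. Combining the three pieces with the confidence term gives the claim.
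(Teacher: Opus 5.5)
Your route is the paper's: set $\bar n_{\mathrm{harm}}=n$, bound the shared count by $LT(W_b+1)^2$, keep the $T$ term for the $T^{1/2}/n^{1/2}$ contribution, and split the square root. However, the step you flag as the main obstacle is not actually closed. When you switch to the interior-layer count $t\cdot t^{-2\alpha}$, you keep only the weight part of each personalized layer, although you had just written that such a layer carries $W_b^2t^{-2\alpha}+W_bt^{-\alpha}$ parameters. The bias part $W_bt^{-\alpha}$, summed over the roughly $t$ personalized layers and over $t\le T$, is of order $W_bT^{2-\alpha}$. Its square root $\sqrt{W_b}\,T^{1-\alpha/2}$ is precisely the rate you were trying to remove, and it is not dominated by $(W_b+1)T^{1-\alpha}$ unless $T^{\alpha}\lesssim W_b$.

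The missing ingredient is the one the paper uses when it discards this term ``since we usually have $W_bT^{-\alpha}\gg1$''. Each personalized layer keeps at least one neuron, so $W_bt^{-\alpha}\ge1$ for all $t\le T$. Hence $W_bt^{-\alpha}\le W_b^2t^{-2\alpha}$, and $P_t\lesssim W_b^2t^{1-2\alpha}+W_b^2t^{-\alpha}+1$. With that in place your argument goes through for $\alpha<1$. Your separate treatment of the first personalized layer (fed by the width-$W_b$ shared layer, contributing $W_bT^{(1-\alpha)/2}$ after the square root) is more careful than the paper's, whose per-stage count $t(W_b^2t^{-2\alpha}+W_bt^{-\alpha})$ overlooks that layer.

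Your discussion of large $\alpha$ also needs correcting. The sum $\sum_{t\le T}t^{1-2\alpha}$ does not saturate once $\alpha>1/2$; it grows like $T^{2-2\alpha}$ for every $\alpha<1$. What changes at $\alpha=1/2$ is only that $T^{1-\alpha}$ falls below $T^{1/2}$, which is what the informal exponent $1-\min(\alpha,1/2)$ records.

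For $\alpha>1$ the personalized count does converge to a $T$-independent constant, but that constant is still of order $W_b^2$: the earliest stage with a personalized hidden layer already contributes that much. This gives a term of order $W_b\sqrt{\log(Kn)/n}$ in the bound. It is absorbed neither by $\widetilde{\mathcal{O}}(T^{1/2}/n^{1/2})$, whose hidden constants carry no $W_b$, nor by $(W_b+1)T^{1-\alpha}/n^{1/2}$, which decays in $T$. The same objection applies to your remark that for large $\alpha$ the crude sum ``is at most $T$ up to constants'', since that constant is $(W_b+1)^2$.

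So for $\alpha>1$ the displayed bound does not follow from Theorem~\ref{thm:main}, even under $W_bT^{-\alpha}\ge1$. The paper's proof has the same blind spot, since its bound $\int_0^Tt^{1-2\alpha}\,dt$ is infinite for $\alpha\ge1$. You should restrict the formal claim to $\alpha<1$, with $\alpha=1$ costing a $\log T$. For $\alpha>1$, settle for the informal form with $W_b$ in the constant.
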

\begin{proof}[Proof of Corollary \ref{corl:specialSetting}]
Proof is straightforward. Take the generalization bound of Theorem \ref{thm:main}. First, we have
$$
\bar{n}_{\mathrm{harm}}=K(1/n+\ldots+1/n)^{-1}=n.
$$
Next, due to the assumptions on neural architecture settings, we have
\begin{align}
\sum_{t=1}^{T}
\sum_{l=1}^{L-t}
\!\left(
N^{(t)}_lN^{(t)}_{l-1}+N^{(t)}_l
\right)
~&=~
\sum_{t=1}^{T}
(L-t)(W^2_b+W_b)
~\leq~
LT(W_b+1)^2,
\nonumber\\
\mathrm{and}\hspace*{1.1cm}
K\!\sum_{t=1}^{T}\sum_{l=L-t+1}^{L}\!\left(
N^{(t)}_lN^{(t)}_{l-1}+N^{(t)}_l
\right)
~&=~
K\sum_{t=1}^{T}
t\left(W^2_bt^{-2\alpha}+W_bt^{-\alpha}\right)
\nonumber\\
&\leq~
KW_b\int_{0}^{T}
\left(W_bt^{1-2\alpha}
+
t^{1-\alpha}
\right)\mathrm{d}t
\nonumber\\
&=~
KW_b\left[
W_b
\left(\frac{T^{2(1-\alpha)}}{2(1-\alpha)}\right)
+
\frac{T^{2-\alpha}}{2-\alpha}
\right],
\end{align}
where, since we usually have $W_bT^{-\alpha}\gg 1$ (i.e., the personalized layers of the last iteration still have many neurons), we only consider the $W_b({T^{2(1-\alpha)}}/{(2-2\alpha)})$ term in the second inequality. Therefore, the bound in Theorem \ref{thm:main} becomes:
\begin{align}
\mathsf{Gen}(T,n_{1:K},K)
&\leq~
B\sqrt{\frac{\log\frac{2}{\zeta}}{2Kn}}
+
B\sqrt{\frac{\log(Kn)}{Kn}}
\widetilde{\mathcal{O}}
\left[
KT+
LT(W_b+1)^2
+
2KW^2_b\left(\frac{T^{2(1-\alpha)}}{2(1-\alpha)}\right)
\right]^{1/2},
\nonumber\\
&\leq~
(W_b+1)
\widetilde{\mathcal{O}}\!\left(
\left(\frac{LT}{Kn}\right)^{1/2}
+
\left(\frac{\log(1/\zeta)}{Kn}\right)^{1/2}
+
\frac{T^{1-\alpha}}{n^{1/2}}
\right)
+
\widetilde{\mathcal{O}}\left(
\frac{T^{1/2}}{n^{1/2}}
\right),
\end{align}
and the proof is complete.
\end{proof}

\end{document}